\newtheorem{theorem}{Theorem}
\newtheorem{lemma}{Lemma}
\newtheorem*{assumption*}{Assumption}
\newtheorem{remark}{Remark}
\def\eqref#1{equation~\ref{#1}}
\def\1{\bm{1}}
\def\va{{\bm{a}}}
\def\vw{{\bm{w}}}
\def\vx{{\bm{x}}}
\def\vy{{\bm{y}}}
\def\mA{{\bm{A}}}
\def\mI{{\bm{I}}}
\def\mW{{\bm{W}}}
\def\mZ{{\bm{Z}}}
\DeclareMathAlphabet{\mathsfit}{\encodingdefault}{\sfdefault}{m}{sl}
\SetMathAlphabet{\mathsfit}{bold}{\encodingdefault}{\sfdefault}{bx}{n}
\def\namedlabel#1#2{\begingroup
    #2%
    \def\@currentlabel{#2}%
    \phantomsection\label{#1}\endgroup
}
\begin{document}
%

%
\twocolumn[
\aistatstitle{Linear Convergence of Adaptive Stochastic Gradient Descent}
\aistatsauthor{Yuege Xie$^*$  \And Xiaoxia Wu$^{\dagger}$ \And Rachel Ward$^{*\dagger}$}
\aistatsaddress{ $^*$Oden Institute, UT Austin \And $^{\dagger}$Department of Mathematics, UT Austin}
]
\begin{abstract}
   We prove that the norm version of the adaptive stochastic gradient method (AdaGrad-Norm) achieves a linear convergence rate for a subset of either strongly convex functions or non-convex functions that satisfy the Polyak-\L{}ojasiewicz (PL) inequality. The paper introduces the notion of Restricted Uniform Inequality of Gradients (RUIG)---which is a measure of the balanced-ness of the stochastic gradient norms---to depict the landscape of a function.
   RUIG plays a key role in proving the robustness of AdaGrad-Norm to its hyper-parameter tuning in the stochastic setting. 
   On top of RUIG, we develop a two-stage framework to prove the linear convergence of AdaGrad-Norm without knowing the parameters of the objective functions. This framework can likely be extended to other adaptive stepsize algorithms. The numerical experiments validate the theory and suggest future directions for improvement.
\end{abstract}


\section{Introduction}
Consider the optimization problem of 
minimizing the empirical risk: 
$$ \min_{\vx \in \mathbb{R}^d}F(\vx):= \frac{1}{n} \sum_{i=1}^n f_i(\vx)$$
where $f_{i}(\vx) = f(\vx, \mZ_i): \mathbb{R}^{d} \rightarrow \mathbb{R}, i = 1, 2, \ldots$ and $\{\mZ_1,\dots, \mZ_n\}$ are empirical samples drawn uniformly from an unknown underlying distribution $\mathcal{S}$. In this paper, we focus on smooth functions $F(\vx)$ that are either strongly convex, or non-convex with Polyak-\L{}ojasiewicz inequality \citep{lojasiewicz1963propriete, polyak1963gradient}, which are fundamental to a variety of machine learning problems \citep{bottou2004large, bottou2018optimization}. 

Linear convergence results using stochastic gradient descent (SGD) or accelerated SGD \citep{bottou-91a, nash1991numerical, bertsekas1999nonlinear, nesterov2005smooth, haykin2005cognitive, bubeck2015convex} to solve the above problem have been established for this class of functions: SGD with fixed stepsize guarantees linear convergence to global minima \citep{allen2018convergence,zou2018stochastic} or up to a radius around the optimal solution \citep{moulines2011non, Needell2016}; Improved algorithms---like SAG \citep{schmidt2017minimizing}, SVRG \citep{johnson2013accelerating} and SAGA \citep{defazio2014saga}---allow faster linear convergence to the global minimizer. However, since the above convergence requires that fixed stepsizes must meet a certain threshold determined by unknown parameters such as the level of stochastic noise, Lipschitz smoothness constants, and strong convexity parameters, SGD and variance reduced SGD are highly sensitive to stepsize tuning in practice. Thus, seeking an algorithm that is robust to the choice of hyper-parameters is as crucial as designing an algorithm that gives faster convergence. The paper focuses on the robustness of the linear convergence of adaptive stochastic gradient descent to unknown hyperparameters.
  
Adaptive gradient descent methods introduced in \cite{duchi2011adaptive} and \cite{mcmahan2010adaptive} update the stepsize on the fly: They either adapt a vector of per-coefficient stepsizes \citep{kingma2014adam,lafond-vasilache-bottou-2017,Reddi2018OnTC, shah2018minimum,zou2018sufficient,staib2019escaping} or a single stepsize depending on the norm of the gradient \citep{levy2017online,ward2018adagrad, wu2018wngrad}.  
The latter one, AdaGrad-Norm \citep{ward2018adagrad} 
 has the following updates: 
\begin{align*}
    b_{j+1}^2 &= b_j^2 + \|\nabla f_{\xi_j}(\vx_j)\|^2; \\
     \vx_{j+1} & = \vx_{j} -\frac{\eta}{b_{j+1}} \nabla f_{\xi_j}(\vx_j) 
 \end{align*}
where $\xi_j \sim \text{Unif}\{1,2,\dots \}$ such that $\mathds{E}_{
\xi_j}\left[\nabla f_{\xi_j}(\vx)\right]= \nabla F(\vx), \forall \vx.$
AdaGrad-Norm has
been shown to be extremely resilient to the functions' parameters being unknown \citep{levy2017online, levy2018online, ward2018adagrad}.  
In addition to this robustness, AdaGrad-Norm enjoys $\mathcal{O}(1/\sqrt{T})$ convergence rate for smooth non-convex functions under the metric $\min_{j\in[T]} \|\nabla F(\vx_j)\|^2$ \citep{ward2018adagrad,li2018convergence}. 
This asymptotic convergence rate has also been proved for general convex functions \citep{levy2018online}.
A linear convergence rate $\mathcal {O}\left( \exp(-\kappa T) \right)$ \footnote{$\kappa$ is the condition number} is possible for strongly convex smooth functions using variants of AdaGrad-Norm in which the final update uses a harmonic sum of the queried gradients  \citep{levy2017online}. 
Yet, the analysis in \cite{levy2017online} and \cite{levy2018online} requires a priori information: a convex set with a known diameter in which the global minimizer resides. 
The analysis in \cite{ward2018adagrad}  considers the smooth function under an assumption of a bounded stochastic gradient norm that rules out the strongly convex cases, while \cite{li2018convergence} only assumes bounded variance but requires prior knowledge of smoothness. Therefore, obtaining a robust linear convergence guarantee without prior knowledge of a convex set or the smoothness parameters, remains an open question for AdaGrad-Norm with strongly convex objectives. 

In this paper, we establish robust linear convergence guarantees for AdaGrad-Norm for strongly convex functions without requiring knowledge of smoothness or strong convexity parameters, nor the knowledge of a convex set containing the minimizer, and we also extend our analysis to non-convex functions that satisfy the Polyak-\L{}ojasiewicz (PL) inequality.\footnote{Note that our results are for the norm version of AdaGrad (AdaGrad-Norm), which differs from the convergence of the diagonal version of AdaGrad and its variants (with momentum) \citep{balles2018dissecting,bernstein_signum,mukkamala2017variants,pmlr-v80-chen18m}.} Our analysis does not follow the standard analysis---which assumes the bounded variance $\hat{\sigma}$ for $\mathds{E}_{\xi_j}[ \|\nabla f_{\xi_{j}}(\vx) -\nabla F(\vx) \|^2] \leq \hat{\sigma}^2, \forall j, \forall \vx$ in \cite{levy2018online, levy2017online,ward2018adagrad,li2018convergence}---and avoids likely sub-linear convergence results. The set of functions for which we guarantee a robust linear convergence rate using AdaGrad-Norm includes certain classes of neural networks. Among these many applications, one function class of particular interest is the over-parameterized neural network \citep{vaswani2018fast, zhang2016understanding,du2018gradient,zhou2018sgd, bassily2018exponential}. Our contributions are not only significant for the algorithm in its own right, but because of the generality of our two-stage framework for the linear convergence proof, we believe it is easily applicable to other adaptive algorithms such as Adam \citep{kingma2014adam} and AMSGrad \citep{Reddi2018OnTC}.

\textbf{Notations} $\|.\|$ denotes the $\ell_2$-norm. $\mu$ is either the $\mu-$strongly convex parameter in Assumption \ref{A1a} or the $\mu-$PL Inequality parameter in Assumption \ref{A1b}. In the batch setting, $L$ is the smallest Lipschitz constant of $\nabla F(\vx)$; in the stochastic setting, $L \triangleq \sup_iL_i$, where $L_i$ is the Lipschitz constant of $\nabla f_i(\vx)$.  $\mathds{P}_i(\cdot)$ is the probability w.r.t. the $i$-th sample point.

\subsection{Main Contributions}
We propose Restricted Uniform Inequality of Gradients (RUIG) to measure the uniform lower bound of stochastic gradients according to $\|\vx-\vx^*\|$ in a restricted region. On top of RUIG, we show that the evolution of the error can be divided into the following two stages:
\begin{itemize}[leftmargin=*]
\item \textbf{Stage I} If $b_t< \eta\mu\leq\eta L$, $\|\vx_t-\vx^*\|^2$ increases first (but remains smaller than a certain upper bound), and contracts after $b_t\geq \eta \mu$, while $b_t$ continues growing until it exceeds $\eta L$;
\item \textbf{Stage II} $b_t >\eta L$, AdaGrad-Norm converges linearly. $b_t$ increases during the optimization process but it is always bounded by $ b_{\max}$.
\end{itemize}

We illustrate these stages in Figure \ref{fig:demo} with $\eta =1$.
\begin{figure}[htbp]
    \centering
     \includegraphics[width=.47\textwidth]{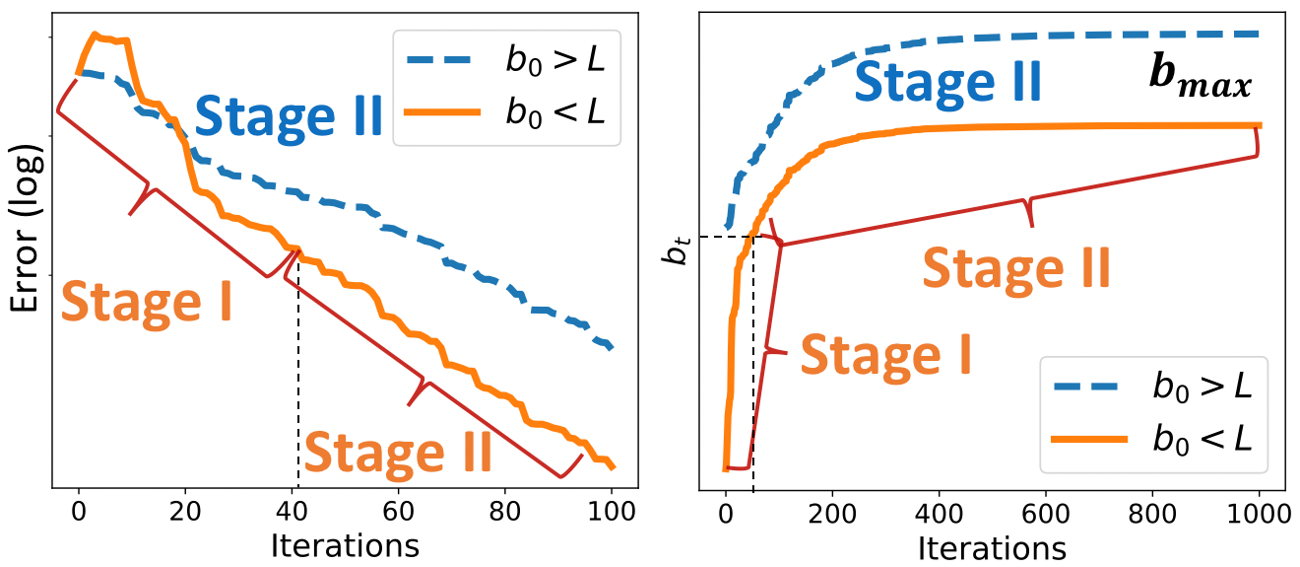}
    \captionof{figure}{Two-Stage Convergence of AdaGrad-Norm with different initial stepsizes: $b_0<L$ versus $b_0>L$. Left: Error $\|\vx_t-\vx^*\|^2$ in logarithmic scale. Right: Growth of $b_t$ to corresponding upper bounds ($\leq b_{\max}$).}
    \label{fig:demo}
\end{figure}


We prove the non-asymptotic linear convergence of AdaGrad-Norm in the strongly convex setting for stochastic and batch updates; furthermore, we also extend our results for non-convex functions satisfying PL inequality. Our main results are as follows (informal):
\begin{enumerate}
    \item In the stochastic setting, Theorem \ref{sgdscalg} shows that AdaGrad-Norm attains $\min_i\|\vx_i-\vx^*\|^2 \leq \epsilon$ with high probability after $ T = \mathcal{O}(\log \frac{1}{\epsilon})$  iterations for $b_0 > \eta L$; and after $T = \mathcal{O}(\frac{1}{\epsilon} + \log \frac{1}{\epsilon})$ iterations for $b_0 \leq \eta L$, assuming that $F(\vx)$ is $\mu-$strong convex, $L-$smooth, almost stationary and with $(\epsilon, \alpha, \gamma)$-RUIG ($\forall \epsilon>0$, for any fixed $\vx \in \mathbb{R}^d$, if $\|\vx-\vx^*\|^2 > \epsilon$, then $\exists (\alpha, \gamma)$ s.t. $\mathds{P}_{\xi_j} (\|\nabla f_{\xi_j}(\vx)\|^2 \geq \alpha \|\vx-\vx^*\|^2) \geq \gamma, \xi_j=1,2,\dots$). 
    
    \item In the batch setting, by using the full gradient, the above probability $\gamma$ degrades to 1 and $\alpha = \mu^2$. Theorem \ref{op2} shows that $\min_i \|\vx_i-\vx^*\|^2 \leq \epsilon$ after $T = \mathcal{O}(\log \frac{1}{\epsilon})$ iterations for $b_0 > \eta \frac{\mu+L}{2}$ and after $T = \mathcal{O}(\frac{1}{\log (1+ C\epsilon)}+ \log \frac{1}{\epsilon})$ iterations for $b_0\leq \eta \frac{\mu+L}{2}$, if $F(x)$ is strongly convex and smooth.
    
    \item For non-convex functions with the PL inequality, we alternatively consider the convergence rate of $\min_i F(\vx_i)-F^*$. Theorem \ref{non-convex} illustrates that $\min_i F(\vx_i)-F^*\leq \epsilon$ after $T=\mathcal{O}(\log\frac{1}{\epsilon})$ iterations for  $b_0 > \eta L$; and after $T = \mathcal{O}(\frac{1}{\log (1+ C\epsilon)}+ \log \frac{1}{\epsilon})$ iterations for $b_0\leq \eta L$.
\end{enumerate}

\begin{table*}[ht]
\centering
\caption{Summary of convergence rates of (stochastic) GD in strongly convex setting}
\label{rates}
\begin{threeparttable}
  \setlength\extrarowheight{4pt}
  \begin{tabular}{clll}
    \toprule
 Setting  & {Algorithm}  &  Initial stepsize  & Iterations to achieve $\|\vx_{best}-\vx^*\|^2\leq \epsilon$ \tnote{1}\\
\midrule
\multirow{3}{*}{Stochastic GD} &  fixed stepsize  & $\eta_0 = \frac{1}{2\sup_i L_i}$  & $\mathcal{O}(\frac{\sup_i L_i}{\mu}\log\frac{\Delta_0}{\epsilon})$ \citep{Needell2016}\\
    &\textbf{AdaGrad-Norm\tnote{2}} & $\eta_0 = \frac{1}{b_0} < \frac{1}{\sup_i L_i}$ & $\mathcal{O}(\frac{\sup_i L\Delta_0}{\mu}\log \frac{\Delta_0}{\epsilon}) $\\
    &\textbf{AdaGrad-Norm} &  arbitrary  & $\mathcal{O}(\frac{1}{\epsilon} + \frac{\sup_i L_i\Delta_0}{\mu} \log \frac{\Delta_0}{\epsilon})$\\[0.2cm]
    \hdashline
    \rule{0pt}{1.3\normalbaselineskip}
   \multirow{3}{*}{Deterministic GD} &  fixed stepsize  & $\eta_0 = \frac{2}{\mu + L}$ & $\mathcal{O}(\frac{(\mu+L)^2}{4\mu L}\log\frac{\Delta_0}{\epsilon})$ \citep{bubeck2015convex}\\
    &\textbf{AdaGrad-Norm} & $\eta_0 = \frac{1}{b_0} < \frac{2}{\mu + L}$  &  $\mathcal{O}(\frac{L\Delta_0}{\mu} \log \frac{\Delta_0}{\epsilon})$ \\
    &\textbf{AdaGrad-Norm} &  arbitrary & $\mathcal{O}(\frac{1}{\log (1+ C\epsilon)} +  \frac{L\Delta_0}{\mu} \log \frac{\Delta_0}{\epsilon})$ \\
    \bottomrule
  \end{tabular}
  \begin{tablenotes}
   \item $^1$ $\Delta_0 = \|\vx_0-\vx^*\|^2$ is the initial distance to the minimizer $\vx^*$. 
   \item $^2$ AdaGrad-Norm with $\eta = 1$. 
  \end{tablenotes}
 \end{threeparttable}
\end{table*}

We show that the convergence is robust starting from any initialization of $b_0$, without knowing the Lipschitz constant or strong convexity parameter a priori. The robustness is shown in Table \ref{rates}: when starting from different initial stepsizes, the convergence rates of AdaGrad-Norm are only changed according to the slope in Stage II and negligible gain from the added-on sublinear part in Stage I. However, changing the initial stepsize for SGD causes divergence.

\section{Problem Setup}\label{sec:prob}
Consider the empirical risk $F(\vx) = \frac{1}{n} \sum_{i=1}^n f_i(\vx)$, where $f_{i}(\vx) = f(\vx, \mZ_i) : \mathbb{R}^{d} \rightarrow  \mathbb{R}, i = 1, 2, \dots, n$ with possibly infinite $n$. In contrast to (stochastic) Gradient Descent implemented with \emph{fixed} stepsize, the update rules of AdaGrad-Norm (see Algorithm \ref{SGDSC}) \emph{dynamically} incorporates the information from previous gradients into the reciprocal of the learning rates.

\begin{algorithm}
\caption{AdaGrad-Norm }\label{SGDSC}
    \begin{algorithmic}
    \State \textbf{Input:} Initialize $\epsilon >0, \eta>0, T>0, \vx_0 \in \mathbb{R}^d, b_0 \in \mathbb{R}, j \leftarrow 0 $
    \While {$ j<T$}
        \State Generate random variable $\xi_{j}$ and compute $G_{j}$ (stochastic: $ G_j = \nabla f_{\xi_{j}}(\vx_{j})$; batch: $G_{j} = \nabla F(\vx_j)$)
          \State $b_{j+1}^2 \leftarrow b_{j}^2 + \|G_{j}\|^2$
          \State $\vx_{j+1} \leftarrow \vx_{j} - \frac{\eta}{b_{j+1}} G_{j}$
        \State $j \leftarrow j+1$
    \EndWhile
    \end{algorithmic}
\end{algorithm}

 The algorithm follows the standard assumptions from \cite{bottou2018optimization}: for each $j \geq 0$, the random vectors $\xi_j$, $j = 0,1,2, \dots, $ are mutually independent, independent of $\vx_j$, and satisfy $\mathds{E}_{\xi_j}[\nabla f_{\xi_{j}}(\vx_{j})]=\nabla F(\vx_j)$. In the stochastic setting, it draws one sample at a time and uses unbiased estimators ($G_j = \nabla f_{\xi_{j}}(\vx_{j})$) of the full gradients of $ F(\vx_j)$ to update. In the batch setting, it uses full gradients ($G_j = \nabla F(\vx_j)$) instead.
 
 In the convergence analysis, we consider the following two equivalent updates of AdaGrad-Norm:
\begin{align*}
\textbf{Square Form: } b_{j+1} &= \sqrt{b_j^2 + \|\nabla f_{\xi_j}(\vx_j)\|^2 } \\
\textbf{Solution Form: } b_{j+1} &= b_j + \frac{ \|\nabla f_{\xi_j}(\vx_j)\|^2}{b_j+b_{j+1}}
\end{align*}

\paragraph{Assumptions}
Throughout the paper, we use different combinations of the following assumptions to analyze the convergence rates in both the stochastic (with Assumptions \ref{A1a}, \ref{A2}, \ref{A3} and \ref{A4}) and batch (with Assumptions \ref{A1a}/\ref{A1b} and \ref{A2}) settings. 
\begin{enumerate}
    \item[\namedlabel{A1a}{(A1a)}] \textbf{$\mu-$strongly convex:} $F(\vx)$ is differentiable and $\langle \nabla F(\vx) - \nabla F(\vy), \vx-\vy \rangle \geq \mu \|\vx-\vy\|^2, \forall \vx,\vy$.
    
    \item[\namedlabel{A1b}{(A1b)}] \textbf{$\mu-$Polyak-\L{}ojasiewicz (PL) Inequality:} $\|\nabla F(\vx)\|^2 \geq 2 \mu (F(\vx) - F(\vx^*)), \forall \vx$.
    
    \item[\namedlabel{A2}{(A2)}] \textbf{$L-$smooth:} $f_i(\vx)$ is $L_i-$smooth, $\forall i$: $\|\nabla f_i(\vx) - \nabla f_i(\vy)\| \leq L_i\|\vx-\vy\|, \forall \vx,\vy$. Let $L\triangleq \sup_i L_i$, $F(\vx)$ and $\{f_i(\vx)\}$ are all $L-$smooth.
    
    \item[\namedlabel{A3}{(A3)}] \textbf{$(\epsilon, \alpha, \gamma)-$Restricted Uniform Inequality of Gradients (RUIG):} $\forall \epsilon>0$, for any fixed $\vx \in \mathcal{D}_{\epsilon} \triangleq \{ \vx\in \mathbb{R}^d: \|\vx-\vx^*\|^2 > \epsilon\}$, $\exists (\alpha, \gamma)$ s.t. $\alpha>0$, $\gamma>0$, and $\mathds{P}_i (\|\nabla f_i(\vx)\|^2 \geq \alpha \|\vx-\vx^*\|^2) \geq \gamma$, $\forall i = 1,2,\dots$.
    
    \item[\namedlabel{A4}{(A4)}] \textbf{convex and almost stationary:} \citep{moulines2011non,Needell2016,vaswani2018fast} $f_i(\vx)$ is convex, $\forall i$. Let $\vx^*= \arg\min_\vx F(\vx)$, then $\mathds{P}_i(\nabla f_i(\vx^*) = \mathbf{0})=1, \forall i$, i.e. the probability  of $\vx^*$ being a stationary point is almost surely over all sample data points.
  
\end{enumerate}

 Assumption \ref{A3} is a sufficient condition to guarantee the linear convergence for AdaGrad-Norm with any initialization of stepsize, but it is not necessary when the initial stepsize is smaller than the unknown critical values, i.e. $\frac{1}{\eta L}$ or $\frac{2}{\eta (\mu+L)}$. Examples of systems with this property are in Section \ref{sec:ruig}.
 
Assumption \ref{A4} is the key condition for linear convergence of $\|\vx-\vx^*\|^2$  in the stochastic approximation algorithms \citep{bach2012a, wu2018wngrad} as it imposes a strong condition on each component function at the point $x^*$. However, this assumption is much weaker than (Strong or Weak\footnote{The weak growth condition in \cite{Cevher2019} is weaker than that in \cite{vaswani2018fast}.}) Growth Condition in \cite{schmidt2013fast,vaswani2018fast} where it is assumed that $\forall \vx \in \mathbb{R}^d$, 
 $\max_i \|\nabla f_i(\vx)\|^2\leq B\| \nabla F(\vx)\|^2 $ or 
 $\mathds{E}_i \|\nabla f_i(\vx)\|^2< B (F(\vx)-F^*) $, for some constant $B$. We use the weaker assumption and characterize a better convergence rate for AdaGrad-Norm over many optimization problems that satisfy Assumption \ref{A4}. One particularly relevant application is the over-parameterized neural network.
 Note that Assumption \ref{A4} implies that there exists almost no noise at the solution, which may not be appropriate for certain applications.

\section{Restricted Uniform Inequality of Gradients}\label{sec:ruig}

In this section, we concretely explain our Assumption \ref{A3} in Section \ref{sec:prob} and restate it as follows:
\begin{assumption*}
\textbf{$(\epsilon, \alpha, \gamma)-$Restricted Uniform Inequality of Gradients (RUIG):} $\forall \epsilon>0$, for any fixed $\vx \in \mathcal{D}_{\epsilon} \triangleq \{\vx\in \mathbb{R}^d: \|\vx-\vx^*\|^2 > \epsilon\}$, $\exists (\alpha, \gamma)$ s.t. $\alpha>0$, $\gamma>0$, and
\begin{align*}
    \mathds{P}_i (\|\nabla f_i(\vx)\|^2 \geq \alpha \|\vx-\vx^*\|^2) \geq \gamma, \forall i = 1,2,\dots
\end{align*}
\end{assumption*}
The RUIG gives a lower bound on the probability $\gamma$, with which the norm of any unbiased gradient estimator $\|\nabla f_i(\vx)\|$ is larger than the distance between $\vx$ and $\vx^*$ by a constant factor $\alpha$, if $\vx$ is in a restricted region $\mathcal{D}_{\epsilon}$. This inequality depicts a set of functions \{$F(\vx)$\} that preserve a flat landscape around $\vx^*$ for each component loss function $f_i(\vx)$ and characterize the relatively sharper curvature beyond the region. 

The constant tuple $(\epsilon, \alpha, \gamma)$ is determined by the distribution of the dataset. In general, $\alpha$ and $\gamma$ are negatively correlated, i.e., $\alpha \rightarrow 0$, $\gamma \rightarrow 1$. The error $\epsilon$ could be independent of $\alpha$ and $\gamma$. However, with large $\epsilon$, the product $\alpha\gamma$ is more likely far away from zero. In addition, if $\epsilon_2 \geq \epsilon_1 \geq 0$, then $\mathcal{D}_{\epsilon_2} \subseteq \mathcal{D}_{\epsilon_1}  \subseteq \mathcal{D}_0 = \mathbb{R}^d$. 

We provide some examples where we can directly compute the lower bounds on $\alpha$ and $\gamma$ for a restricted region $\mathcal{D}_{\epsilon}$. (See Appendix \ref{E2} for an empirical example.) Note that these bounds depend on the dataset $\{\va_i\}_{i=1}^{\infty}$, hence they are data-dependent.

\textbf{Example 1. Least Square Problem } 
Suppose that 
\begin{align}
 F(\vx) = \frac{1}{n} \sum_{i=1}^n \frac{1}{2} (\langle \va_i, \vx \rangle - y_i)^2 \label{eq:linear}  
\end{align} 
where each data point $\va_i$ consists of $d$ features and $\vy = \mA \vx^*$. Suppose the entries of all the vectors $\va_i$ are i.i.d. standard Gaussian random variables. In this case, for any fixed $\vx\in \mathbb{R}^d, \|\nabla f_i(\vx)\|^2 = \| \va_i\|^2\langle \va_i, \vx-\vx^*\rangle^2$. Let $\bar{\vx} \triangleq \frac{\vx-\vx^*}{\|\vx-\vx^*\|}$ and $Y \triangleq \langle \va_i, \bar{\vx}\rangle, \forall i $. Using the fact that a linear combination of independent normal distributions is $\mathcal{N}(\sum_j c_j\mu_j, \sum_j c_j^2 \sigma_j^2)$, $Y \sim \mathcal{N}(0, \| \bar{\vx}\|^2)$, i.e. $Y \sim \mathcal{N} (0,1)$, then $Y^2 \sim \chi^2(1)$. For example, from the distribution table of $\chi^2(1)$, $\forall i=1,2,\dots,n$,
\begin{align*}
    \mathds{P}_i\left(\|\nabla f_i(\vx)\|^2 \geq 0.45\min_j\{\|\va_j\|^2\}\|\vx-\vx^*\|^2 \right) \geq 0.5
\end{align*}
In the above case, $\alpha \geq 0.45 \min_j\{\|\va_j\|^2\}$ and $\gamma \geq 0.5$ in RUIG, where $\|\va_j\|^2 \sim \chi^2(d)$. Then, from the tail bound of $\chi^2(d)$, we have $\mathds{P}_j(\|\va_j\|^2 \geq (1-t) d ) \geq 1-e^{-dt^2/8}, \forall t\in(0,1)$. In general, $\alpha$ is not small---especially when the data is fairly dense. Furthermore, from the chi-squared distribution, other possible tuples $ \left(\frac{\alpha}{\min_j\{\|\va_j\|^2\}}, \gamma \right)$ are $\{(0.015, 0.9), (0.1, 0.75), (1.3, 0.25), (2.7, 0.1)\}$. The inequality is for any fixed $x$, so $\mathcal{D_{\epsilon}}$ is extended to $\mathcal{D}_0$. 

\textbf{Example 2. $\mu-$Strongly Convex Function} 
\begin{enumerate}[label=(\roman*)]
\item \label{ex: sc} Consider $\{f_i(\vx)\}$  $\mu_i-$strongly convex \citep{defazio2014saga} and $\vx^* = \arg\min_\vx F(\vx)$ such that $\nabla f_i(\vx^*)=\mathbf{0}$. By strong convexity, $\|\nabla f_i(\vx)\|^2 \geq (\min_j \mu_j)^2\|\vx-\vx^*\|^2, \forall \vx, \forall i=1, 2,\dots$. In this case, the uniform probability $\gamma$ degenerates to 1, $\alpha =  (\min_j\mu_j)^2$, and
$\vx$ is not restricted to $\mathcal{D}_{\epsilon}$. This class includes sum of convex functions such as squared and logistic loss with $\ell_2$-regularization.

\item A more general function class: $f_i(\vx) \in \mathcal{H}_1\cup \mathcal{H}_2$, where $\mathcal{H}_1:=\{g(\vx): g(\vx)$ is $\mu_i$-strongly convex and $\nabla g(\vx^*)=\mathbf{0}\}$ and $ \mathcal{H}_2:=\{h(\vx): h(\vx)$ is not strongly convex$\}$. $f_i(\vx)$ draws from $\mathcal{H}_1$ with probability $\gamma$ and from $\mathcal{H}_2$ with probability $1-\gamma$, where $0<\gamma<1$ and $F(\vx) = \frac{1}{n}\sum_{i=1}^n f_i(\vx)$.
\end{enumerate}

\paragraph{Convergence Under RUIG Assumption} Under the RUIG assumption, the reciprocal of the step-size (i.e. $b_t$) in AdaGrad-Norm increases quickly with high probability in Stage I until it exceeds a threshold---for example, $\eta L$---to reach Stage II.

\begin{lemma}\label{highp}
 \textbf{(Two-case high-probability lower bound for $b_N$ in the stochastic setting)} For AdaGrad-Norm (Algorithm \ref{SGDSC}), $\forall \epsilon > 0$, suppose $F(\vx)$ satisfies $(\epsilon, \alpha, \gamma)-$RUIG. For any fixed $C$, after $N = \left \lceil\frac{C^2-b_0^2}{\alpha\gamma\epsilon}+\frac{\delta}{\gamma}\right \rceil +1$ steps, either $b_N > C$ or $\min_j \|\vx_j - \vx^*\|^2 \leq \epsilon$, with high probability $1- \exp(-\frac{\delta^2}{2(N\gamma(1-\gamma)+\delta)})$.
\end{lemma}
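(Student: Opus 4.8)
The plan is to read $b_N$ off the update rule and use RUIG to count how often the sampled stochastic gradients are large. Since $b_{j+1}^2 = b_j^2 + \|\nabla f_{\xi_j}(\vx_j)\|^2$, telescoping gives $b_N^2 = b_0^2 + \sum_{j=0}^{N-1}\|\nabla f_{\xi_j}(\vx_j)\|^2$, so it suffices to lower bound this sum. RUIG says precisely that, as long as the iterate sits in $\mathcal{D}_{\epsilon}$, each sampled gradient satisfies $\|\nabla f_{\xi_j}(\vx_j)\|^2 \ge \alpha\|\vx_j-\vx^*\|^2 > \alpha\epsilon$ with conditional probability at least $\gamma$.

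First I would set up the dichotomy. Let $\tau := \min\{j\ge 0 : \|\vx_j-\vx^*\|^2 \le \epsilon\}$, with $\tau = +\infty$ if no such $j$ exists. On $\{\tau \le N\}$ the second alternative $\min_j\|\vx_j-\vx^*\|^2\le\epsilon$ holds deterministically, so it remains to show that on $\{\tau > N\}$ we have $b_N > C$ with the stated probability. Let $\mathcal{F}_j := \sigma(\xi_0,\dots,\xi_{j-1})$, so that $\vx_j$ is $\mathcal{F}_j$-measurable and $\xi_j$ is independent of $\mathcal{F}_j$. Define the indicators $I_j := \mathds{1}\{\|\nabla f_{\xi_j}(\vx_j)\|^2 \ge \alpha\epsilon\}$; on $\{\tau > j\}$ we have $\vx_j\in\mathcal{D}_{\epsilon}$, hence by the RUIG assumption $\mathds{E}[I_j \mid \mathcal{F}_j] \ge \mathds{P}_{\xi_j}(\|\nabla f_{\xi_j}(\vx_j)\|^2 \ge \alpha\|\vx_j-\vx^*\|^2) \ge \gamma$.

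Next I would invoke a concentration bound for such adaptively-defined indicators. A standard stochastic-domination argument (coupling each $I_j$ from below, while $\tau>j$, with an independent $\mathrm{Bernoulli}(\gamma)$ variable) shows that $\sum_{j=0}^{N-1} I_j$ restricted to $\{\tau > N\}$ dominates a $\mathrm{Binomial}(N,\gamma)$ sum, to which a Bernstein/Freedman lower-tail inequality applies: $\mathds{P}\big(\sum_{j=0}^{N-1} I_j < N\gamma - \delta,\ \tau>N\big) \le \exp\!\big(-\tfrac{\delta^2}{2(N\gamma(1-\gamma)+\delta)}\big)$. On the complementary event intersected with $\{\tau>N\}$ we then get $b_N^2 = b_0^2 + \sum_{j=0}^{N-1}\|\nabla f_{\xi_j}(\vx_j)\|^2 \ge b_0^2 + \alpha\epsilon\sum_{j=0}^{N-1} I_j \ge b_0^2 + \alpha\epsilon(N\gamma-\delta)$. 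Finally, the choice $N = \lceil \frac{C^2-b_0^2}{\alpha\gamma\epsilon} + \frac{\delta}{\gamma}\rceil + 1$ forces $N\gamma - \delta > \frac{C^2-b_0^2}{\alpha\epsilon}$, whence $b_N^2 > C^2$, i.e. $b_N > C$. Combining the two alternatives gives the claim with probability $1-\exp(-\tfrac{\delta^2}{2(N\gamma(1-\gamma)+\delta)})$.

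The step I expect to be the real obstacle is making the concentration argument rigorous in the presence of adaptivity: RUIG only guarantees the conditional-success-probability lower bound $\gamma$ while the trajectory stays inside $\mathcal{D}_{\epsilon}$, and ``$\vx_j\in\mathcal{D}_{\epsilon}$'' is a history-dependent random event. The stopping-time truncation together with a coupling to an i.i.d.\ $\mathrm{Bernoulli}(\gamma)$ sequence (whose partial sums have predictable quadratic variation exactly $N\gamma(1-\gamma)$) is what legitimizes the martingale Bernstein step and produces the exact exponent in the stated high-probability bound; everything else is bookkeeping and the one-line arithmetic with the definition of $N$.
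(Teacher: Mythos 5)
Your proposal reaches the lemma by the same basic route as the paper: split into the two alternatives, convert each step inside $\mathcal{D}_\epsilon$ into a Bernoulli-type indicator of the RUIG event, lower-bound the count of successes by $\gamma N-\delta$ via a Bernstein bound with variance proxy $N\gamma(1-\gamma)$, and then read $b_N^2\geq b_0^2+\alpha\epsilon(\gamma N-\delta)\geq C^2$ off the telescoped update; the arithmetic with the definition of $N$ is identical. The difference is in how the concentration step is justified. The paper simply declares the indicators $Z_j$ to be i.i.d.\ $\mathrm{Bernoulli}(\gamma)$ (arguing that RUIG holds for any fixed $\vx$ and the $\xi_j$ are sampled independently) and applies the i.i.d.\ Bernstein inequality, with only a loose closing remark about possible correlation handled by a crude sub-Gaussian bound; it does not introduce a stopping time or address the fact that $\vx_j$, and hence the success probability and even membership in $\mathcal{D}_\epsilon$, is history-dependent, nor that RUIG only gives $\mathds{P}\geq\gamma$ rather than $=\gamma$. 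Your treatment --- the stopping time $\tau$, the filtration with $\mathds{E}[I_j\mid\mathcal{F}_j]\geq\gamma$ on $\{\tau>j\}$, and the coupling of each $I_j$ from below by an independent $\mathrm{Bernoulli}(\gamma)$ so that the i.i.d.\ lower-tail bound with variance $N\gamma(1-\gamma)$ applies verbatim --- is exactly what is needed to make the paper's step rigorous while recovering the same exponent, so your version is, if anything, tighter than the published argument on the very point you identified as the obstacle. The only bookkeeping detail to spell out is defining the coupled variables arbitrarily (say, as fresh independent $\mathrm{Bernoulli}(\gamma)$ draws) on $\{\tau\leq j\}$, so that the comparison $\tilde I_j\leq I_j$ is only invoked on the event $\{\tau>N\}$ where it is available.
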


 Letting $\delta_1 \triangleq \exp(-\frac{\delta^2}{2(N\gamma(1-\gamma)+\delta)})$, the high probability $1-\delta_1$ is derived by applying the standard Bernstein Inequality \citep{wainwright2019high} for the Bernoulli distribution (see Appendix \ref{C}). When $\gamma N/\log N \rightarrow \infty$, let $\delta = \sqrt{4c\gamma(1-\gamma) N\log N}$, then $\delta_1 \leq N^{-c}$; On the other hand, if $\gamma N \sim \log N$, let $\delta \sim (\log N)^{t+0.5}$, then $\delta_1 \sim \exp(-(c'\log N)^{2t}) \rightarrow 0$ as $N \rightarrow \infty$. As long as $\gamma \gg (\log N)^{t+0.5}/N$, the number of iterations $\frac{\delta}{\gamma} \ll N$ in Lemma \ref{highp}. In the two examples, $\gamma$ can be chosen to be at least 0.5, which leads to the high probability. In Example 2\ref{ex: sc}, $\gamma=1$, every step is deterministic, the probability degenerates to $1$.

\section{Linear Convergence Rates} \label{sec:lin}
 Throughout this section, we mainly focus on the linear convergence of AdaGrad-Norm (Algorithm \ref{SGDSC}) in both stochastic and batch settings. We highlight the robustness of the convergence rates to hyper-parameter tuning by applying our general two-stage framework. Proofs of all theorems and lemmas are in Appendix. 

\begin{theorem}\label{sgdscalg}
\textbf{(Convergence in strongly convex and stochastic  setting)}
Consider the AdaGrad-Norm Algorithm in the stochastic setting, suppose that $F(\vx)$ is strongly convex, smooth, almost stationary with $\vx^* = \arg\min_{\vx} F(\vx)$, and satisfies Restricted Uniform Inequality of Gradients (i.e. with Assumptions \ref{A1a}, \ref{A2}, \ref{A3}, \ref{A4}), then

\textbf{Case 1:} If $b_0 > \eta L$, then $\|\vx_T - \vx^*\|^2 \leq \epsilon$ with high probability $1-\delta_h$ after
$$ T =\left \lceil \frac{ b_0 + L\Delta_0 /\eta}{\mu}\log \frac{\Delta_0}{\epsilon\delta_h} \right \rceil + 1$$
iterations, where $\Delta_0 = \|\vx_0-\vx^*\|^2$;

\textbf{Case 2:} If $b_0 \leq \eta L$, then $\min_{i}\|\vx_i-\vx^*\|^2 \leq \epsilon$ with high probability $ 1-\delta_h - \exp(-\frac{\delta^2}{2(N\gamma(1-\gamma)+\delta)})$ after
$$ T =\left \lceil \frac{\eta^2L^2-b_0^2}{\alpha \gamma\epsilon}+\frac{\delta}{\gamma} + \frac{ L(\eta + \Delta/\eta) }{\mu} \log \frac{\Delta}{\epsilon\delta_h}\right \rceil + 1$$
iterations, where $\Delta = \|\vx_0-\vx^*\|^2 + \eta^2 (\log\frac{\eta^2L^2}{b_0^2} + 1)$ and $N = \left \lceil \frac{\eta^2L^2-b_0^2}{\alpha \gamma\epsilon}+\frac{\delta}{\gamma} \right \rceil$.
\end{theorem}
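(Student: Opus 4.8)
The plan is to prove the two cases separately, with Case~1 ($b_0>\eta L$) serving as a self-contained ``Stage~II'' engine and Case~2 ($b_0\le\eta L$) obtained by running a ``Stage~I'' phase, controlled by Lemma~\ref{highp}, and then invoking the Case~1 argument. For Case~1 I would first derive a \emph{pathwise} one-step contraction. Since each $f_i$ is convex and $L_i$-smooth (Assumptions~\ref{A2},~\ref{A4}) with $\nabla f_i(\vx^*)=\mathbf{0}$, co-coercivity gives $\langle\nabla f_{\xi_j}(\vx_j),\vx_j-\vx^*\rangle\ge\frac1L\|\nabla f_{\xi_j}(\vx_j)\|^2$. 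Substituting this and the update rule into the expansion of $\|\vx_{j+1}-\vx^*\|^2$, and using $b_{j+1}\ge b_0>\eta L$ (so $\eta/b_{j+1}<1/L$), yields along every trajectory
$$\|\vx_{j+1}-\vx^*\|^2 \le \|\vx_j-\vx^*\|^2 - \frac{\eta}{L\,b_{j+1}}\|\nabla f_{\xi_j}(\vx_j)\|^2 .$$
Because $\|\nabla f_{\xi_j}(\vx_j)\|^2 = b_{j+1}^2-b_j^2 \ge b_{j+1}(b_{j+1}-b_j)$, summing this telescopes both sides and shows that $b_j$ stays bounded along every trajectory by $b_{\max}:=b_0+L\Delta_0/\eta$.

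Next I would promote the pathwise bound to a contraction in conditional expectation. The difficulty is that $b_{j+1}$ is itself a function of $\xi_j$, so one cannot pull it out of $\mathds{E}_{\xi_j}$. I would split $\tfrac{2\eta}{b_{j+1}}\langle\nabla f_{\xi_j}(\vx_j),\vx_j-\vx^*\rangle$ into two equal halves: bound one half by co-coercivity (which exactly absorbs the $\eta^2 b_{j+1}^{-2}\|\nabla f_{\xi_j}(\vx_j)\|^2$ term, using $\eta/b_{j+1}\le 1/L$), and in the other half replace $b_{j+1}^{-1}$ by the \emph{deterministic} $b_{\max}^{-1}$ — legitimate since $\langle\nabla f_{\xi_j}(\vx_j),\vx_j-\vx^*\rangle\ge 0$ by convexity — and only then take $\mathds{E}_{\xi_j}$ and apply strong convexity (Assumption~\ref{A1a}), $\langle\nabla F(\vx_j),\vx_j-\vx^*\rangle\ge\mu\|\vx_j-\vx^*\|^2$. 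This gives $\mathds{E}[\|\vx_{j+1}-\vx^*\|^2\mid\mathcal{F}_j]\le(1-\tfrac{\eta\mu}{b_{\max}})\|\vx_j-\vx^*\|^2$; iterating the total expectation and applying Markov's inequality yields $\|\vx_T-\vx^*\|^2\le\epsilon$ with probability at least $1-\delta_h$ once $T\gtrsim\tfrac{b_{\max}}{\eta\mu}\log\tfrac{\Delta_0}{\epsilon\delta_h}$, which is the stated bound after substituting $b_{\max}$ and using $-\log(1-c)\ge c$.

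For Case~2 the run begins in Stage~I, where I would bound the error's growth rather than its decay: dropping the nonnegative cross term gives $\|\vx_{j+1}-\vx^*\|^2\le\|\vx_j-\vx^*\|^2+\eta^2\frac{b_{j+1}^2-b_j^2}{b_{j+1}^2}$, and since $\frac{b_{j+1}^2-b_j^2}{b_{j+1}^2}\le\log\frac{b_{j+1}^2}{b_j^2}$ while the single iteration that carries $b$ above $\eta L$ contributes at most $\eta^2$, telescoping gives $\|\vx_t-\vx^*\|^2\le\Delta:=\Delta_0+\eta^2(\log\tfrac{\eta^2L^2}{b_0^2}+1)$ up to that crossing, hence (by the pathwise monotonicity from Case~1) for all $t$. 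In parallel, Lemma~\ref{highp} with $C=\eta L$ shows that after $N=\lceil\tfrac{\eta^2L^2-b_0^2}{\alpha\gamma\epsilon}+\tfrac\delta\gamma\rceil$ steps either $\min_j\|\vx_j-\vx^*\|^2\le\epsilon$ (done) or $b_N>\eta L$, with probability at least $1-\exp(-\tfrac{\delta^2}{2(N\gamma(1-\gamma)+\delta)})$. In the latter event I restart the Case~1 argument from step $N$: the initial error is at most $\Delta$, and bounding the crossing value via smoothness, $b_N^2\le(\eta L)^2+L^2\|\vx_{N-1}-\vx^*\|^2\le L^2\Delta$, makes the Stage~II ``$b_{\max}$'' of order $L(\eta+\Delta/\eta)$, so an additional $\tfrac{L(\eta+\Delta/\eta)}{\mu}\log\tfrac{\Delta}{\epsilon\delta_h}$ iterations suffice, each succeeding with probability at least $1-\delta_h$. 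A union bound over the two failure events gives the stated probability, and adding the two phase lengths gives $T$.

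The crux throughout is the entanglement of the adaptive denominator $b_{j+1}$ with the stochastic gradient $\nabla f_{\xi_j}(\vx_j)$ inside the expectation: the resolution is to commit to the pathwise sandwich $b_j\le b_{j+1}\le b_{\max}$ — the upper bound being itself an output of the Stage~II descent, not an assumption — and to exploit the sign of $\langle\nabla f_{\xi_j}(\vx_j),\vx_j-\vx^*\rangle$ so that every substitution of a random $b_{j+1}$ by a deterministic quantity goes in the correct direction before any averaging. A secondary technical point is the overshoot at the single iteration leaving Stage~I, since Lemma~\ref{highp} certifies only that $b_N>\eta L$, not that the overshoot is small; bounding it via smoothness as above is what pins down the Stage~II rate constant in Case~2. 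I expect these two points — not the convexity/smoothness estimates themselves — to be where the real work lies.
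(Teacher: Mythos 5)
Your proposal is correct and follows essentially the same two-stage argument as the paper's own proof: Lemma~\ref{highp} with $C=\eta L$ drives $b_t$ above $\eta L$ in Stage~I, the pre-crossing error is bounded by the same $\Delta$ (your log-telescoping $1-b_j^2/b_{j+1}^2\le\log(b_{j+1}^2/b_j^2)$ is just the paper's Lemma~\ref{boundset}/\ref{log} in disguise), and Stage~II is the identical contraction built from co-coercivity, the nonnegativity of $\langle G_j,\vx_j-\vx^*\rangle$ (to replace the random $1/b_{j+1}$ by the deterministic $1/b_{\max}$ before averaging), strong convexity in conditional expectation, the pathwise bound $b_{\max}\le b_0+L\Delta_0/\eta$, and a final Markov inequality. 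The one substantive micro-difference is the crossing overshoot in Case~2: the paper's Lemma~\ref{bmax} telescopes from $b_{J-1}\le\eta L$ and absorbs the crossing increment into the controlled sum, giving $b_{\max}\le \eta L+(L/\eta)\Delta$ exactly, whereas your smoothness bound $b_N\le L\sqrt{\eta^2+\Delta}$ yields $b_{\max}\le L\sqrt{\eta^2+\Delta}+(L/\eta)\Delta$ --- the same order $L(\eta+\Delta/\eta)$ but a slightly worse constant, so your Stage~II iteration count recovers the stated $T$ only up to a constant factor.
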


Our theorem establishes not only the robustness to hyper-parameters of the AdaGrad-Norm algorithm but also, more importantly, the strong linear convergence in the stochastic setting. To put the theorem in context, we compare with the sub-linear convergence rate of AdaGrad-Norm (i.e., $T = \mathcal{O}\left(1/\epsilon^2\right)$) in \cite{levy2018online, levy2017online,ward2018adagrad,li2018convergence}. The key breakthrough in our theorem is that we use a novel assumption in high dimensional probability (c.f. RUIG) and utilize the nice landscape property at the solution (c.f. Assumption \ref{A4}), instead of following the  standard analysis of SGD where it is often assumed that there is noise at the solution,  $\mathds{E}_{\xi_j}[ \|\nabla f_{\xi_{j}}(\vx) -\nabla F(\vx) \|^2] \leq \hat{\sigma}^2, \forall \vx$. 

The high probability guarantee can be verified in both stages: In Stage I, the high probability $\delta_1 \triangleq \exp(-\frac{\delta^2}{2(N\gamma(1-\gamma)+\delta)})$ is guaranteed by the high probability explanation in Lemma \ref{highp}; In Stage II, $\delta_h$ is derived from removing expectation in $\mathds{E}\|\vx-\vx^*\|^2$ with high probability $1-\delta_h$ by Markov Inequality. In general, $\delta_h$ is appropriately chosen to be a small term. 

\begin{remark}\label{remark: sto}
The classic result \citep{Needell2016} for SGD in the strongly convex setting with $\sigma^2 \triangleq \mathds{E}\|\nabla f_i(\vx^*)\|^2 = 0$ is: with stepsize $\eta_t = \frac{1}{2\sup_i L_i}$, after $T = \frac{2\sup_i L_i}{\mu}\log \frac{2\Delta_0}{\epsilon}$ iterations, $\mathds{E} \|\vx_T -\vx^*\|^2 \leq \epsilon$. Theorem \ref{sgdscalg} recovers the convergence rate up to a factor difference of $\Delta_0$ in multiplier and $\delta_h$ in the log term with high probability, if $b_0 > \sup_i L_i$. Hence, if the initialization of $\vx_0$ is extremely bad, the convergence is relatively slow. However, with tuning $\eta = \Theta(\Delta_0)$, the convergence rate is $(c_1\frac{L}{\mu}+c_2)\log\frac{\Delta_0}{\epsilon\delta_h}$ as expected. See the numerical experiments of extreme initialization of $\vx_0$ and corresponding tuning $\eta$ in Appendix \ref{E1}.
\end{remark}

In the batch setting, the full gradient at each step is available. Now, the moving direction becomes noiseless (i.e. $G_j = \nabla F(\vx_j)$), and the uniform probability $\gamma$ in $\mathds{P}_i (\|\nabla f_i(\vx)\|^2 \geq \alpha \|\vx-\vx^*\|^2) \geq \gamma$ degenerates to 1. Hence, the linear convergence rate is guaranteed in Stage II instead of with high probability.

\begin{theorem}\label{op2}
\textbf{(Convergence in strongly convex and batch setting)} Consider the AdaGrad-Norm Algorithm in the batch setting, suppose that $F(\vx)$ is $L-$smooth and $\mu$-strongly convex (i.e. with Assumptions \ref{A1a} and \ref{A2}), and $\vx^* = \arg \min_\vx F(\vx)$. Then 
$\min_{0\leq i \leq T-1}\|\vx_i-\vx^*\|^2 \leq \epsilon$
after

\textbf{Case 1:} If $b_0 > \eta \frac{\mu+L}{2}$,
$$T = 1+ \left \lceil \max\left \{ \frac{L(1+\Delta_0/\eta^2)}{\mu}, \frac{\mu+L}{2\mu} \right\} \log \frac{\Delta_0}{\epsilon} \right \rceil$$
iterations, where $\Delta_0 = \|\vx_0-\vx^*\|^2$;

\textbf{Case 2:} If $b_0\leq \eta \frac{\mu+L}{2}$,
\begin{align*}
    T = & 1 + \left \lceil 
    \max \left \{ \frac{L(1+\Delta/\eta^2)}{\mu}, \frac{\mu+L}{2\mu} \right\} \log\frac{\Delta}{\epsilon} \right. \\
    & \left. + \frac{\log(\eta^2(\mu+L)^2/4b_0^2)}{\log(1+4\mu^2\epsilon/(\mu+L)^2)} \right\rceil 
\end{align*}
iterations, where $\Delta = \|\vx_0-\vx^*\|^2 + \eta^2 (\log \frac{(\mu+L)^2}{4b_0^2} + 1)$.
\end{theorem}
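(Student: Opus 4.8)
First I would fix notation: write $a_j:=\|\vx_j-\vx^*\|^2$, use $\nabla F(\vx^*)=\vzero$, and record three elementary facts that drive everything — $b_j$ is non-decreasing; since $F$ is both $\mu$-strongly convex and $L$-smooth it satisfies the standard coercivity inequality $\langle\nabla F(\vx),\vx-\vx^*\rangle\ge\frac{\mu L}{\mu+L}\|\vx-\vx^*\|^2+\frac{1}{\mu+L}\|\nabla F(\vx)\|^2$; and strong convexity together with smoothness give $\mu^2 a_j\le\|\nabla F(\vx_j)\|^2\le L^2 a_j$ (the lower bound is precisely RUIG with $\gamma=1$, $\alpha=\mu^2$, which is why Assumptions \ref{A3} and \ref{A4} are not needed in the batch case). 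Substituting the update $\vx_{j+1}=\vx_j-\frac{\eta}{b_{j+1}}\nabla F(\vx_j)$ into $a_{j+1}$ and applying the coercivity inequality gives the master recursion
\[
a_{j+1}\le\Bigl(1-\tfrac{2\eta\mu L}{(\mu+L)b_{j+1}}\Bigr)a_j+\tfrac{\eta}{b_{j+1}}\Bigl(\tfrac{\eta}{b_{j+1}}-\tfrac{2}{\mu+L}\Bigr)\|\nabla F(\vx_j)\|^2,
\]
whose gradient coefficient changes sign exactly at $b_{j+1}=\eta(\mu+L)/2$ — the boundary between Stage I and Stage II.

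For Stage II ($b_{j+1}>\eta(\mu+L)/2$) the gradient coefficient is $\le 0$, so replacing $\|\nabla F(\vx_j)\|^2$ by its lower bound $\mu^2 a_j$ can only increase the right-hand side, and the algebra collapses to the clean contraction $a_{j+1}\le(1-\eta\mu/b_{j+1})^2 a_j$ with factor strictly below $1$; in particular $a_j$ is non-increasing throughout Stage II. The crucial second step is an a priori cap $b_j\le b_{\max}$: monotonicity of $a_j$ keeps $\|\nabla F(\vx_j)\|^2\le L^2 a_j$ bounded, and the $L$-smoothness descent lemma $F(\vx_j)-F(\vx_{j+1})\ge\frac{\eta}{2b_{j+1}}\|\nabla F(\vx_j)\|^2$ (valid once $b_{j+1}\ge\eta L$), telescoped against $b_{j+1}-b_j\le\frac{b_{j+1}^2-b_j^2}{b_{j+1}}=\frac{\|\nabla F(\vx_j)\|^2}{b_{j+1}}$, yields $b_{\max}\le b_{j_0}+\frac{L\Delta}{\eta}$, where $j_0$ is the first index past the threshold and $\Delta:=a_{j_0}$ (the short window $\eta(\mu+L)/2<b<\eta L$ is handled separately, since there $b$ is trivially below $\eta L$). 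With $b_j\le b_{\max}$ the per-step factor is uniformly $\le(1-\eta\mu/b_{\max})^2$, so driving $a$ from $\Delta$ down to $\epsilon$ takes at most $\lceil\frac{\log(\Delta/\epsilon)}{-2\log(1-\eta\mu/b_{\max})}\rceil\le\lceil\frac{b_{\max}}{2\eta\mu}\log\frac{\Delta}{\epsilon}\rceil$ iterations; substituting the $b_{\max}$ estimate and simplifying with $L\ge\mu$ produces the $\max\{\frac{L(1+\Delta/\eta^2)}{\mu},\frac{\mu+L}{2\mu}\}\log\frac{\Delta}{\epsilon}$ bound. Taking $\Delta=\Delta_0$ (so $j_0=0$ and Stage I is empty) gives Case 1.

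For Case 2 I must add the analysis of Stage I ($b_j\le\eta(\mu+L)/2$). First, control the error growth: using $\langle\nabla F(\vx_j),\vx_j-\vx^*\rangle\ge 0$ in the master recursion gives $a_{j+1}\le a_j+\frac{\eta^2}{b_{j+1}^2}\|\nabla F(\vx_j)\|^2=a_j+\eta^2\frac{b_{j+1}^2-b_j^2}{b_{j+1}^2}\le a_j+\eta^2\log\frac{b_{j+1}^2}{b_j^2}$, and telescoping over all of Stage I together with the single overshoot step bounds the error entering Stage II by $\Delta=\Delta_0+\eta^2(\log\frac{\eta^2(\mu+L)^2}{4b_0^2}+1)$. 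Second, control the length $N_1$ of Stage I: either some Stage-I iterate already satisfies $a_i\le\epsilon$ (and we are done), or every Stage-I gradient has $\|\nabla F(\vx_j)\|^2\ge\mu^2\epsilon$, so $b_{j+1}^2\ge b_j^2(1+4\mu^2\epsilon/(\eta^2(\mu+L)^2))$ using $b_j<\eta(\mu+L)/2$; since $b$ must remain below $\eta(\mu+L)/2$ until Stage II begins, this forces $N_1\le\frac{\log(\eta^2(\mu+L)^2/(4b_0^2))}{\log(1+4\mu^2\epsilon/(\eta^2(\mu+L)^2))}$. Restarting the Stage II analysis from error $\le\Delta$ and adding the two counts gives Case 2; because $a_j$ need not be monotone across Stage I, the conclusion is necessarily phrased as $\min_{0\le i\le T-1}\|\vx_i-\vx^*\|^2\le\epsilon$.

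The main obstacle is the a priori bound $b_j\le b_{\max}$ in Stage II: this is the mechanism that turns the adaptive stepsize into an honest linear rate and makes that rate robust to $b_0$, and obtaining a bound tight enough to reproduce the stated $\frac{L(1+\Delta/\eta^2)}{\mu}$ multiplier — rather than the looser $\sim L^2\Delta/(\eta^2\mu^2)$ one would get by naively summing the geometric error decay — requires the sharper descent-lemma telescoping above, plus some care at the transient window $\eta(\mu+L)/2<b<\eta L$ where the clean descent inequality does not yet hold. The Stage-I bookkeeping needed to land exactly on the stated $\Delta$ and $N_1$ is the secondary place where attention, rather than real difficulty, is required.
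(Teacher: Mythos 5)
Your proposal is correct and rests on the same two-stage skeleton as the paper (Stage-I counting via the growth of $b_t$ under the strong-convexity lower bound $\|\nabla F(\vx_j)\|^2\ge\mu^2\|\vx_j-\vx^*\|^2$, a logarithmic bound on the error accumulated before the threshold, an a priori cap $b_{\max}$, then a contraction), but your Stage II is genuinely different in two places. First, the paper never forms your unified contraction: it splits Stage II into the window $\eta\frac{\mu+L}{2}<b\le\eta L$, where the strong-convexity co-coercivity inequality alone gives the factor $1-\frac{2\mu\eta L}{(\mu+L)b}\le 1-\frac{2\mu}{\mu+L}$, and the regime $b>\eta L$, where the smoothness co-coercivity inequality gives $1-\frac{\mu\eta}{b_{\max}}$; the $\max\{\cdot,\cdot\}$ in the stated $T$ is exactly the worse of these two rates. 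Your move of substituting $\|\nabla F(\vx_j)\|^2\ge\mu^2 a_j$ into the (nonpositive) gradient coefficient to get $a_{j+1}\le(1-\eta\mu/b_{j+1})^2a_j$ throughout Stage II is cleaner, subsumes both regimes, and is in fact a factor of roughly two sharper, which is why relaxing it with $L\ge\mu$ still lands under the stated bound. Second, your cap on $b_{\max}$ goes through the function-value descent lemma $F(\vx_j)-F(\vx_{j+1})\ge\frac{\eta}{2b_{j+1}}\|\nabla F(\vx_j)\|^2$ (valid once $b_{j+1}\ge\eta L$) plus $F(\vx)-F^*\le\frac{L}{2}\|\vx-\vx^*\|^2$, whereas the paper telescopes distances directly (its Lemma \ref{bmax}, via co-coercivity, $\sum_j\|G_j\|^2/b_{j+1}\le\frac{L}{\eta}\|\vx_{J-1}-\vx^*\|^2$, combined with the descent Lemma \ref{descent}); both routes give $b_{\max}\le\eta L+\frac{L}{\eta}\Delta$, so this is a matter of taste. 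Your Stage-I ingredients (the $1-x\le\log(1/x)$ telescoping for $\Delta$ and the multiplicative growth of $b_j^2$ for $N_1$) coincide with the paper's Lemmas \ref{boundset} and \ref{twocases}. Two small bookkeeping remarks: be careful which index the contraction starts from at the overshoot step (the paper's $\Delta$ bounds $\|\vx_{k_0-1}-\vx^*\|^2$, the last iterate with $b\le\eta\frac{\mu+L}{2}$, and the first contraction step already uses $b_{k_0}$ above the threshold, so the overshoot step needs no growth estimate at all), and note that your $\eta$-dependence (the $\eta^2$ inside the Stage-I logarithms) matches the paper's appendix computation rather than the slightly abbreviated constants in the theorem statement, which is fine.
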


\begin{remark}
Let $b_0 > \frac{\mu+L}{2}$ and $\eta = \Theta(\sqrt{\Delta_0})$, then $T = (c_1\frac{L}{\mu}+c_2)\log \frac{\Delta_0}{\epsilon}$. Theorem \ref{op2} recovers the classic result of GD with constant stepsize---whose $T = \frac{(\mu+L)^2}{4\mu L}\log\frac{\Delta_0}{\epsilon}$---up to a constant factor difference. Note that the order of $\eta$ w.r.t $\Delta_0$ is different from $\eta=\mathcal{O}(\Delta_0)$ in Remark \ref{remark: sto}, but the effect of tuning $\eta$ in both settings for extreme case is similar. 
\end{remark}

For non-convex functions that satisfy the  $\mu-$PL inequality, we extend the proof of linear convergence by bounding $F(\vx_j) -F^*$ at each step in Theorem \ref{non-convex}.

\begin{theorem}\label{non-convex}
\textbf{(Convergence in non-convex batch setting)} Consider the AdaGrad-Norm Algorithm in the batch setting, suppose that $F(\vx)$ is $L-$smooth and satisfies the $\mu-$PL inequality (i.e. with Assumptions \ref{A1b} and \ref{A2}), and $F^* = \inf_\vx F(\vx)>-\infty$, then

\textbf{Case 1:} If $b_0 > \eta L$,
$\min_{0\leq i \leq T-1}F(\vx_i)-F^* \leq \epsilon$
after
$$ T = \left \lceil \frac{b_0+\frac{2}{\eta}(F(\vx_0)-F^*)}{\mu\eta}\log\frac{F(\vx_0)-F^*}{\epsilon} \right \rceil +1$$
iterations;

\textbf{Case 2:} If $b_0 \leq \eta L$,
$\min_{0\leq i \leq T-1}F(\vx_i)-F^* \leq \epsilon$ after
$$T = \left \lceil \frac{\log(\eta^2L^2/b_0^2)}{\log(1+2\mu\epsilon/ (\eta L)^2)} + \frac{\eta L + (2/\eta)\Delta}{\mu\eta }\log\frac{\Delta}{\epsilon} \right \rceil +1 $$
iterations, where $\Delta = \frac{\eta^2L}{2}(1+2\log\frac{\eta L}{b_0})+ F(\vx_0) - F^*$. 
\end{theorem}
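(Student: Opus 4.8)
The plan is to apply the paper's two-stage template, now tracking $F(\vx_j)-F^*$ rather than a squared distance, with the PL inequality \ref{A1b} replacing strong convexity as the contraction mechanism. Everything rests on the descent inequality for $L$-smooth $F$: substituting the update $\vx_{j+1}=\vx_j-\tfrac{\eta}{b_{j+1}}\nabla F(\vx_j)$ into $F(\vx_{j+1})\le F(\vx_j)+\langle\nabla F(\vx_j),\vx_{j+1}-\vx_j\rangle+\tfrac{L}{2}\|\vx_{j+1}-\vx_j\|^2$ gives
\begin{equation*}
F(\vx_{j+1})\le F(\vx_j)-\frac{\eta}{b_{j+1}}\Bigl(1-\frac{L\eta}{2b_{j+1}}\Bigr)\|\nabla F(\vx_j)\|^2.
\end{equation*}
Since $(b_j)$ is nondecreasing there is a first index $T_0$ with $b_{T_0}>\eta L$ (with $T_0=0$ in Case~1); for $j\ge T_0$ the parenthesis exceeds $\tfrac12$, so $F(\vx_{j+1})-F^*\le F(\vx_j)-F^*-\tfrac{\eta}{2b_{j+1}}\|\nabla F(\vx_j)\|^2$, and the PL bound $\|\nabla F(\vx_j)\|^2\ge2\mu(F(\vx_j)-F^*)$ yields the one-step contraction $F(\vx_{j+1})-F^*\le\bigl(1-\tfrac{\mu\eta}{b_{j+1}}\bigr)(F(\vx_j)-F^*)$, with $1-\tfrac{\mu\eta}{b_{j+1}}\in[0,1)$ because $b_{j+1}>\eta L\ge\eta\mu$.

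Stage~I (Case~2 only) needs two estimates. To bound $T_0$: if $F(\vx_j)-F^*>\epsilon$ for all $j<T_0$ (otherwise $\min_i F(\vx_i)-F^*\le\epsilon$ already), then $\|\nabla F(\vx_j)\|^2>2\mu\epsilon$ by PL, and since $b_j\le\eta L$ in this stage, $b_{j+1}^2=b_j^2+\|\nabla F(\vx_j)\|^2\ge b_j^2\bigl(1+\tfrac{2\mu\epsilon}{(\eta L)^2}\bigr)$; iterating and demanding $b_{T_0}^2>(\eta L)^2$ gives $T_0\le\lceil\log(\eta^2L^2/b_0^2)/\log(1+2\mu\epsilon/(\eta L)^2)\rceil$. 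To bound the function increase before Stage~II: dropping the negative term and using $\|\nabla F(\vx_j)\|^2/b_{j+1}^2=1-b_j^2/b_{j+1}^2\le2\log(b_{j+1}/b_j)$ turns the descent inequality into $F(\vx_{j+1})-F(\vx_j)\le L\eta^2\log(b_{j+1}/b_j)$, which telescopes to $F(\vx_j)-F^*\le F(\vx_0)-F^*+L\eta^2\log(\eta L/b_0)$ for $j\le T_0-1$; the single transition step $j=T_0-1$ contributes at most $\tfrac{L\eta^2}{2}$ (using $\|\nabla F(\vx_{T_0-1})\|^2/b_{T_0}^2\le1$), giving precisely $F(\vx_{T_0})-F^*\le\Delta$ with $\Delta=\tfrac{\eta^2L}{2}(1+2\log\tfrac{\eta L}{b_0})+F(\vx_0)-F^*$.

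Stage~II requires an a priori cap $b_{\max}$ on all $b_j$. Summing $\tfrac{\eta}{2b_{j+1}}\|\nabla F(\vx_j)\|^2\le F(\vx_j)-F(\vx_{j+1})$ over $j=T_0-1,\dots,m-1$, using $b_{j+1}\le b_m$, the identity $\sum_{j=T_0-1}^{m-1}\|\nabla F(\vx_j)\|^2=b_m^2-b_{T_0-1}^2$, and $F(\vx_{T_0-1})-F^*\le\Delta$, gives $\tfrac{\eta}{2}\bigl(b_m-b_{T_0-1}^2/b_m\bigr)\le\Delta$, whence $b_m\le b_{\max}:=\eta L+\tfrac{2}{\eta}\Delta$ for every $m$ (the same computation from $j=0$ handles Case~1 with $\Delta=F(\vx_0)-F^*$ and $b_{\max}=b_0+\tfrac{2}{\eta}(F(\vx_0)-F^*)$). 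Substituting $b_{j+1}\le b_{\max}$ into the one-step contraction and iterating from $T_0$ gives $F(\vx_{T_0+k})-F^*\le\bigl(1-\tfrac{\mu\eta}{b_{\max}}\bigr)^k\Delta\le e^{-k\mu\eta/b_{\max}}\Delta$, so $k=\lceil\tfrac{b_{\max}}{\mu\eta}\log\tfrac{\Delta}{\epsilon}\rceil$ Stage~II steps suffice; combining with $T_0$ and absorbing the two ceilings into a single $+1$ yields the stated iteration counts in both cases.

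The main obstacle is the apparent circularity between $\Delta$ and $b_{\max}$: the cap $b_{\max}$ depends on the function level $\Delta$ reached when Stage~II begins, while a naive bound on $\Delta$ might in turn reference the (a priori unknown) size of $b_{T_0}$. Breaking this requires controlling the Stage~I increase of $F$ purely through the telescoping $\log(b_{j+1}/b_j)$ sum and the crude estimate $\|\nabla F(\vx_{T_0-1})\|^2\le b_{T_0}^2$ at the transition step, so that $\Delta$ emerges as an explicit function of the inputs alone; the $b_{\max}$ bound and the geometric decay are then routine.
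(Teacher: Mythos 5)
Your proposal is correct and takes essentially the same two-stage route as the paper's proof: the growth bound on $b_t$ past $\eta L$ is exactly Lemma \ref{twocases}(b), the Stage-I bound $F(\vx_{T_0})-F^*\le\Delta$ and the cap $b_{\max}\le \eta L+\tfrac{2}{\eta}\Delta$ are the estimates the paper imports from Ward et al.\ (which you re-derive inline via $1-x\le\log(1/x)$ telescoping and $\sum_j\|\nabla F(\vx_j)\|^2=b_m^2-b_{T_0-1}^2$), and Stage II is the same PL-driven contraction $F(\vx_{j+1})-F^*\le(1-\tfrac{\mu\eta}{b_{\max}})(F(\vx_j)-F^*)$. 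The only deviations are cosmetic (starting the contraction at $\vx_{T_0}$ rather than $\vx_{k_0-1}$, and handling the transition step with the crude $\|\nabla F(\vx_{T_0-1})\|^2\le b_{T_0}^2$ bound), and your constants match the theorem statement.
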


Compared with the result in \cite{ward2018adagrad}, our theory---using additional Assumption \ref{A1b}---significantly improves from sublinear convergence rate to linear convergence in Stage II. 
The Assumption \ref{A1b} is a generally well-known condition satisfied by
a wide range of non-convex optimization problems including over-parameterized neural networks \citep{soltanolkotabi2019theoretical, kleinberg2018alternative,li2017convergence,vaswani2018fast,wu2019global}. For the convergence of AdaGrad-Norm in the over-parameterized problem, \cite{wu2019global} proved the same convergence rate as ours. 
The convergence rate in \cite{wu2019global} was tailored to a multi-layer network with two fully connected layers. Our theorem is for general functions, however, with some additional assumptions such as $\mu-$PL inequality.

\section{Two-Stage Framework}\label{sec:proofframe}
We develop the following two-stage proof framework to analyze the
convergence rate starting from any point $x_0$ and any initial stepsize parameter $b_0$ in both the stochastic and batch settings. See the demonstration of the two-stage behavior in Figure \ref{fig:demo}. 

\textbf{Stage I} If we initialize with small $b_0$---i.e. our initial step size is large---we can get a better convergence in Stage I than SGD with constant stepsize. 
In Stage I, $b_0$ grows to some given level, such as $L$ and $\frac{\mu+L}{2}$, which depends on different settings, with deterministic iterations unless the function achieves a global minimal with tolerance $\epsilon$, i.e. $\|\vx-\vx^*\|^2\leq \epsilon $. 
Details are in two-case lemmas: Lemma \ref{highp} and \ref{twocases}. 
By Lemma \ref{boundset}, $\|\vx-\vx^*\|$ is bounded by radius $\Delta = \mathcal{R}(b_0, \|\vx_0-\vx^*\|, C)$ before $b_t$ grows up to $C$, instead of blowing up.

\textbf{Stage II} After Stage I, $b_t$ exceeds a certain threshold deterministically in the batch setting and with high probability in the stochastic setting. Conditioned on this, the update is a contraction in the strongly convex setting, i.e. $\|\vx_{j+1} - \vx^*\|^2 \leq (1 - \mathcal{P}(b_{\max}, \mu, L))\|\vx_j - \vx^*\|^2$, where $\mathcal{P}$ is a function s.t. $0 < \mathcal{P}(b_{\max},\mu,L) < 1 $. $b_{\max}$ is bounded by Lemma \ref{bmax}. 

\begin{figure*}[t]
    \centering
    \includegraphics[width=0.9\textwidth]{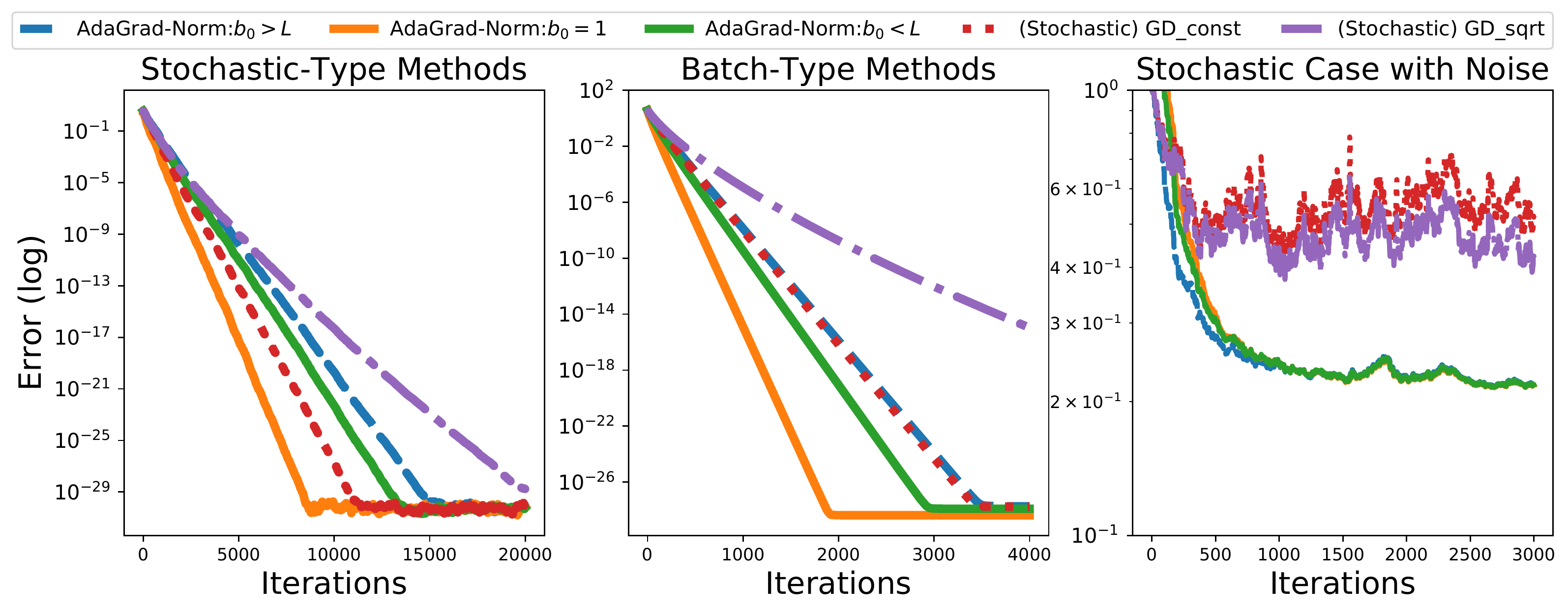}
    \caption{Error in log scale of least square problem: $F(\vx)= \frac{1}{2n} \|\mA \vx - \vy\|^2$. The left and central figures show error $\|\vx_t-\vx^*\|^2$ in the noiseless case. The right figure shows the loss $F(\vx_t)$ in the noisy case.}
    \label{fig:ls}
\end{figure*}

\subsection{Growth of $b_t$ in Stage I}
We introduce some lemmas that are critical in the proof of the growth of $b_t$ in Stage I in the section. Note that in the stochastic setting, RUIG is a sufficient condition for $b_t$'s growth in Stage I to achieve a certain threshold with high probability, so the corresponding two-case growth of $b_t$ is provided in Lemma \ref{highp}. Detailed proofs are provided in Appendix \ref{C}.

\begin{lemma}\label{twocases}
\textbf{(Two-case lower bound for $b_N$ in the batch setting)}
For fixed $\epsilon \in (0,1)$ and $C$, consider AdaGrad-Norm in the batch setting to minimize the  objective function $F(\vx)$, then
\begin{enumerate}[label=(\alph*)]
    \item If $F(\vx)$ is $\mu-$strongly convex, then after $N = \left \lceil \frac{\log(C^2/b_0^2)}{\log(1+\mu^2\epsilon/ C^2)} \right \rceil + 1$ iterations, either $ b_N > C $ or $\min_{0\leq i \leq N-1} \|\vx_i - \vx^*\|^2 \leq \epsilon$, where $\vx^* = \arg\min F(\vx)$;
    
    \item If $F(\vx)$ is a non-convex function satisfying $\mu-$PL inequality, then after $N = \left \lceil \frac{\log(C^2/b_0^2)}{\log(1+2\mu\epsilon/ C^2)} \right \rceil$ iterations, either $b_N>C$ or $\min_{0\leq i \leq N-1} F(\vx_i) - F^* \leq \epsilon$, where $F^* = \inf_\vx F(\vx) > -\infty$. 
\end{enumerate}
\end{lemma}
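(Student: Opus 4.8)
The plan is to argue by contradiction, exploiting that $b_t^2$ grows by a fixed multiplicative factor at every step at which the current iterate is still $\epsilon$-suboptimal. Suppose both alternatives in the conclusion fail, i.e.\ $b_N \le C$ and, in case (a), $\|\vx_i - \vx^*\|^2 > \epsilon$ for every $i \in \{0,1,\dots,N-1\}$ (in case (b), $F(\vx_i) - F^* > \epsilon$ for every such $i$). Since the square form of the update gives $b_{i+1}^2 = b_i^2 + \|\nabla F(\vx_i)\|^2 \ge b_i^2$, the sequence $(b_i)$ is nondecreasing, so $b_i \le b_N \le C$ for all $0 \le i \le N$.

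The next step is to lower bound the gradient norm along the trajectory. In case (a), Assumption~\ref{A1a} applied with $\vy = \vx^*$ together with $\nabla F(\vx^*) = \vzero$ (valid since a differentiable strongly convex $F$ has its unique minimizer as its only critical point) and Cauchy--Schwarz gives $\|\nabla F(\vx)\| \ge \mu\|\vx - \vx^*\|$, hence $\|\nabla F(\vx_i)\|^2 \ge \mu^2\|\vx_i - \vx^*\|^2 > \mu^2\epsilon$ for $i \le N-1$. In case (b), Assumption~\ref{A1b} directly yields $\|\nabla F(\vx_i)\|^2 \ge 2\mu(F(\vx_i) - F^*) > 2\mu\epsilon$. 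Substituting into the square form and using $b_i \le C$,
$$ b_{i+1}^2 \;=\; b_i^2 + \|\nabla F(\vx_i)\|^2 \;>\; b_i^2\,\Bigl(1 + \frac{\mu^2\epsilon}{C^2}\Bigr) $$
in case (a) (and the same with $\mu^2$ replaced by $2\mu$ in case (b)). Telescoping over $i = 0,\dots,N-1$ gives $b_N^2 > b_0^2\,(1 + \mu^2\epsilon/C^2)^N$; combined with $b_N \le C$ this forces $C^2/b_0^2 > (1 + \mu^2\epsilon/C^2)^N$, i.e.\ $N < \log(C^2/b_0^2)/\log(1 + \mu^2\epsilon/C^2)$. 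This contradicts the stated choice of $N$, since $N \ge \lceil \log(C^2/b_0^2)/\log(1 + \mu^2\epsilon/C^2)\rceil \ge \log(C^2/b_0^2)/\log(1 + \mu^2\epsilon/C^2)$; the ceiling (and the spare $+1$ in (a)) guarantee the contradiction is strict. The computation for (b) is identical after replacing $\mu^2$ with $2\mu$ throughout.

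Two degenerate situations, implicitly excluded above, must be dispatched separately. If $b_0 \ge C$, monotonicity already gives $b_N \ge b_0 \ge C$, and equality $b_N = C$ can hold only if every queried gradient vanished; but $\nabla F(\vx_i) = \vzero$ forces $\vx_i = \vx^*$ in case (a) and $F(\vx_i) = F^*$ in case (b), so the second alternative holds and we are done. (The subcase $\log(C^2/b_0^2) \le 0$ is subsumed here.) Otherwise $b_0 < C$ and the main argument applies. I do not expect a genuine obstacle: the lemma is elementary, and the only points demanding care are keeping the inequality chain strict so that the ceiling in the definition of $N$ yields an honest contradiction, and correctly treating the boundary cases where a gradient is exactly zero or $b_0$ already exceeds the target $C$.
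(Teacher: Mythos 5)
Your proof is correct and follows essentially the same route as the paper's: argue by contradiction, lower bound $\|\nabla F(\vx_i)\|^2$ by $\mu^2\epsilon$ (via strong convexity plus Cauchy--Schwarz) or $2\mu\epsilon$ (via PL), and use the multiplicative growth $b_{i+1}^2 \ge b_i^2(1+\mu^2\epsilon/C^2)$ with $b_i \le C$ to contradict the choice of $N$. Your extra care with the degenerate cases ($b_0 \ge C$, vanishing gradients, strictness of the inequalities) only tightens what the paper dispatches in one line.
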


Lemma \ref{highp} and \ref{twocases} depict the two-stage growth of $b_t$, which is less and over certain thresholds, in stochastic and batch settings, respectively.
\begin{remark}
In Lemma \ref{highp} and \ref{twocases}, we provide the worst cases for the growth of $b_t$. 
However, $b_t$ actually grows very quickly in practice, especially in the stochastic setting. 
For Lemma \ref{twocases}, since $\log(1+x) \sim x$, for $x = \mu^2\epsilon/C^2$ small, $N \sim \frac{C^2}{\mu^2\epsilon} \log\frac{C^2}{b_0}$.
\end{remark}

\begin{lemma}\label{boundset}
\textbf{(Upper bound for $\|\vx_{J-1} - \vx^*\|^2 $)} 
For any fixed $C$ and $\eta$, consider AdaGrad-Norm in either stochastic or batch setting with $ G_j(\vx^*) = 0, \forall j$ (stochastic: $\nabla f_j(\vx^*)=\mathbf{0}$; batch: $\nabla F(\vx^*) =\mathbf{0}$) using update rule $b_{j+1}^2 = b_j^2 + \|G_j\|^2$. Suppose that $J$ is the first index s.t. $ b_J > C$, then $$\|\vx_{J-1} - \vx^*\|^2 \leq \|\vx_0-\vx^*\|^2 + \eta^2 (\log (C^2/b_0^2) + 1)$$
\end{lemma}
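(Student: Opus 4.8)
The plan is to telescope the squared-distance recursion along the trajectory up to index $J-1$ and control the accumulated error by the logarithmic growth of $b_j^2$. First I would expand one step of the update: since $\vx_{j+1} = \vx_j - \frac{\eta}{b_{j+1}} G_j$ and $G_j(\vx^*) = \mathbf{0}$ (so in particular $\langle G_j, \vx_j - \vx^*\rangle$ has no sign guaranteed a priori, but $G_j$ at $\vx^*$ vanishes), I write
\begin{align*}
\|\vx_{j+1} - \vx^*\|^2 &= \|\vx_j - \vx^*\|^2 - \frac{2\eta}{b_{j+1}} \langle G_j, \vx_j - \vx^*\rangle + \frac{\eta^2}{b_{j+1}^2}\|G_j\|^2.
\end{align*}
The key point is that for $j \le J-1$ we have $b_{j+1} \le C$ is \emph{not} what we need; rather we need a bound independent of the inner-product term. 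I would instead use convexity-free reasoning: drop the (possibly helpful) middle term only if it is nonpositive, otherwise absorb it. The cleaner route, which I expect the authors take, is to note that in the regime of Stage I the cross term can be bounded or simply that we only need an \emph{upper} bound, so I bound $-\frac{2\eta}{b_{j+1}}\langle G_j,\vx_j-\vx^*\rangle$ using that $\vx^*$ is a common stationary point together with smoothness/convexity as available; but since Lemma \ref{boundset} is stated with only $G_j(\vx^*)=\mathbf 0$, the honest approach is to keep the cross term and telescope it against the $b_j$ growth as well.

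Concretely, the step I would actually carry out: sum the identity from $j=0$ to $j=J-2$, giving
\[
\|\vx_{J-1}-\vx^*\|^2 = \|\vx_0-\vx^*\|^2 - 2\eta\sum_{j=0}^{J-2}\frac{\langle G_j,\vx_j-\vx^*\rangle}{b_{j+1}} + \eta^2\sum_{j=0}^{J-2}\frac{\|G_j\|^2}{b_{j+1}^2}.
\]
For the last sum I use the standard AdaGrad telescoping estimate: since $b_{j+1}^2 - b_j^2 = \|G_j\|^2$ and $b_{j+1}^2 \le C^2$ before index $J$, one has $\frac{\|G_j\|^2}{b_{j+1}^2} = \frac{b_{j+1}^2-b_j^2}{b_{j+1}^2} \le \log\frac{b_{j+1}^2}{b_j^2}$ (using $1-x\le -\log x$ with $x=b_j^2/b_{j+1}^2$), so the sum telescopes to $\log\frac{b_{J-1}^2}{b_0^2} \le \log\frac{C^2}{b_0^2}$. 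This already produces the $\eta^2\log(C^2/b_0^2)$ term; the extra "$+1$" must come from handling the final index or from a slightly lossy bound $b_J^2 \le C^2 + \|G_{J-1}\|^2$ type argument — I would check whether one more term $\frac{\|G_{J-1}\|^2}{b_J^2} \le 1$ needs to be carried, which gives exactly the additive $\eta^2$.

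The main obstacle is the cross term $-2\eta\sum \langle G_j,\vx_j-\vx^*\rangle / b_{j+1}$: without convexity of the $f_j$ it is not obviously nonpositive, so the lemma as stated must be relying on an implicit use of Assumption \ref{A4} (convexity and almost-stationarity) in the stochastic case, or of convexity of $F$ in the batch case, to conclude $\langle G_j, \vx_j - \vx^*\rangle \ge 0$ and hence drop this term. I would therefore invoke convexity (each $f_j$ convex with $\nabla f_j(\vx^*)=\mathbf 0$ implies $\langle \nabla f_j(\vx_j), \vx_j-\vx^*\rangle \ge f_j(\vx_j)-f_j(\vx^*)\ge 0$; in the batch case $\langle \nabla F(\vx_j),\vx_j-\vx^*\rangle\ge 0$ by convexity) to kill the middle sum, leaving
\[
\|\vx_{J-1}-\vx^*\|^2 \le \|\vx_0-\vx^*\|^2 + \eta^2\log\frac{C^2}{b_0^2} + \eta^2,
\]
which is the claim. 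The remaining work is purely the bookkeeping of indices and the elementary inequality $1-x \le -\log x$; I expect no further difficulty there.
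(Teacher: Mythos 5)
Your proposal is correct and follows the same skeleton as the paper's proof: expand one step, discard the cross term, telescope, and bound $\sum_{j=0}^{J-2}\eta^2\|G_j\|^2/b_{j+1}^2$ logarithmically using $b_{J-1}\le C$. Two points of comparison. For the cross term, you are right that the lemma implicitly leans on convexity and stationarity at $\vx^*$: the paper bounds $-\tfrac{2\eta}{b_{J-1}}\langle G_{J-2},\vx_{J-2}-\vx^*\rangle$ by $-\tfrac{2\eta}{b_{J-1}L}\|G_{J-2}-\nabla f_{\xi_{J-2}}(\vx^*)\|^2$ via co-coercivity (Lemma \ref{co-coercivity}) together with Assumption \ref{A4}, and then drops this nonpositive quantity---functionally the same as your direct monotonicity argument $\langle \nabla f_j(\vx_j)-\nabla f_j(\vx^*),\vx_j-\vx^*\rangle\ge 0$, so your simpler route suffices (and, as you note, the lemma is only invoked in the convex settings, so no issue arises for the PL case). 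For the sum, the paper writes $\|G_j\|^2/b_{j+1}^2 \le \frac{\|G_j\|^2/b_0^2}{\sum_{l\le j}\|G_l\|^2/b_0^2}$ and applies the integral lemma (Lemma \ref{log}), which is where the additive $+1$ comes from; your alternative $\frac{b_{j+1}^2-b_j^2}{b_{j+1}^2}\le\log\frac{b_{j+1}^2}{b_j^2}$ telescopes directly to $\log(C^2/b_0^2)$ with no extra term, so your uncertainty about needing a trailing $\|G_{J-1}\|^2/b_J^2\le 1$ is moot---your bound is in fact slightly tighter than the stated one and implies it a fortiori. No gap.
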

Lemma \ref{boundset} gives an upper bound on the distance between the snapshot before contraction and the optimal solution $\vx^*$ i.e. $\|\vx_{J-1} - \vx^*\|$. It guarantees that the extreme distance to $\vx^*$ is always bounded during AdaGrad-Norm updates, even without projection, or additional assumption, for example, $ \forall t, \|\vx_t-\vx^*\|_2\leq D$, for some constant $D$, in Adam \citep{kingma2014adam} and AMSGrad \citep{j.2018on}.

\subsection{Upper Bounds on $b_t$ in Stage II}
In Stage II, we focus on the maximum value that $b_t$ can obtain during the optimization process.
\begin{lemma}\label{bmax}
\textbf{(Upper bound for $b_{\max}$)}
Consider AdaGrad-Norm in either stochastic or batch setting with $ G_j(\vx^*) =\mathbf{0}, \forall j$ (stochastic: $\nabla f_j(\vx^*)=\mathbf{0}$; batch: $\nabla F(\vx^*) =\mathbf{0}$), for any fixed $C\geq \eta L$, if $J$ is the first index s.t. $ b_J > C$, then $b_{\max} \triangleq \max_{l\geq 0}b_{J+l}$ is upper bounded by $$b_{\max}\leq  C + (L/\eta) (\|\vx_0-\vx^*\|^2 + \eta^2 (\log (C^2/b_0^2) + 1))$$
\end{lemma}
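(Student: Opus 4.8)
The plan is to isolate one scalar quantity, the weighted gradient sum $S := \sum_{j\ge J-1}\|G_j\|^2/b_{j+1}$, and to exploit that it plays a double role: on one hand it bounds the total growth of $b_t$ past $b_{J-1}$, and on the other hand it is itself controlled by a one-step potential decrease, namely by $\|\vx_{J-1}-\vx^*\|^2$. First I would record the elementary ``solution form'' estimate: from $b_{j+1}^2 = b_j^2 + \|G_j\|^2$ we get $b_{j+1}-b_j = \|G_j\|^2/(b_j+b_{j+1}) \le \|G_j\|^2/b_{j+1}$ since $b_j\ge 0$. Because $(b_t)$ is nondecreasing, summing this over $j\ge J-1$ telescopes the left-hand side to $b_{\max}-b_{J-1}$, giving $b_{\max}\le b_{J-1}+S$. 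Since $J$ is the first index with $b_J>C$ (and in the regime where the lemma is applied $b_0\le C$, so $J\ge 1$), we have $b_{J-1}\le C$.

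Next I would bound $S$. Expanding $\|\vx_{j+1}-\vx^*\|^2 = \|\vx_j-\vx^*\|^2 - \tfrac{2\eta}{b_{j+1}}\langle G_j,\vx_j-\vx^*\rangle + \tfrac{\eta^2}{b_{j+1}^2}\|G_j\|^2$ and invoking co-coercivity (Baillon--Haddad) of the convex, $L$-smooth component being differentiated --- $f_{\xi_j}$ in the stochastic case, $F$ in the batch case --- together with $G_j(\vx^*)=\mathbf{0}$, i.e. $\langle G_j,\vx_j-\vx^*\rangle \ge \tfrac1L\|G_j\|^2$, yields $\|\vx_{j+1}-\vx^*\|^2 \le \|\vx_j-\vx^*\|^2 - \tfrac{\eta}{b_{j+1}}\|G_j\|^2\bigl(\tfrac2L-\tfrac{\eta}{b_{j+1}}\bigr)$. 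For every $j\ge J-1$ we have $b_{j+1}\ge b_J>C\ge\eta L$, hence $\tfrac{\eta}{b_{j+1}}\le\tfrac1L$ and the parenthesis is at least $\tfrac1L$; therefore $\|\vx_{j+1}-\vx^*\|^2 \le \|\vx_j-\vx^*\|^2 - \tfrac{\eta}{Lb_{j+1}}\|G_j\|^2$. Telescoping over $j\ge J-1$ and discarding the nonnegative tail term gives $S\le\tfrac L\eta\|\vx_{J-1}-\vx^*\|^2$.

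Combining the two displays, $b_{\max}\le b_{J-1}+\tfrac L\eta\|\vx_{J-1}-\vx^*\|^2 \le C+\tfrac L\eta\|\vx_{J-1}-\vx^*\|^2$, and then Lemma~\ref{boundset} (applicable since $G_j(\vx^*)=\mathbf{0}$ and $b_{J-1}\le C<b_J$) bounds $\|\vx_{J-1}-\vx^*\|^2\le\|\vx_0-\vx^*\|^2+\eta^2(\log(C^2/b_0^2)+1)$, which is exactly the claimed inequality. In the stochastic setting $J$ is a random index, but every inequality above is pathwise: co-coercivity is a deterministic property of each $f_i$ and $\nabla f_i(\vx^*)=\mathbf{0}$ holds almost surely (Assumption~\ref{A4}), so one may simply condition on $\{J=m\}$ for each $m$; no expectations are taken and there is no probabilistic loss.

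The only real subtlety is recognizing the dual role of $S$ and choosing the split ``$\tfrac2L$ versus $\tfrac{\eta}{b_{j+1}}$'' so that a non-vanishing residual $\tfrac1L$ survives rather than a factor that degenerates as $b_{j+1}\downarrow\eta L$; a naive route that first establishes the Stage~II contraction and then sums $\sum_j\|G_j\|^2/b_{j+1}$ geometrically would lose an extra $1/p$ factor and not reproduce the clean constant. Once the co-coercive one-step inequality is in hand, the remainder is bookkeeping plus the direct appeal to Lemma~\ref{boundset}; the degenerate case $J=0$ (i.e. $b_0>C$) does not occur in the applications and can be handled separately if desired.
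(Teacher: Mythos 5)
Your proposal is correct and follows essentially the same route as the paper's own proof: the co-coercivity step with $\nabla f_{\xi_j}(\vx^*)=\mathbf{0}$ giving the one-step decrease $\|\vx_{j+1}-\vx^*\|^2\le\|\vx_j-\vx^*\|^2-\frac{\eta}{L}\frac{\|G_j\|^2}{b_{j+1}}$ once $b_{j+1}>\eta L$, the telescoped bound on $\sum_j\|G_j\|^2/b_{j+1}$, the solution-form estimate $b_{\max}\le b_{J-1}+\sum_j\|G_j\|^2/b_{j+1}$ with $b_{J-1}\le C$, and the final appeal to Lemma~\ref{boundset}. Your explicit remarks on the pathwise nature of the argument in the stochastic setting and on the degenerate case $J=0$ are sensible additions but do not change the substance.
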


Lemma \ref{bmax} indicates that even though $b_t^2$ increases due to adding $\|G_t\|^2$ to $b_t^2$ at each iteration, it is always upper bounded by $b_{\max}$. The asymptotic behavior of the stepsize (i.e. $\frac{\eta}{b_t}$) is $\mathcal{O}(\frac{1}{\sqrt{t}})$ at first, and it approaches to a constant in the end as $\vx_t \rightarrow \vx^*$, which also explains the auto-tuning nature of AdaGrad-Norm.

After $b_t$ exceeds certain thresholds like $\eta L/2$, the following Lemma \ref{descent} shows that AdaGrad-Norm is indeed a descent algorithm, i.e. $\|\vx_t-\vx^*\|^2$ will not increase subsequently, so we can take $\vx_T$ as $\vx_{best}$ in Stage II.

\begin{lemma}\label{descent}
\textbf{(Descent lemma for $\|\vx_t-\vx^*\|^2$)}
Once $b_j > \eta L/2$, Algorithm \ref{SGDSC} is a descent algorithm for the error $\|\vx_t - \vx^*\|^2$. Furthermore, if $\|\vx_{j-1} -\vx^*\|^2 \leq \Delta$, then $\forall l\geq 0$, $x_{j-1+l}$ will stay in the ball centering at $\vx^*$ with radius $\sqrt{\Delta}$, i.e. $\|\vx_{j-1+l}-\vx^*\|^2 \leq \Delta$.
\end{lemma}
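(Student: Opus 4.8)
The plan is to work directly with the one-step recursion for the error. From the AdaGrad-Norm update $\vx_{t+1} = \vx_t - \frac{\eta}{b_{t+1}} G_t$, expanding the squared norm gives
\begin{align*}
\|\vx_{t+1} - \vx^*\|^2 = \|\vx_t - \vx^*\|^2 - \frac{2\eta}{b_{t+1}}\langle G_t,\, \vx_t - \vx^*\rangle + \frac{\eta^2}{b_{t+1}^2}\|G_t\|^2 ,
\end{align*}
so the whole lemma reduces to lower bounding the cross term $\langle G_t, \vx_t - \vx^*\rangle$ by a large enough multiple of $\|G_t\|^2$.

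The first step is to establish the co-coercivity bound $\langle G_t, \vx_t - \vx^*\rangle \ge \frac{1}{L}\|G_t\|^2$. In the stochastic setting $G_t = \nabla f_{\xi_t}(\vx_t)$; each $f_i$ is convex with $\nabla f_i(\vx^*) = \mathbf{0}$ (Assumption~\ref{A4}) and $L_i$-smooth (Assumption~\ref{A2}), so the Baillon--Haddad co-coercivity of $\nabla f_i$, evaluated at $\vx_t$ and $\vx^*$, yields $\langle \nabla f_i(\vx_t), \vx_t - \vx^*\rangle \ge \frac{1}{L_i}\|\nabla f_i(\vx_t)\|^2 \ge \frac{1}{L}\|\nabla f_i(\vx_t)\|^2$; crucially this holds for (almost) every index $i$, hence pathwise in $\xi_t$, so no expectation is needed. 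In the batch setting $G_t = \nabla F(\vx_t)$ and the same inequality holds for $F$, which is convex by Assumption~\ref{A1a}, $L$-smooth by Assumption~\ref{A2}, and satisfies $\nabla F(\vx^*) = \mathbf{0}$.

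Substituting this into the recursion gives
\begin{align*}
\|\vx_{t+1} - \vx^*\|^2 \le \|\vx_t - \vx^*\|^2 - \frac{\eta}{b_{t+1}}\Bigl(\frac{2}{L} - \frac{\eta}{b_{t+1}}\Bigr)\|G_t\|^2 .
\end{align*}
Because $b_{t+1}^2 = b_t^2 + \|G_t\|^2 \ge b_t^2$, the sequence $(b_t)$ is non-decreasing, so once $b_j > \eta L/2$ we have $b_{t+1} \ge b_j > \eta L/2$ for every step $t \to t+1$ with $t \ge j-1$ (the step from $\vx_t$ to $\vx_{t+1}$ is scaled by $b_{t+1}$, so the step $j-1 \to j$ already enjoys $b_j > \eta L/2$). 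Then $\frac{2}{L} - \frac{\eta}{b_{t+1}} > 0$, the correction term is non-positive, and $\|\vx_{t+1} - \vx^*\|^2 \le \|\vx_t - \vx^*\|^2$, i.e. the error is non-increasing from index $j-1$ onward --- this is the descent claim. The confinement statement is then an immediate induction on $l$: from $\|\vx_{j-1}-\vx^*\|^2 \le \Delta$ and this monotonicity, $\|\vx_{j-1+l}-\vx^*\|^2 \le \|\vx_{j-1}-\vx^*\|^2 \le \Delta$ for all $l \ge 0$.

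I do not expect a genuine obstacle here; the proof is one inequality plus an induction. The two points needing care are (i) invoking the \emph{smoothness}-based co-coercivity bound $\langle \nabla g(\vx),\vx-\vx^*\rangle \ge \frac1L\|\nabla g(\vx)\|^2$ rather than the convexity-only bound $\ge 0$ --- it is precisely the $\|G_t\|^2$ factor it produces that must absorb the quadratic term $\frac{\eta^2}{b_{t+1}^2}\|G_t\|^2$, which is what makes the descent threshold come out as exactly $\eta L/2$ --- and (ii) bookkeeping of the index on $b$, since the denominator in the step producing $\vx_{t+1}$ is $b_{t+1}$ rather than $b_t$.
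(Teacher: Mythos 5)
Your proof is correct and follows essentially the same route as the paper's: expand the one-step recursion, lower bound the cross term via co-coercivity of the $L$-smooth (component) gradient together with $\nabla f_{\xi}(\vx^*)=\mathbf{0}$ from Assumption~\ref{A4}, and observe that the resulting factor $\frac{\eta}{b_{j}}\bigl(\frac{2}{L}-\frac{\eta}{b_{j}}\bigr)$ is non-negative once $b_j>\eta L/2$. Your explicit remarks on the monotonicity of $b_t$ and the induction for the confinement claim are details the paper leaves implicit, but the argument is the same.
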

\vspace{-0.3cm}
\begin{figure*}[ht]
     \centering
     \begin{minipage}{0.32\textwidth}
         \centering
         \includegraphics[width=\textwidth]{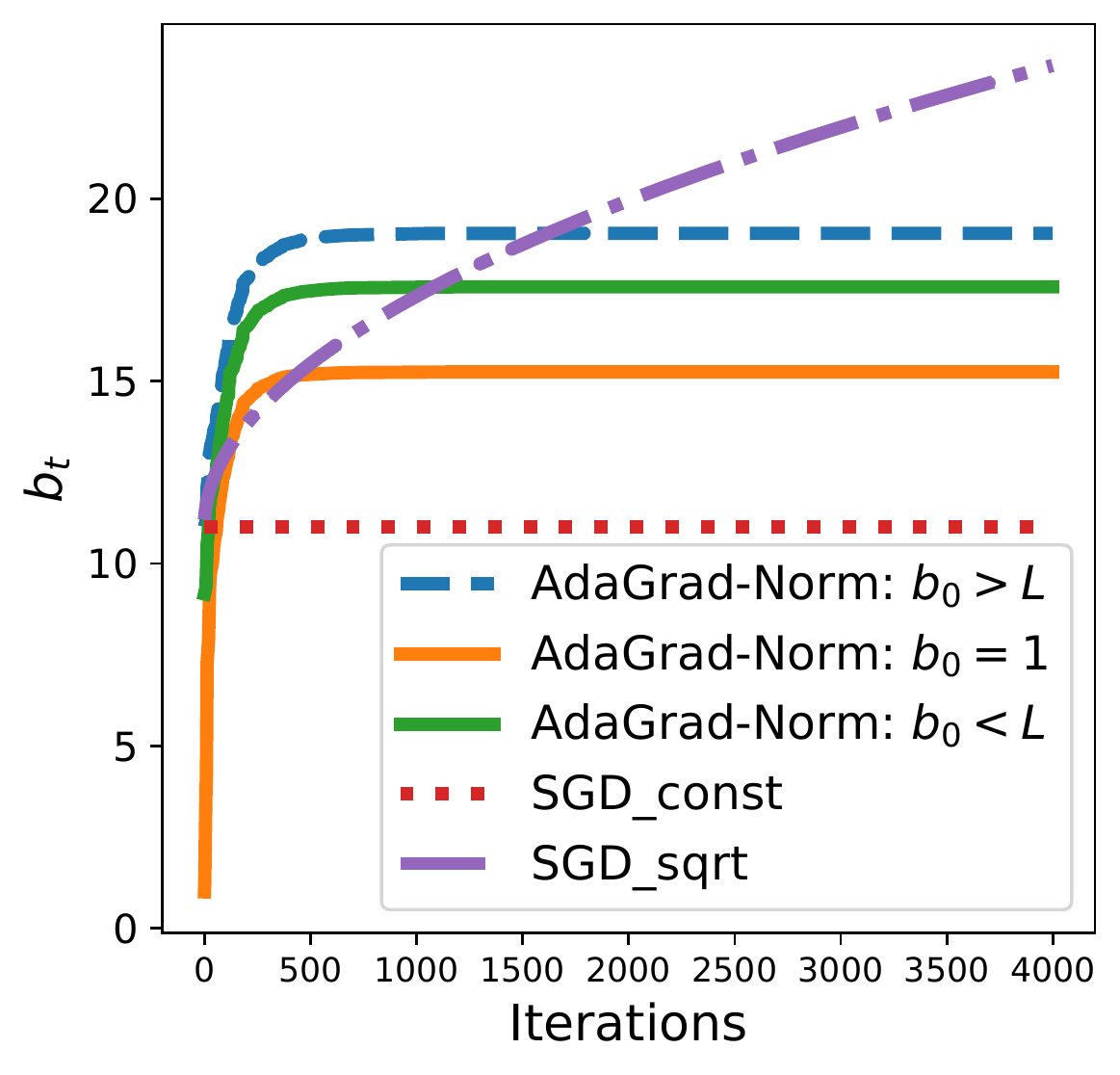}
         \caption{Growth of $b_t$ in stochastic setting using different algorithms: AdaGrad-Norm and SGD.}
         \label{fig:bgrow}
     \end{minipage} 
     \hfill
     \begin{minipage}{0.33\textwidth}
         \centering
         \includegraphics[width=\textwidth]{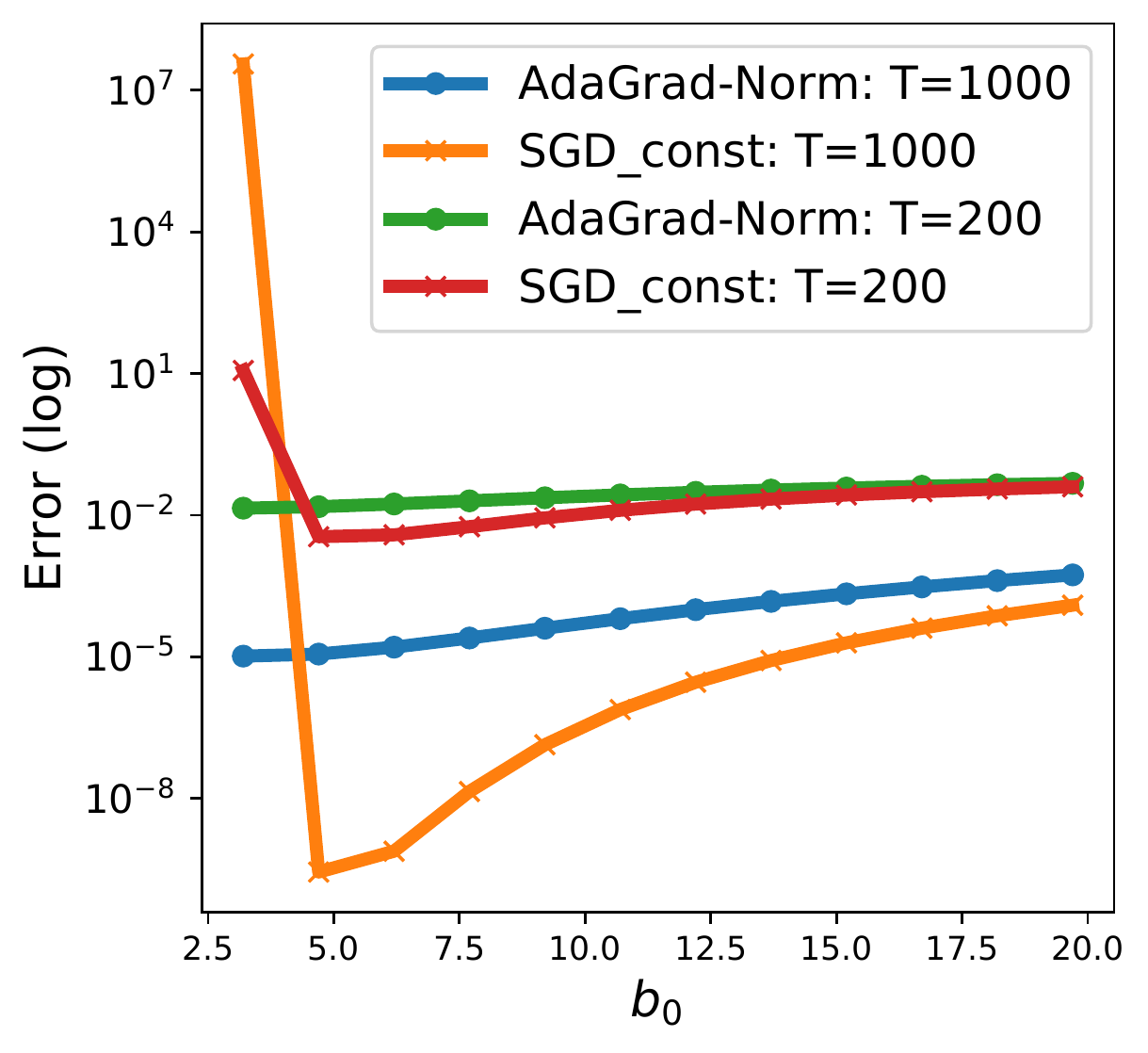}
         \caption{Robustness of AdaGrad-Norm: $\|\vx_T-\vx^*\|^2$ in log scale with different choices of $b_0$.}
         \label{fig:anyb0}
     \end{minipage}
       \hfill
     \begin{minipage}{0.33\textwidth}
         \centering
         \includegraphics[width=\textwidth]{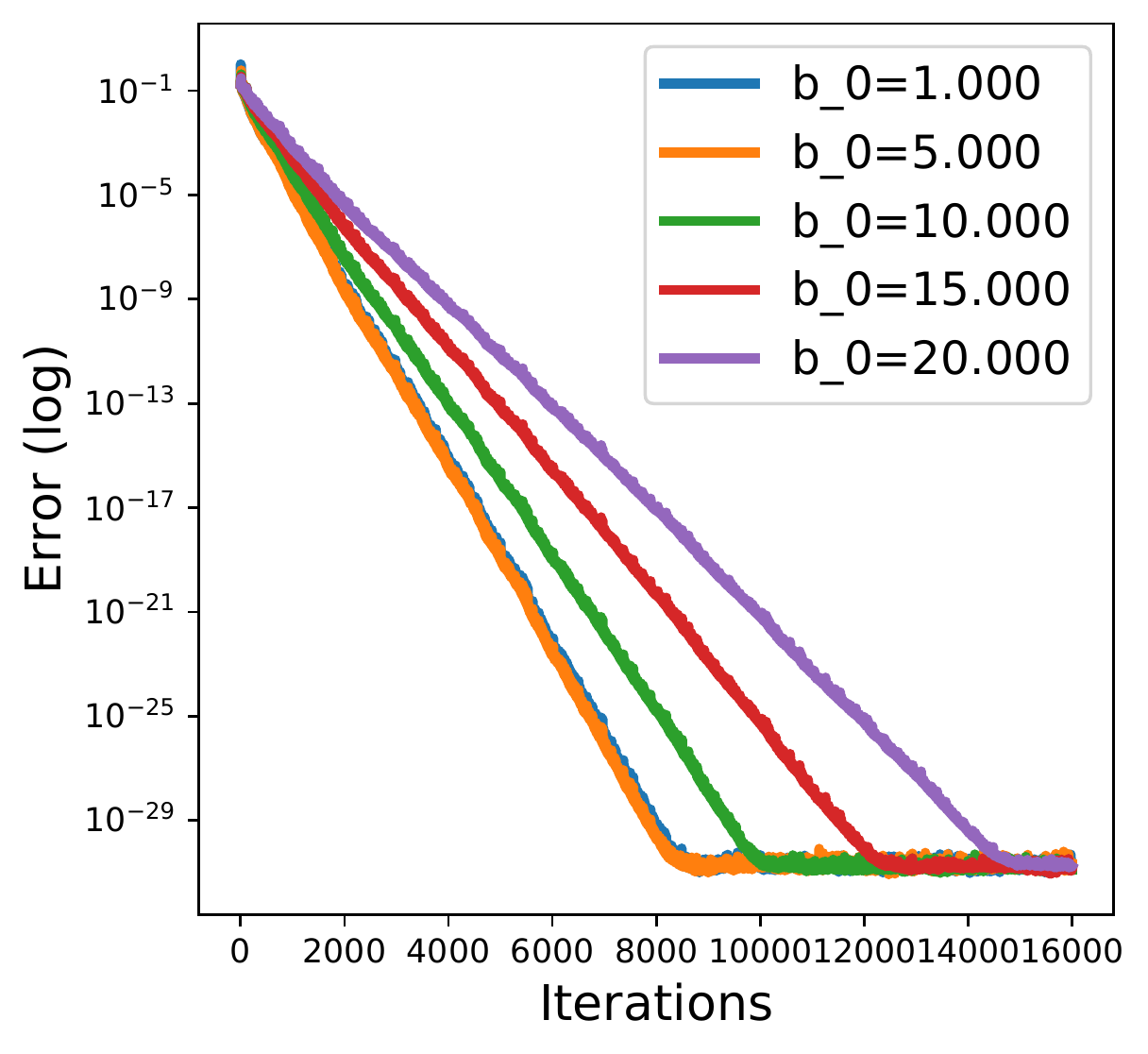}
         \caption{Comparison of different initial stepsizes ($\eta_0 = \frac{\eta}{b_0}$) of AdaGrad-Norm with $\|\vx_t-\vx^*\|^2$ in log scale.}
         \label{fig:b0linear}
     \end{minipage}
\end{figure*}

\section{Numerical Experiments} \label{sec:exp}
In this section, we present numerical results to compare AdaGrad-Norm and (stochastic) Gradient Descent methods with fixed stepsize $\eta_j = \frac{1}{b_0}$ (GD\_const or SGD\_const) or square-root decaying stepsize $\eta_j = \frac{1}{b_0 + 0.2\sqrt{j}}$ (GD\_sqrt or SGD\_sqrt), in the stochastic and batch settings, respectively.

Consider the least square problem from (\ref{eq:linear}).
The Lipschitz constants are $L_i = \|\va_i\|^2$ and $\bar{L} = \sum_{i=1}^n \frac{1}{n}\|\va_i\|^2 = \frac{1}{n}\|\mA\|_F^2$, respectively. In the experiments, we setup the noiseless problem with a $1000 \times 20$ random matrix $\mA$ and a vector $\vx^*$ with $\vy = \mA \vx^*$.

We first illustrate the linear convergence and robustness in the noiseless cases. Figure \ref{fig:ls} verifies the expected linear convergence of AdaGrad-Norm in the stochastic and batch settings. In order to compare the convergence rates of AdaGrad-Norm with vanilla (S)GD, we choose $\eta=1$ and a $b_0 > \sup_i L_i \triangleq L$ to prevent (S)GD from blowing up. AdaGrad-Norm with $b_0 = 1$, $b_0<L$ and $b_0 > L$ have similar linear convergence as (S)GD\_const, up to a constant difference, while (S)GD\_sqrt converges more slowly.

Figure \ref{fig:ls} shows that even simply setting $b_0 = 1$, AdaGrad-Norm has a better convergence rate than that of non-adaptive (S)GD, since AdaGrad-Norm takes a big stepsize when $\vx$ is far away from $\vx^*$, and then very small stepsize around $\vx^*$ when $b_j$ grows to a value $b_{\max}$. Eventually, $b_j$ converges to a constant value since $\|G_j\|\rightarrow0$ or $\|\nabla F(\vx_j)\|\rightarrow 0$ as $\vx_j \rightarrow \vx^*$. In the noisy case (Figure \ref{fig:ls} Right), AdaGrad-Norm has a similar convergence rate up to a constant factor and achieves a better approximation of $\vx^*$, with less vibrations compared to SGD\_const or SGD\_sqrt.

Figure \ref{fig:bgrow} shows that the growth of $b_t$ in AdaGrad-Norm is similar to SGD\_sqrt at first, but after exceeding the threshold and approximately reaching $b_{\max}$, $b_t$'s growth is similar to SGD\_const. Figure \ref{fig:anyb0} and \ref{fig:b0linear} show that the linear convergence rates of AdaGrad-Norm are more robust to the choice of initial stepsize $1/b_0$ compared to SGD\_const. The error $\|\vx_T-\vx^*\|^2$ of AdaGrad-Norm after $T$ iterations remains stable for a relatively arbitrary range of $b_0$ while the error of SGD\_const blows up at first and then decreases significantly when $b_0$ approaches to $L$ since SGD\_const is sensitive to the choice of stepsize.

The result of experiment on one hidden layer over-parameterized neural net and experimental details are in Appendix \ref{E2}. Figure \ref{fig:over} shows that (1) AdaGrad-Norm converges faster than GD\_const and almost linearly; (2) The gradients of the first few iterations are often big enough to accumulate to exceed $\eta L$, which empirically verifies Assumption \ref{A3}.

\section{Discussions} \label{sec:diss}
In this work, we propose the notion of RUIG to measure the uniform lower bound of gradients with respect to $\|\vx-\vx^*\|^2$ in a restricted region. We propose a two-stage framework and use it to prove the non-asymptotic convergence rates for AdaGrad-Norm starting from any initialization and without knowing the smooth or strongly convex parameter a priori. In the stochastic setting, we prove linear convergence with high probability under strongly convex and RUIG assumptions, without requiring a uniform bound on $\mathds{E}\|G_t\|^2$. In the batch setting, we prove deterministic linear convergence for strongly convex functions and non-convex functions with PL inequality. Both theoretical and numerical results validate the robustness of AdaGrad-Norm starting at any initial stepsize.

There are still some open problems to be solved: First, drawing on \cite{Needell2016}, we may improve $L = \sup_i L_i$ in convergence rates to $\bar{L} =\frac{1}{n}\sum_i L_i$ with importance sampling. Second, extending Assumption \ref{A4} to the weak growth condition $\mathds{E}_{\xi_t}[\|\nabla f_{\xi_t}(\vx_t)\|^2]\leq M\|\nabla F(\vx_t)\|^2 + \sigma^2$ in \cite{Cevher2019} may lead to a more general result. 
Third, since AdaGrad-Norm is fundamentally related to both Adam and AMSGrad, extending our theoretical guarantees to the two algorithms is an exciting direction for future research.

\section*{Acknowledgments}
We thank anonymous reviewers, Amelia Henriksen, Jiayi Wei, and Thomas Herben for helpful comments, which greatly improved the manuscript. We thank Purnamrita Sarkar for helpful discussion. This project was supported in part by AFOSR MURI Award N00014-17-S-F006. 

\bibliographystyle{unsrtnat}
\bibliography{adagrad}

\begin{thebibliography}{48}
\providecommand{\natexlab}[1]{#1}
\providecommand{\url}[1]{\texttt{#1}}
\expandafter\ifx\csname urlstyle\endcsname\relax
  \providecommand{\doi}[1]{doi: #1}\else
  \providecommand{\doi}{doi: \begingroup \urlstyle{rm}\Url}\fi

\bibitem[Lojasiewicz(1963)]{lojasiewicz1963propriete}
Stanislaw Lojasiewicz.
\newblock Une propri{\'e}t{\'e} topologique des sous-ensembles analytiques
  r{\'e}els.
\newblock \emph{Les {\'e}quations aux d{\'e}riv{\'e}es partielles},
  117:\penalty0 87--89, 1963.

\bibitem[Polyak(1963)]{polyak1963gradient}
Boris~Teodorovich Polyak.
\newblock Gradient methods for minimizing functionals.
\newblock \emph{Zhurnal Vychislitel'noi Matematiki i Matematicheskoi Fiziki},
  3\penalty0 (4):\penalty0 643--653, 1963.

\bibitem[Bottou and Cun(2004)]{bottou2004large}
L{\'e}on Bottou and Yann~L Cun.
\newblock Large scale online learning.
\newblock In \emph{Advances in neural information processing systems}, pages
  217--224, 2004.

\bibitem[Bottou et~al.(2018)Bottou, Curtis, and
  Nocedal]{bottou2018optimization}
L{\'e}on Bottou, Frank~E Curtis, and Jorge Nocedal.
\newblock Optimization methods for large-scale machine learning.
\newblock \emph{Siam Review}, 60\penalty0 (2):\penalty0 223--311, 2018.

\bibitem[Bottou(1991)]{bottou-91a}
{L\'eon} Bottou.
\newblock \emph{Une Approche th\'eorique de l'Apprentissage Connexionniste:
  Applications \`a la Reconnaissance de la Parole}.
\newblock PhD thesis, Universit\'{e} de Paris XI, Orsay, France, 1991.

\bibitem[Nash and Nocedal(1991)]{nash1991numerical}
Stephen~G Nash and Jorge Nocedal.
\newblock A numerical study of the limited memory bfgs method and the
  truncated-newton method for large scale optimization.
\newblock \emph{SIAM Journal on Optimization}, 1\penalty0 (3):\penalty0
  358--372, 1991.

\bibitem[Bertsekas(1999)]{bertsekas1999nonlinear}
Dimitri~P Bertsekas.
\newblock \emph{Nonlinear programming}.
\newblock Athena Scientific, 1999.

\bibitem[Nesterov(2005)]{nesterov2005smooth}
Yu~Nesterov.
\newblock Smooth minimization of non-smooth functions.
\newblock \emph{Mathematical programming}, 103\penalty0 (1):\penalty0 127--152,
  2005.

\bibitem[Haykin et~al.(2005)]{haykin2005cognitive}
Simon Haykin et~al.
\newblock Cognitive radio: brain-empowered wireless communications.
\newblock \emph{IEEE journal on selected areas in communications}, 23\penalty0
  (2):\penalty0 201--220, 2005.

\bibitem[Bubeck et~al.(2015)]{bubeck2015convex}
S{\'e}bastien Bubeck et~al.
\newblock Convex optimization: Algorithms and complexity.
\newblock \emph{Foundations and Trends{\textregistered} in Machine Learning},
  8\penalty0 (3-4):\penalty0 231--357, 2015.

\bibitem[Allen-Zhu et~al.(2018)Allen-Zhu, Li, and Song]{allen2018convergence}
Zeyuan Allen-Zhu, Yuanzhi Li, and Zhao Song.
\newblock A convergence theory for deep learning via over-parameterization.
\newblock \emph{arXiv preprint arXiv:1811.03962}, 2018.

\bibitem[Zou et~al.(2018{\natexlab{a}})Zou, Cao, Zhou, and
  Gu]{zou2018stochastic}
Difan Zou, Yuan Cao, Dongruo Zhou, and Quanquan Gu.
\newblock Stochastic gradient descent optimizes over-parameterized deep relu
  networks.
\newblock \emph{arXiv preprint arXiv:1811.08888}, 2018{\natexlab{a}}.

\bibitem[Moulines and Bach(2011)]{moulines2011non}
Eric Moulines and Francis~R Bach.
\newblock Non-asymptotic analysis of stochastic approximation algorithms for
  machine learning.
\newblock In \emph{Advances in Neural Information Processing Systems}, pages
  451--459, 2011.

\bibitem[Needell et~al.(2016)Needell, Srebro, and Ward]{Needell2016}
Deanna Needell, Nathan Srebro, and Rachel Ward.
\newblock Stochastic gradient descent, weighted sampling, and the randomized
  kaczmarz algorithm.
\newblock \emph{Mathematical Programming}, 155\penalty0 (1):\penalty0 549--573,
  Jan 2016.
\newblock ISSN 1436-4646.

\bibitem[Schmidt et~al.(2017)Schmidt, Le~Roux, and Bach]{schmidt2017minimizing}
Mark Schmidt, Nicolas Le~Roux, and Francis Bach.
\newblock Minimizing finite sums with the stochastic average gradient.
\newblock \emph{Mathematical Programming}, 162\penalty0 (1-2):\penalty0
  83--112, 2017.

\bibitem[Johnson and Zhang(2013)]{johnson2013accelerating}
Rie Johnson and Tong Zhang.
\newblock Accelerating stochastic gradient descent using predictive variance
  reduction.
\newblock In \emph{Advances in neural information processing systems}, pages
  315--323, 2013.

\bibitem[Defazio et~al.(2014)Defazio, Bach, and
  Lacoste-Julien]{defazio2014saga}
Aaron Defazio, Francis Bach, and Simon Lacoste-Julien.
\newblock Saga: A fast incremental gradient method with support for
  non-strongly convex composite objectives.
\newblock In \emph{Advances in neural information processing systems}, pages
  1646--1654, 2014.

\bibitem[Duchi et~al.(2011)Duchi, Hazan, and Singer]{duchi2011adaptive}
John Duchi, Elad Hazan, and Yoram Singer.
\newblock Adaptive subgradient methods for online learning and stochastic
  optimization.
\newblock \emph{Journal of Machine Learning Research}, 12\penalty0
  (Jul):\penalty0 2121--2159, 2011.

\bibitem[McMahan and Streeter(2010)]{mcmahan2010adaptive}
H~Brendan McMahan and Matthew Streeter.
\newblock Adaptive bound optimization for online convex optimization.
\newblock \emph{arXiv preprint arXiv:1002.4908}, 2010.

\bibitem[Kingma and Ba(2014)]{kingma2014adam}
Diederik~P Kingma and Jimmy Ba.
\newblock Adam: A method for stochastic optimization.
\newblock \emph{arXiv preprint arXiv:1412.6980}, 2014.

\bibitem[Lafond et~al.(2017)Lafond, Vasilache, and
  Bottou]{lafond-vasilache-bottou-2017}
Jean Lafond, Nicolas Vasilache, and L\'{e}on Bottou.
\newblock Diagonal rescaling for neural networks.
\newblock Technical report, arXiV:1705.09319, 2017.

\bibitem[Reddi et~al.(2018{\natexlab{a}})Reddi, Kale, and Kumar]{Reddi2018OnTC}
Sashank~J. Reddi, Satyen Kale, and Sanjiv Kumar.
\newblock On the convergence of adam and beyond.
\newblock \emph{CoRR}, abs/1904.09237, 2018{\natexlab{a}}.

\bibitem[Shah et~al.(2018)Shah, Kyrillidis, and Sanghavi]{shah2018minimum}
Vatsal Shah, Anastasios Kyrillidis, and Sujay Sanghavi.
\newblock Minimum norm solutions do not always generalize well for
  over-parameterized problems.
\newblock \emph{arXiv preprint arXiv:1811.07055}, 2018.

\bibitem[Zou et~al.(2018{\natexlab{b}})Zou, Shen, Jie, Zhang, and
  Liu]{zou2018sufficient}
Fangyu Zou, Li~Shen, Zequn Jie, Weizhong Zhang, and Wei Liu.
\newblock A sufficient condition for convergences of adam and rmsprop.
\newblock \emph{arXiv preprint arXiv:1811.09358}, 2018{\natexlab{b}}.

\bibitem[Staib et~al.(2019)Staib, Reddi, Kale, Kumar, and
  Sra]{staib2019escaping}
Matthew Staib, Sashank~J Reddi, Satyen Kale, Sanjiv Kumar, and Suvrit Sra.
\newblock Escaping saddle points with adaptive gradient methods.
\newblock \emph{arXiv preprint arXiv:1901.09149}, 2019.

\bibitem[Levy(2017)]{levy2017online}
Kfir Levy.
\newblock Online to offline conversions, universality and adaptive minibatch
  sizes.
\newblock In \emph{Advances in Neural Information Processing Systems}, pages
  1613--1622, 2017.

\bibitem[Ward et~al.(2018)Ward, Wu, and Bottou]{ward2018adagrad}
Rachel Ward, Xiaoxia Wu, and Leon Bottou.
\newblock Adagrad stepsizes: Sharp convergence over nonconvex landscapes, from
  any initialization.
\newblock \emph{arXiv preprint arXiv:1806.01811}, 2018.

\bibitem[Wu et~al.(2018)Wu, Ward, and Bottou]{wu2018wngrad}
Xiaoxia Wu, Rachel Ward, and L{\'e}on Bottou.
\newblock Wngrad: learn the learning rate in gradient descent.
\newblock \emph{arXiv preprint arXiv:1803.02865}, 2018.

\bibitem[Levy et~al.(2018)Levy, Yurtsever, and Cevher]{levy2018online}
Yehuda~Kfir Levy, Alp Yurtsever, and Volkan Cevher.
\newblock Online adaptive methods, universality and acceleration.
\newblock In S.~Bengio, H.~Wallach, H.~Larochelle, K.~Grauman, N.~Cesa-Bianchi,
  and R.~Garnett, editors, \emph{Advances in Neural Information Processing
  Systems 31}, pages 6500--6509. Curran Associates, Inc., 2018.

\bibitem[Li and Orabona(2018)]{li2018convergence}
Xiaoyu Li and Francesco Orabona.
\newblock On the convergence of stochastic gradient descent with adaptive
  stepsizes.
\newblock \emph{arXiv preprint arXiv:1805.08114}, 2018.

\bibitem[Balles and Hennig(2018)]{balles2018dissecting}
Lukas Balles and Philipp Hennig.
\newblock Dissecting adam: The sign, magnitude and variance of stochastic
  gradients.
\newblock In \emph{International Conference on Machine Learning}, pages
  413--422, 2018.

\bibitem[Bernstein et~al.(2018)Bernstein, Wang, Azizzadenesheli, and
  Anandkumar]{bernstein_signum}
Jeremy Bernstein, Yu-Xiang Wang, Kamyar Azizzadenesheli, and Animashree
  Anandkumar.
\newblock sign{SGD}: {C}ompressed {O}ptimisation for {N}on-{C}onvex {P}roblems.
\newblock In \emph{International Conference on Machine Learning (ICML-18)},
  2018.

\bibitem[Mukkamala and Hein(2017)]{mukkamala2017variants}
Mahesh~Chandra Mukkamala and Matthias Hein.
\newblock Variants of rmsprop and adagrad with logarithmic regret bounds.
\newblock In \emph{Proceedings of the 34th International Conference on Machine
  Learning-Volume 70}, pages 2545--2553. JMLR. org, 2017.

\bibitem[Chen et~al.(2018)Chen, Xu, Chen, and Yang]{pmlr-v80-chen18m}
Zaiyi Chen, Yi~Xu, Enhong Chen, and Tianbao Yang.
\newblock {SADAGRAD}: Strongly adaptive stochastic gradient methods.
\newblock In Jennifer Dy and Andreas Krause, editors, \emph{Proceedings of the
  35th International Conference on Machine Learning}, volume~80 of
  \emph{Proceedings of Machine Learning Research}, pages 913--921,
  Stockholmsmässan, Stockholm Sweden, 10--15 Jul 2018. PMLR.

\bibitem[Vaswani et~al.(2018)Vaswani, Bach, and Schmidt]{vaswani2018fast}
Sharan Vaswani, Francis Bach, and Mark Schmidt.
\newblock Fast and faster convergence of sgd for over-parameterized models and
  an accelerated perceptron.
\newblock \emph{arXiv preprint arXiv:1810.07288}, 2018.

\bibitem[Zhang et~al.(2016)Zhang, Bengio, Hardt, Recht, and
  Vinyals]{zhang2016understanding}
Chiyuan Zhang, Samy Bengio, Moritz Hardt, Benjamin Recht, and Oriol Vinyals.
\newblock Understanding deep learning requires rethinking generalization.
\newblock \emph{arXiv preprint arXiv:1611.03530}, 2016.

\bibitem[Du et~al.(2019)Du, Zhai, Poczos, and Singh]{du2018gradient}
Simon~S. Du, Xiyu Zhai, Barnabas Poczos, and Aarti Singh.
\newblock Gradient descent provably optimizes over-parameterized neural
  networks.
\newblock In \emph{International Conference on Learning Representations}, 2019.

\bibitem[Zhou et~al.(2019)Zhou, Yang, Zhang, Liang, and Tarokh]{zhou2018sgd}
Yi~Zhou, Junjie Yang, Huishuai Zhang, Yingbin Liang, and Vahid Tarokh.
\newblock {SGD} converges to global minimum in deep learning via star-convex
  path.
\newblock In \emph{International Conference on Learning Representations}, 2019.

\bibitem[Bassily et~al.(2018)Bassily, Belkin, and Ma]{bassily2018exponential}
Raef Bassily, Mikhail Belkin, and Siyuan Ma.
\newblock On exponential convergence of sgd in non-convex over-parametrized
  learning.
\newblock \emph{arXiv preprint arXiv:1811.02564}, 2018.

\bibitem[Roux et~al.(2012)Roux, Schmidt, and Bach]{bach2012a}
Nicolas~L. Roux, Mark Schmidt, and Francis~R. Bach.
\newblock A stochastic gradient method with an exponential convergence \_rate
  for finite training sets.
\newblock In F.~Pereira, C.~J.~C. Burges, L.~Bottou, and K.~Q. Weinberger,
  editors, \emph{Advances in Neural Information Processing Systems 25}, pages
  2663--2671. Curran Associates, Inc., 2012.

\bibitem[Cevher and V{\~{u}}(2019)]{Cevher2019}
Volkan Cevher and Báº±ng~C{\^o}ng V{\~{u}}.
\newblock On the linear convergence of the stochastic gradient method with
  constant step-size.
\newblock \emph{Optimization Letters}, 13\penalty0 (5):\penalty0 1177--1187,
  Jul 2019.
\newblock ISSN 1862-4480.

\bibitem[Schmidt and Roux(2013)]{schmidt2013fast}
Mark Schmidt and Nicolas~Le Roux.
\newblock Fast convergence of stochastic gradient descent under a strong growth
  condition.
\newblock \emph{arXiv preprint arXiv:1308.6370}, 2013.

\bibitem[Wainwright(2019)]{wainwright2019high}
Martin~J Wainwright.
\newblock \emph{High-dimensional statistics: A non-asymptotic viewpoint},
  volume~48.
\newblock Cambridge University Press, 2019.

\bibitem[Soltanolkotabi et~al.(2019)Soltanolkotabi, Javanmard, and
  Lee]{soltanolkotabi2019theoretical}
Mahdi Soltanolkotabi, Adel Javanmard, and Jason~D Lee.
\newblock Theoretical insights into the optimization landscape of
  over-parameterized shallow neural networks.
\newblock \emph{IEEE Transactions on Information Theory}, 65\penalty0
  (2):\penalty0 742--769, 2019.

\bibitem[Kleinberg et~al.(2018)Kleinberg, Li, and
  Yuan]{kleinberg2018alternative}
Robert Kleinberg, Yuanzhi Li, and Yang Yuan.
\newblock An alternative view: When does sgd escape local minima?
\newblock In \emph{International Conference on Machine Learning}, pages
  2703--2712, 2018.

\bibitem[Li and Yuan(2017)]{li2017convergence}
Yuanzhi Li and Yang Yuan.
\newblock Convergence analysis of two-layer neural networks with relu
  activation.
\newblock In \emph{Advances in Neural Information Processing Systems}, pages
  597--607, 2017.

\bibitem[Wu et~al.(2019)Wu, Du, and Ward]{wu2019global}
Xiaoxia Wu, Simon~S Du, and Rachel Ward.
\newblock Global convergence of adaptive gradient methods for an
  over-parameterized neural network.
\newblock \emph{arXiv preprint arXiv:1902.07111}, 2019.

\bibitem[Reddi et~al.(2018{\natexlab{b}})Reddi, Kale, and Kumar]{j.2018on}
Sashank~J. Reddi, Satyen Kale, and Sanjiv Kumar.
\newblock On the convergence of adam and beyond.
\newblock In \emph{International Conference on Learning Representations},
  2018{\natexlab{b}}.

\end{thebibliography}
\onecolumn
\appendix

\setcounter{lemma}{0}
\setcounter{equation}{0}
\renewcommand{\thelemma}{\Alph{section}\arabic{lemma}}
\section*{Supplementary Material}
Supplementary material for the paper: "Linear Convergence of Adaptive Stochastic Gradient Descent".

This appendix is organized as follows:
\begin{itemize}
    \item Appendix \ref{A}: Proof of Theorem \ref{sgdscalg} in the Stochastic Setting 
    \item Appendix \ref{B}: Proof of Theorem \ref{op2} and \ref{non-convex} in the batch Setting
    \item Appendix \ref{C}: Proof of Lemmas in Stage I
    \item Appendix \ref{D}: Proof of Lemmas in Stage II
    \item Appendix \ref{E}: More Numerical Experiments
\end{itemize}

\section{Proof of Theorem \ref{sgdscalg} in the Stochastic Setting}\label{A}
From Lemma \ref{highp}, let $C=\eta L$, after $N \geq \frac{\eta^2L^2-b_0^2}{\alpha\gamma\epsilon}+\frac{\delta}{\gamma}$ steps, 
if $\min_{0\leq i\leq N-1}\|\vx_i-\vx^*\|^2 > \epsilon$, then with high probability $ 1- \exp(-\frac{\delta^2}{2(N\gamma(1-\gamma)+\delta)})$, $b_N > \eta L $. Then, there exists a first index $k_0 < N$, s.t. $b_{k_0} >\eta L$ but $b_{k_0-1} < \eta L$.\\
If $k_0 \geq 1$, then
\begin{equation}
\begin{split}
     \|\vx_{k_0+l}-\vx^*\|^2  & = \|\vx_{k_0-1+l}-\vx^*\|^2 + \frac{\eta^2}{b_{k_0+l}^2}\|G_{k_0-1+l}\|^2 - \frac{2\eta}{b_{k_0+l}}\langle \vx_{k_0-1+l}-\vx^*, G_{k_0-1+l}\rangle \\
     & \leq \|\vx_{k_0-1+l}-\vx^*\|^2 + (\frac{\eta^2L}{b_{k_0+l}^2}- \frac{2\eta}{b_{k_0+l}}) \langle \vx_{k_0-1+l}-\vx^*, G_{k_0-1+l}\rangle\\
     & \leq \|\vx_{k_0-1+l}-\vx^*\|^2 - \frac{\eta}{b_{k_0+l}}\langle \vx_{k_0-1+l} - \vx^*, G_{k_0-1+l}\rangle\\
     & \leq \|\vx_{k_0-1+l}-\vx^*\|^2 - \frac{\eta}{b_{\max}}\langle \vx_{k_0-1+l}-\vx^*, G_{k_0-1+l}\rangle\\
\end{split}
\end{equation}
where the last second inequality is from the condition $b_{k_0} > \eta L$. The last inequality holds since $b_{k_0+l} \leq b_{\max}$ and $f_{\xi_{k_0-1+l}}(\vx)$ is convex, which implies $\langle \vx_{k_0-1+l} - \vx^*, G_{k_0-1+l} - \nabla f_{\xi_{k_0-1+l}}(\vx^*) \rangle \geq 0$, $\mathds{P}(\nabla f_{\xi_{k_0-1+l}}(\vx^*) = \mathbf{0}) =1 $ by Assumption \ref{A4}.

Take expectation regarding to $\xi_{k_0-1+l}$, and use the fact that when $j > k_0$, $b_j > L$, when $l \geq 1$ and $0< \frac{\mu\eta}{b_{k_0-1+l}} <  \frac{\mu}{L} < 1$, then we can get
\begin{equation}
\begin{split}
     \mathds{E}_{\xi_{k_0-1+l}}\|\vx_{k_0+l}-\vx^*\|^2 
     & \leq  \|\vx_{k_0-1+l}-\vx^*\|^2 - \frac{\eta}{b_{\max}} \langle \vx_{k_0-1+l}-\vx^*, \nabla F(\vx_{k_0-1+l})\rangle \\
     & \leq (1-\frac{\mu\eta}{ b_{\max} }) \|\vx_{k_0-1+l}-\vx^*\|^2 \\
     & \leq \prod_{j=0}^{l} (1-\frac{\mu\eta}{ b_{\max} }) \|\vx_{k_0-1}-\vx^*\|^2\\
     & \leq \prod_{j=0}^{l} (1-\frac{\mu\eta}{ b_{\max} }) (\|\vx_0-\vx^*\|^2 + \eta^2 (\log(\frac{C^2}{b_0^2}) + 1)\\
     & \leq (\|\vx_0-\vx^*\|^2 + \eta^2 (\log(\frac{\eta^2L^2}{b_0^2}) + 1)) \exp( -\frac{\mu l}{b_{\max}} )
\end{split}
\end{equation}
where the second inequality is from the strong convexity of $F(\vx)$, i.e. $ \langle \vx-\vy, \nabla F(\vx) - \nabla F(\vy) \rangle \geq \mu \| \vx-\vy \|^2$ and $\nabla F(\vx^*)=\mathbf{0}$.
 From Lemma \ref{bmax}, we can give an upper bound for $b_{\max} = \max_{l\geq 0} b_{k_0+l} =  C + \frac{L}{\eta} (\|\vx_0-\vx^*\|^2 + \eta^2 (\log\frac{C^2}{b_0^2} + 1)) = \eta L + \frac{L}{\eta} (\|\vx_0-\vx^*\|^2 + \eta^2 (\log\frac{\eta^2L^2}{b_0^2} + 1)).$ \\

Then, take the iterated expectation and use Markov in inequality, with high probability $1-\delta_h$, 
\begin{equation*}
\begin{split}
     \|\vx_{k_0+l}-\vx^*\|^2 \leq \frac{1}{\delta_h} (\|\vx_0-\vx^*\|^2 + \eta^2 (\log \frac{\eta^2L^2}{b_0^2} + 1)) \exp( -\frac{\mu l}{b_{\max}} )
\end{split}
\end{equation*}
Then, after $M \geq \frac{ \eta L + \frac{L}{\eta} (\|\vx_0-\vx^*\|^2 + \eta^2 (\log \frac{\eta^2L^2}{b_0^2} + 1))}{\mu} \log \frac{\|\vx_0-\vx^*\|^2 + \eta^2 (\log \frac{\eta^2L^2}{b_0^2} + 1)}{\epsilon\delta_h}$ iterations, with high probability more than $1-\delta_h - \exp(-\frac{\delta^2}{2(N\gamma(1-\gamma)+\delta)}) $
\begin{equation*}
  \|\vx_{k_0+M} - \vx^*\|^2 \leq \epsilon
\end{equation*}
Otherwise, if $k_0 = 0$, i.e. $b_0 > \eta L$, then we use the same inequality as above,
\begin{equation*}
\begin{split}
    \mathds{E}_{\xi_{M-1}}\|\vx_M - \vx^* \|^2 & \leq (1-\frac{\mu}{b'_{\max}})\|\vx_{M-1} - \vx^*\|^2 \leq \| \vx_0 - \vx^*\|^2 \exp( -\frac{\mu M }{b'_{\max}})
\end{split}
\end{equation*}
Then, after $M \geq \frac{b'_{\max}}{\mu}\log \frac{\|\vx_0-\vx^*\|^2}{\epsilon\delta_h}$ iterations, by Markov's inequality, 
$$\mathds{P} (\|\vx_M -\vx^*\|^2 \geq \epsilon) \leq \frac{\mathds{E}\|\vx_M-\vx^*\|^2}{\epsilon}\leq \delta_h$$
where $b'_{\max} = b_0 + \frac{L}{\eta}\|\vx_0-\vx^*\|^2 $ is derived as follows:
\begin{equation*}
    \begin{split}
        \|\vx_{j+1} - \vx^*\|^2 & \leq \|\vx_{j} - \vx^*\|^2 - \frac{\eta}{L}\frac{\|G_j\|^2}{b_{j+1}} \\
        & \leq \|\vx_0-\vx^*\|^2 -   \frac{\eta}{L}\sum_{i=0}^{j}\frac{\|G_i\|^2}{b_{i+1}}
    \end{split}
\end{equation*}

Then, for any $j+1$:
\begin{equation}
   b_{j+1} = b_0 + \sum_{i=0}^{j} \frac{\|G_i\|^2}{b_i+b_{i+1}} \leq b_0 + \frac{L}{\eta} (\|\vx_0-\vx^*\|^2 - \|\vx_{j+1} - \vx^*\|^2)
\end{equation}
Plugging in the value, we can get $M \geq \frac{ b_0 + \frac{L}{\eta}\| \vx_0-\vx^*\|^2}{\mu}\log \frac{\| \vx_0-\vx^*\|^2}{\epsilon\delta_h}$.

\section{Proof of Theorem \ref{op2} and \ref{non-convex} in the batch Setting} \label{B}
\subsection{Proof of Theorem \ref{op2}}
\begin{lemma}\label{lem: eco}\textbf{(Co-coercivity with Strong Convexity)} \citep{bubeck2015convex} 
If $F(\vx)$ is $\mu-$strongly convex and $L-$smooth, then
$$\langle \nabla F(\vx) - \nabla F(\vy) , \vx-\vy \rangle \geq \frac{\mu L}{\mu+L}\|\vx-\vy\|^2 +
\frac{1}{\mu+L} \|\nabla F(\vx) - \nabla F(\vy)\|^2$$
\end{lemma}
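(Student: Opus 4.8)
The plan is to reduce the claim to the standard co-coercivity (Baillon--Haddad) inequality for convex smooth functions, via the usual ``shift by a quadratic'' trick. First I would dispose of the degenerate case $L=\mu$: then $F$ is a quadratic with Hessian $\mu \mI$, so $\nabla F(\vx)-\nabla F(\vy)=\mu(\vx-\vy)$ and the asserted inequality becomes the identity $\mu\|\vx-\vy\|^2 = \tfrac{\mu L}{\mu+L}\|\vx-\vy\|^2+\tfrac{1}{\mu+L}\mu^2\|\vx-\vy\|^2$. So from now on I may assume $L>\mu$. Introduce the auxiliary function $g(\vx):=F(\vx)-\tfrac{\mu}{2}\|\vx\|^2$. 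Since $F$ is $\mu$-strongly convex, $g$ is convex; and since $\nabla g(\vx)=\nabla F(\vx)-\mu\vx$ with $\nabla F$ being $L$-Lipschitz, $\nabla g$ is $(L-\mu)$-Lipschitz, i.e.\ $g$ is $(L-\mu)$-smooth and convex.

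Next I would invoke co-coercivity of the gradient of a convex $\beta$-smooth function with $\beta=L-\mu$, namely $\langle \nabla g(\vx)-\nabla g(\vy),\vx-\vy\rangle \ge \tfrac{1}{L-\mu}\|\nabla g(\vx)-\nabla g(\vy)\|^2$. If one prefers not to quote it, it follows in a few lines from the descent lemma applied to $\vz\mapsto g(\vz)-\langle\nabla g(\vx),\vz\rangle$, which is convex, $(L-\mu)$-smooth and minimized at $\vx$; minimizing its quadratic upper bound at the point $\vx$ versus at $\vy-\tfrac{1}{L-\mu}(\nabla g(\vy)-\nabla g(\vx))$ gives $g(\vx)\le g(\vy)+\langle\nabla g(\vx),\vx-\vy\rangle-\tfrac{1}{2(L-\mu)}\|\nabla g(\vx)-\nabla g(\vy)\|^2$, and symmetrizing in $\vx,\vy$ and adding yields the co-coercivity bound.

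Then I would substitute $\nabla g=\nabla F-\mu(\cdot)$ back in. Writing $\vd:=\nabla F(\vx)-\nabla F(\vy)$ and $\vh:=\vx-\vy$, the co-coercivity inequality reads $\langle\vd,\vh\rangle-\mu\|\vh\|^2 \ge \tfrac{1}{L-\mu}\bigl(\|\vd\|^2-2\mu\langle\vd,\vh\rangle+\mu^2\|\vh\|^2\bigr)$. Clearing the positive denominator $L-\mu$, moving the $\langle\vd,\vh\rangle$ terms to the left and the $\|\vd\|^2,\|\vh\|^2$ terms to the right, the $\mu^2\|\vh\|^2$ contributions cancel and one is left with $(L+\mu)\langle\vd,\vh\rangle \ge \|\vd\|^2+\mu L\|\vh\|^2$; dividing by $L+\mu$ is exactly the asserted inequality, $\langle\vd,\vh\rangle\ge\tfrac{\mu L}{\mu+L}\|\vh\|^2+\tfrac{1}{\mu+L}\|\vd\|^2$.

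The only genuine content is the co-coercivity step; everything else is bookkeeping, so I would expect that to be the main obstacle in a self-contained writeup — or else simply cite it as standard (\citet{bubeck2015convex}). The one point to be careful about is that $\beta=L-\mu>0$ is needed for the reciprocal $\tfrac{1}{L-\mu}$ to make sense, which is precisely why I would handle $L=\mu$ separately at the very beginning.
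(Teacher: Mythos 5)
Your proof is correct and follows essentially the same route as the paper's: shift by the quadratic $\tfrac{\mu}{2}\|\vx\|^2$ to get a convex $(L-\mu)$-smooth function, apply standard co-coercivity to it, substitute $\nabla F - \mu(\cdot)$ back in, and rearrange. Your explicit treatment of the degenerate case $L=\mu$ is a small point the paper's proof glosses over, but otherwise the arguments coincide.
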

\begin{proof}
  Let $\phi(\vx) = F(\vx) - \frac{\mu}{2}\|\vx\|^2$, then $\phi(\vx)$ is convex and $(L-\mu)-$smooth. By Lemma \ref{co-coercivity}, $$\langle \nabla \phi(\vx) - \nabla \phi(\vy) , \vx-\vy \rangle \geq 
  \frac{1}{L-\mu}\| \nabla \phi(\vx) - \nabla \phi(\vy) \|^2$$
  Plugging in $\nabla \phi(\vx) = \nabla F(\vx) - \mu \vx$, we have 
  \begin{equation*}
  \begin{split}
      \langle \nabla F(\vx) - \nabla F(\vy), \vx-\vy \rangle - \mu\|\vx-\vy\|^2 & \geq 
     \frac{1}{L-\mu} (\|\nabla F(\vx) - \nabla F(\vy)\|^2 + \mu^2 \|\vx-\vy\|^2 \\
      & - 2\mu \langle \nabla F(\vx) - \nabla F(\vy), \vx-\vy \rangle)
  \end{split}
  \end{equation*}
With simple algebra, we can get the result.
\end{proof}

By Lemma \ref{twocases}, after $N = \left \lceil \frac{\log(\eta^2(\mu+L)^2/4b_0^2)}{\log(1+4\mu^2\epsilon/(\mu+L)^2)} \right \rceil + 1$ iterations, if $\min_{0\leq i \leq N-1}\|\vx_i-\vx^*\|^2 > \epsilon$, then $\exists k_0 \leq N$, such that $k_0$ is the first index s.t. $b_{k_0} > \eta \frac{\mu+L}{2}$.

If $k_0 > 1$, since $F(\vx)$ is $\mu$-strongly convex and $L-$smooth, by Lemma \ref{lem: eco}, $$\langle \nabla F(\vx) - \nabla F(\vy), \vx-\vy \rangle \geq \frac{\mu L}{\mu + L}\|\vx-\vy\|^2 + \frac{1}{\mu+L}\|\nabla F(\vx) - \nabla F(\vy)\|^2.$$ 
For $j \geq 0$, we have $0 < \eta \frac{2\mu L}{\mu+L} \frac{1}{b_{k_0+j}} < \frac{4\mu L}{(\mu+L)^2} <1$ , since $2\mu L < \mu^2+ L^2$.

We divide the analysis into two situations to get a better bound instead of using $b_{\max}$ for all the following steps, which is different from the proof of Theorem \ref{sgdscalg}. First, assume that $\eta \frac{\mu+L}{2} < b_{k_0} < \eta L$ and after another $l$ iterations, $b_{k_0+l}$ is still less than $\eta L$, then $\|\vx_{k_0+l}-\vx^*\|^2$ is bounded as follows:
\begin{equation*}
\begin{split}
    \|\vx_{k_0+l}-\vx^*\|^2  & = \|\vx_{k_0-1+l}-\vx^*\|^2 + \frac{\eta^2}{b_{k_0+l}^2}\|\nabla F(\vx_{k_0-1+l})\|^2 \\  & - \frac{2\eta}{b_{k_0+l}}\langle \vx_{k_0-1+l}-\vx^*, \nabla F(\vx_{k_0-1+l})\rangle \\
    & \leq  (1-\frac{2\mu\eta L}{(\mu+L)b_{k_0+l}})\| \vx_{k_0-1+l}-\vx^*\|^2 \\
    & + \frac{\eta}{b_{k_0+l}}(\frac{\eta}{b_{k_0+l}}- \frac{2}{\mu+L})\|\nabla F(\vx_{k_0-1+l})\|^2\\
    & \leq \prod_{j=0}^{l}(1-\frac{2\mu\eta L}{(\mu+L)b_{k_0+j}})\|\vx_{k_0-1}-\vx^*\|^2 \\
    & \leq \exp(-\sum_{j=0}^l \frac{2\mu\eta L}{(\mu+L)\eta L})\|\vx_{k_0-1}-\vx^*\|^2\\
    & \leq \exp(-\frac{2\mu(l+1)}{\mu+L})\|\vx_{k_0-1}-\vx^*\|^2\\
\end{split}
\end{equation*}

where $\|\vx_{k_0-1}-\vx^*\|^2$ can be upper bounded according to Lemma \ref{boundset} with $C=\frac{\mu+L}{2}$:
$$\|\vx_{k_0-1} - \vx^*\|^2 \leq \|\vx_0-\vx^*\|^2 + \eta^2 (\log(\frac{(\mu+L)^2}{4b_0^2}) + 1) $$

Second, if $b_{k_0+M_0} > \eta L$, $M_0$ can be $0,1,2,\dots$, then for $l \geq 0$,
\begin{equation*}
    \begin{split}
    \|& \vx_{k_0+M_0+l}-\vx^*\|^2  = \|\vx_{k_0+M_0-1+l}-\vx^*\|^2 + \frac{\eta^2}{b_{k_0+l}^2}\|\nabla F(\vx_{k_0+M_0-1+l})\|^2 \\
     & - \frac{2\eta}{b_{k_0+M_0+l}}\langle \vx_{k_0+M_0-1+l}-\vx^*, \nabla F(\vx_{k_0+M_0-1+l})\rangle \\
     & \leq \| \vx_{k_0+M_0-1+l}-\vx^*\|^2 + (\frac{\eta^2L}{b_{k_0+M_0+l}^2}- \frac{2\eta}{b_{k_0+M_0+l}}) \langle \vx_{k_0+M_0-1+l}- \vx^*, \nabla F(\vx_{k_0+M_0-1+l})\rangle\\
     & \leq (1- \frac{\mu\eta}{b_{\max}})\| \vx_{k_0+M_0+l} - \vx^*\|^2\\
     & \leq \exp(-\frac{\mu\eta l}{b_{\max}})\|\vx_{k_0+M_0} - \vx^*\|^2\\
     & \leq \exp(-\frac{\mu\eta l}{b_{\max}}) \exp(-\frac{2\mu(M_0+1)}{\mu+L})\|\vx_{k_0-1}-\vx^*\|^2
     \end{split}
\end{equation*}

where $b_{\max}$ can be upper bounded according to Lemma \ref{bmax}, here $b_{\max} \leq \eta L + \frac{L}{\eta}\|\vx_{k_0+M_0} - \vx^*\|^2$.

Once $b_t > \frac{\mu + L}{2} > \frac{L}{2}$, by Lemma \ref{descent}, AdaGrad-Norm is indeed a decent algorithm for $\|\vx_j-\vx^*\|^2$, so $\|\vx_{k_0+M_0-1}-\vx^*\|^2 \leq \|\vx_{k_0-1}-\vx^*\|^2$.
Hence, $$b_{\max} \leq \eta L + \frac{L}{\eta}\|\vx_{k_0-1}-\vx^*\|^2$$

Combining the two situations above, we have
\begin{equation*}
    \begin{split}
    \|\vx_{k_0+M} - \vx^*\|^2 & \leq \exp(- M \min \left\{ \frac{\mu\eta}{b_{\max}}, \frac{2\mu}{\mu+L} \right\}) \|\vx_{k_0-1}-\vx^*\|^2\\
    & \leq \exp(- M \min \left\{ \frac{\mu}{L(1+\Delta/\eta^2)}, \frac{2\mu}{\mu+L} \right\}) \|\vx_{k_0-1}-\vx^*\|^2
    \end{split}
\end{equation*}
where $\Delta = \|\vx_{k_0-1}-\vx^*\|^2$.

After $M\geq \max \left \{ \frac{L(1+\Delta/\eta^2)}{\mu}, \frac{\mu+L}{2\mu} \right\} \log \frac{\Delta}{\epsilon}-1$ iterations,
$$\|\vx_{k_0+M} - \vx^*\|^2 \leq \epsilon $$

Otherwise, if $k_0=1$, then
$$ \|\vx_M - \vx^*\|^2 \leq \exp(-\sum_{j=1}^M \min \left\{ \frac{\mu\eta}{b'_{\max}}, \frac{2\mu}{\mu+L} \right\}) \|\vx_0 - \vx^*\|^2 $$
where $b'_{\max} = \eta L + \frac{L}{\eta}\|\vx_0 -\vx^*\|^2$.\\
Then, after $M \geq \max\left\{ \frac{L(1+\|\vx_0-\vx^*\|^2/\eta^2)}{\mu}, \frac{\mu+L}{2\mu} \right\} \log \frac{\|\vx_0-\vx^*\|^2}{\epsilon}$ iterations, we can assure that
$$\|\vx_M - \vx^*\|^2 \leq \epsilon$$

\subsection{Proof of Theorem \ref{non-convex}}
By Lemma \ref{twocases}, after $N \geq \frac{\log(\eta^2L^2/b_0^2)}{\log(1+2\mu\epsilon/(\eta L)^2)}$ iterations, if $\min_{0\leq i \leq N-1}F(\vx_i)-F^* > \epsilon$, then $\exists k_0 \leq N$, such that $k_0$ is the first index s.t. $b_{k_0} > \eta L$.\\
If $k_0 > 1$, then for $j \geq 0$, from Assumption \ref{A2}, we have
\begin{equation}
    \begin{split}
        F(\vx_{k_0+j}) & \leq F(\vx_{k_0+j-1}) - \frac{\eta}{b_{k_0+j}}(1-\frac{\eta L}{2b_{k_0+j}})\|\nabla F(\vx_{k_0+j-1})\|^2 \\
        & \leq  F(\vx_{k_0+j-1}) - \frac{\eta}{2b_{k_0+j}} \|\nabla F(\vx_{k_0+j-1})\|^2\\
        & \leq  F(\vx_{k_0+j-1}) + \frac{\mu \eta}{b_{k_0+j}}(F^* - F(\vx_{k_0+j-1}))
    \end{split}
\end{equation}
The last inequality is from $\mu-$PL inequality (Assumption \ref{A1b}): $-\|F(\vx)\|^2 \leq 2\mu (F*-F(\vx_j)), \forall x$. Then, add $-F^*$ on both sides, we can get 
\begin{equation}
    F(\vx_{k_0+j})-F^* \leq (1-\frac{\mu\eta}{b_{k_0+j}})(F(\vx_{k_0+j-1})-F^*)
\end{equation}
 Since $b_{k_0+j} >\eta L \geq \eta \mu$, $1-\frac{\mu\eta}{b_{k_0+j}} \in (0,1)$ holds for all $j\geq0$, it is a contraction at every step. Then,
\begin{equation}
    \begin{split}
        F(\vx_{k_0+j}) - F^* & \leq (\prod_{l=0}^{j}(1-\frac{\mu\eta}{b_{k_0+l}}))(F(\vx_{k_0-1})-F^*)\\
        & \leq \exp(-\sum_{l=0}^{j}\frac{\mu\eta}{b_{k_0+l}})(F(\vx_{k_0-1}) - F^*)\\
        & \leq \exp(-\sum_{l=0}^{j}\frac{\mu\eta}{b_{k_0+l}})(F(\vx_0) - F^* + \frac{\eta^2L}{2}(1+\log(\frac{b_{k_0-1}^2}{b_0^2}))\\
    \end{split}
\end{equation}
where we use the fact that $1-x\leq e^{-x}, \forall x \in (0,1)$ and the lemma in \cite{ward2018adagrad}: $F(\vx_{k_0-1}) \leq F(\vx_0) + \frac{\eta^2L}{2}(1+\log(\frac{b_{k_0-1}^2}{b_0^2}))$.\\
The upper bound of $b_j$ is also from \cite{ward2018adagrad}:
\begin{equation}
    b_{\max} = b_{k_0-1} + \frac{2}{\eta}(F_{k_0-1} - F^*) \leq \eta L + \frac{2}{\eta}( F(\vx_0) -F^* + \frac{\eta^2L}{2}(1+\log(\frac{\eta^2L^2}{b_0^2})))
\end{equation}
Then,  $$F(\vx_{k_0+M-1}) - F^* \leq \exp(-\frac{\mu\eta M}{b_{\max}})(F(\vx_0) - F^* + \frac{\eta^2L}{2}(1+2\log\frac{\eta L}{b_0})) $$
Hence, we need $$M \geq \frac{b_{\max}}{\mu\eta }\log\frac{F(\vx_0) - F^* + \frac{\eta^2L}{2}(1+2\log\frac{\eta L}{b_0})}{\epsilon}$$
It is sufficient that
$$M \geq \frac{\eta L + \frac{2}{\eta}( F(\vx_0) -F^* + \frac{\eta^2L}{2}(1+\log \frac{\eta^2L^2}{b_0^2}))}{\mu\eta }\log\frac{F(\vx_0) - F^* + \frac{\eta^2L}{2}(1+2\log\frac{\eta L}{b_0})}{\epsilon}$$
Then, $$\min_{0\leq i \leq N+M-1}F(\vx_i)-F^* \leq \epsilon$$
where $N = \left \lceil \frac{\log(\eta^2L^2/b_0^2)}{\log(1+2\mu\epsilon/ (\eta L)^2)} \right \rceil +1 $.\\\\
Otherwise, if $k_0 = 1$, the upper bound of $b_j$ degenerates to
$$b'_{\max} = b_0 + \frac{2}{\eta}(F(\vx_0)-F^*) $$
Then, using the same procedure, we have 
\begin{equation}
\begin{split}
     F(\vx_M) - F^* & \leq \exp(-\sum_{k=0}^{M-1}\frac{\mu\eta}{b_{k+1}})(F(\vx_0)-F^*) \\
     & \leq \exp(\frac{-\mu\eta M }{b_{\max}}) (F(\vx_0)-F^*)\\
\end{split}
\end{equation}
Once the number of iterations satisfies $$M \geq  \frac{b'_{\max}}{\mu\eta} \log\frac{F(\vx_0)-F^*}{\epsilon}
= \frac{b_0+\frac{2}{\eta}(F(\vx_0)-F^*)}{\mu\eta}\log\frac{F(\vx_0)-F^*}{\epsilon}$$
we can get the expected result: $F(\vx_M) - F^* \leq \epsilon $.

\section{Proof of Lemmas in Stage I}\label{C}

\subsection{Proof of Lemma \ref{highp}}
\begin{lemma}\label{bernstein}
\textbf{(Bernstein's Inequality)} \citep{wainwright2019high} Let $X$ be a random variable, $\mathds{E}[X] = \mu$, $Var(X)=\sigma^2$, if $X$ satisfies Bernstein condition with parameter $b>0$, i.e. if $|\mathds{E}(X-\mu)^k| \leq \frac{1}{2}k!\sigma^2b^{k-2}, \forall k\geq 2$, then 
$$\mathds{P}(|X-\mu|\geq t) \leq 2 \exp(-\frac{t^2}{2(\sigma^2+bt)}) $$
\end{lemma}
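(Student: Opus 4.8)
The plan is to prove this via the classical Chernoff-bound / moment-generating-function (MGF) method. First I would center the variable by setting $Y \triangleq X - \mu$, so that $\mathds{E}[Y]=0$, $\mathrm{Var}(Y)=\sigma^2$, and the Bernstein condition reads $|\mathds{E}[Y^k]| \leq \tfrac{1}{2} k!\, \sigma^2 b^{k-2}$ for all $k\geq 2$. The two-sided tail bound will follow from a one-sided bound applied to both $Y$ and $-Y$, combined by a union bound, which is exactly what produces the factor of $2$; note $-Y$ has the same mean, variance, and moment bounds since $|\mathds{E}[(-Y)^k]| = |\mathds{E}[Y^k]|$. Hence it suffices to control $\mathds{P}(Y \geq t)$.

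The heart of the argument is an MGF estimate valid for $\lambda \in (0, 1/b)$. I would Taylor-expand and use $\mathds{E}[Y]=0$ to write $\mathds{E}[e^{\lambda Y}] = 1 + \sum_{k\geq 2} \frac{\lambda^k}{k!} \mathds{E}[Y^k]$, then bound the series term by term with the Bernstein condition. The factorials cancel, leaving the geometric series $\frac{\sigma^2\lambda^2}{2}\sum_{k\geq 2}(\lambda b)^{k-2} = \frac{\sigma^2\lambda^2}{2(1-\lambda b)}$, which converges precisely because $\lambda b < 1$. Applying $1+u \leq e^u$ then gives $\mathds{E}[e^{\lambda Y}] \leq \exp\!\big(\frac{\sigma^2\lambda^2}{2(1-\lambda b)}\big)$.

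Next I would apply Markov's inequality to $e^{\lambda Y}$ to get, for every admissible $\lambda$, the bound $\mathds{P}(Y\geq t) \leq \exp\!\big(-\lambda t + \frac{\sigma^2\lambda^2}{2(1-\lambda b)}\big)$, and then optimize the exponent over $\lambda$. The clean choice $\lambda = \frac{t}{\sigma^2 + bt}$, for which $1-\lambda b = \frac{\sigma^2}{\sigma^2+bt}$, substitutes in and collapses the exponent exactly to $-\frac{t^2}{2(\sigma^2+bt)}$. Applying the identical estimate to $-Y$ and combining the two tails by the union bound yields the stated two-sided bound.

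The main obstacle—though a mild one—is the convergence window of the MGF series: the Bernstein moment condition only controls $\mathds{E}[e^{\lambda Y}]$ for $|\lambda| b < 1$, so rather than optimizing freely over all $\lambda > 0$ I must confirm the optimizing value lies strictly inside the admissible range. This holds, since $\lambda b = \frac{bt}{\sigma^2+bt} < 1$, so the chosen $\lambda$ is legitimate and the geometric-series step is justified. The remaining algebra reducing the exponent to $-t^2/(2(\sigma^2+bt))$ is routine.
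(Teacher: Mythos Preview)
Your proof is correct and is exactly the standard Chernoff/MGF argument for Bernstein's inequality. Note, however, that the paper does not actually supply a proof of this lemma at all: it is stated as a cited result from \citet{wainwright2019high} and used as a black box in the proof of Lemma~\ref{highp}. So there is no ``paper's own proof'' to compare against; your write-up simply fills in the textbook derivation that the authors delegated to the reference.

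One minor technical remark: the interchange $\mathds{E}[e^{\lambda Y}] = 1 + \sum_{k\geq 2}\tfrac{\lambda^k}{k!}\mathds{E}[Y^k]$ is cleanest to justify when the Bernstein condition controls the absolute moments $\mathds{E}|Y|^k$ rather than $|\mathds{E}[Y^k]|$ as literally written in the lemma statement; the stronger form is what appears in \citet{wainwright2019high} and is what makes the Fubini step immediate. This is a wrinkle in how the lemma is transcribed in the paper, not a flaw in your argument.
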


\begin{lemma}\label{bernoulli} \citep{wainwright2019high}
Let $X_i \sim  Bernoulli(p), i.i.d. \forall i=1,2,\dots n$, and $X = \sum_{i=1}^n X_i$. Since $X_i \in [0,1]$, $\{X_i\}$ satisfy Bernstein condition, then
$$P(|X-np|>t) \leq 2\exp(-\frac{t^2}{2(np(1-p)+t)})$$
\end{lemma}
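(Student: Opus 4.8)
The plan is to verify the hypotheses of the single-variable Bernstein inequality (Lemma \ref{bernstein}) for the centered sum $X - np = \sum_{i=1}^n (X_i - p)$ and then invoke that tail bound directly. First I would center each summand: writing $Y_i := X_i - p$, each $Y_i$ has mean zero, variance $\sigma_i^2 = p(1-p)$, and satisfies $|Y_i| \le 1$ since $X_i \in [0,1]$. By independence, the centered sum $S := \sum_{i=1}^n Y_i = X - np$ then has mean zero and variance $\mathrm{Var}(S) = \sum_{i=1}^n \sigma_i^2 = np(1-p)$.

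Next I would check the Bernstein moment condition for each $Y_i$ with parameter $b = 1$. Because $|Y_i| \le 1$, for every $k \ge 2$ one has $|\mathds{E}[Y_i^k]| \le \mathds{E}[|Y_i|^k] = \mathds{E}[Y_i^2\,|Y_i|^{k-2}] \le \mathds{E}[Y_i^2] = \sigma_i^2$, using $|Y_i|^{k-2} \le 1$. Since $\tfrac{1}{2} k! \ge 1$ for all $k \ge 2$, this yields $|\mathds{E}[Y_i^k]| \le \tfrac{1}{2} k!\,\sigma_i^2\,1^{\,k-2}$, which is exactly the Bernstein condition with $(\sigma_i^2, b) = (p(1-p), 1)$.

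The crux is to transfer this per-summand condition to $S$, which is what lets me apply Lemma \ref{bernstein} with $\sigma^2 = np(1-p)$ and $b = 1$. I would do this through the moment generating function: the Bernstein condition for each $Y_i$ implies $\mathds{E}[e^{\lambda Y_i}] \le \exp\!\big(\tfrac{\lambda^2 \sigma_i^2/2}{1 - |\lambda|}\big)$ for $|\lambda| < 1$, and multiplying over the independent $Y_i$ gives $\mathds{E}[e^{\lambda S}] \le \exp\!\big(\tfrac{\lambda^2 np(1-p)/2}{1-|\lambda|}\big)$. This is precisely the MGF bound underlying Bernstein's inequality for a single random variable with variance $np(1-p)$ and parameter $b = 1$, so a Chernoff optimization over $\lambda$ produces the two-sided tail $\mathds{P}(|S| > t) \le 2\exp\!\big(-\tfrac{t^2}{2(np(1-p)+t)}\big)$. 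Since $S = X - np$, this is the claimed inequality.

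The main obstacle is exactly this additivity step: Lemma \ref{bernstein} as stated is a one-variable result, so rather than treating $X$ as a single opaque variable I must justify that the variances add and the Bernstein parameter $b$ is preserved under independent sums. Passing through the MGF characterization of the Bernstein condition is the one nonroutine ingredient; the remaining pieces—bounding the centered Bernoulli moments and carrying out the Chernoff optimization—are standard computations.
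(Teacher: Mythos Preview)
Your proposal is correct. The paper does not actually supply a proof of this lemma beyond the one-line justification embedded in the statement itself (``Since $X_i \in [0,1]$, $\{X_i\}$ satisfy Bernstein condition'') together with the citation to \cite{wainwright2019high}; your argument fills in precisely the details the paper leaves implicit, in particular the MGF-based additivity step needed to pass from the per-summand Bernstein condition to the sum, which is the only nontrivial point given that Lemma~\ref{bernstein} is phrased for a single variable.
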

\paragraph{Proof of Lemma \ref{highp}}
If $\min_j \|\vx_j - \vx^*\|^2 \leq \epsilon$, we are done.\\
Otherwise, we have $\|\vx_j-\vx^*\|^2 > \epsilon, \forall j = 0,1,2,\dots,N$. Assume that $F(\vx)$ satisfies $(\epsilon, \alpha, \gamma)-$ RUIG (Assumption \ref{A3}), we can use independent identical Bernoulli random variables $\{Z_j\}$ to represent them with the following distribution:
\begin{equation}
    Z_j = \left\{
    \begin{array}{cc}
    1 & if ~ \|\nabla f_{\xi_j}(\vx_j)\|^2 \geq \alpha \|\vx_j - \vx^*\|^2 \\
    0 & else 
    \end{array}
    \right.
\end{equation}
where $\mathds{P}(Z_j = 1) = \gamma, \forall j$. Note that the RUIG assumption is for any fixed $x$ (conditional on $\vx$), the probability distribution is over the random variable $i$ (or $\xi_i$) (but not over $\vx$). Every index $\xi_i$ is sampled independently and uniformly at each iteration, so random variables $\{Z_i\}$ are independent. Then, from Lemma \ref{bernoulli} and let $Z=\sum_j Z_j$, with high probability bigger than $1- \exp(-\frac{\delta^2}{2(N\gamma(1-\gamma)+\delta)})$, $Z \geq \gamma N -\delta, \forall N$. Thus, after $N \geq \frac{C^2-b_0^2}{\alpha\gamma\epsilon}+\frac{\delta}{\gamma}$ iterations, with $1- \exp(-\frac{\delta^2}{2(N\gamma(1-\gamma)+\delta)})$, we have
$$b_N^2 = b_0^2 + \sum_{i=0}^{N-1} \| \nabla f_{\xi_i}(\vx_i)\|^2 > b_0^2 + (\gamma N -\delta)\alpha \epsilon \geq C^2$$

Note that even if in the case that there is some correlation between Bernoulli random variables, since each of them is sub-Gaussian with $\sigma = 0.5$, then the upper bound of the sub-Gaussian parameter of the sum of them is $0.5N$, so the worst-case variance is $0.25N^2$. Hence, the result still holds under this setting.


\subsection{Proof of Lemma \ref{twocases}}
\begin{enumerate}[label=(\alph*)]
    \item If $b_0 > C$, we are done.\\
Otherwise if $b_0 < C$, and after $N \geq \frac{\log(C^2/b_0^2)}{\log(1+\mu^2\epsilon/ C^2)}$ iterations, $b_N < C$ and $\min_{0\leq i \leq N-1} \|\vx_i - \vx^*\|^2 > \epsilon$. Since $F(\vx)$ is $\mu-$ strongly convex, $\epsilon < \|\vx_i - \vx^*\|^2 \leq \frac{1}{\mu^2} \|\nabla F(\vx_i) - \nabla F(\vx^*)\|^2, \forall \vx_i$ and $\nabla F(\vx^*) =\mathbf{0}$. Then,
\begin{equation}
\begin{split}
       b_{N}^2 & = b_{N-1}^2 +\|\nabla F(\vx_{N-1})\|^2 \\
        & = b_{N-1}^2(1 + \frac{\|\nabla F(\vx_{N-1})\|^2}{b_{N-1}^2})\\
        & \geq b_0^2 \prod_{j=0}^{N-1} (1+ \frac{\|\nabla F(\vx_j)\|^2}{b_j^2})\\
        & \geq b_0^2(1+\frac{\mu^2\epsilon}{C^2})^N \geq C^2
\end{split}
\end{equation}
Contradiction! Hence, at least one of  $\min_{0\leq i \leq N-1} \|\vx_i - \vx^*\|^2 \leq \epsilon$ or $b_N > C$ holds. When $\mu$ is small and $C$ is big, we have $\log(1+\frac{\mu^2\epsilon}{C^2}) \approx \frac{\mu^2\epsilon}{C^2}$.

\item With PL inequality $\frac{1}{2\mu}\|\nabla F(\vx)\|^2 \geq F(\vx) - F(\vx^*)$ instead of $\mu$-strongly convex assumption, if $\min_{0\leq i \leq N-1} F(\vx_i) - F^* > \epsilon$ and $b_N<C$, then after $N \geq \frac{\log(C^2/b_0^2)}{\log(1+2\mu\epsilon/ C^2)}$ iterations,
$b_N^2 \geq b_0^2 (1+\frac{2\mu\epsilon}{C^2})^N \geq C^2$, contradiction! Hence, either $\min_{0\leq i \leq N-1} F(\vx_i) - F^* \leq \epsilon$ or $b_N>C$.
\end{enumerate}

\subsection{Proof of Lemma \ref{boundset}}

\begin{lemma}\label{co-coercivity}
\textbf{(Co-coercivity)} \citep{Needell2016}
For a $L-$smooth  convex function $F(\vx)$, $\forall \vx,\vy$ 
$$\|\nabla F(\vx) - \nabla F(\vy) \|^2 \leq L \langle \vx-\vy, \nabla F(\vx) - \nabla F(\vy) \rangle$$
\end{lemma}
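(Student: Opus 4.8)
The plan is to derive co-coercivity from the standard quadratic upper bound implied by $L$-smoothness, applied to a cleverly shifted version of $F$. First I would record the descent-type consequence of smoothness: for any $L$-smooth function $h$ and any points $\vu,\vz$, integrating $\nabla h$ along the segment joining them gives the quadratic upper bound $h(\vz)\le h(\vu)+\langle\nabla h(\vu),\vz-\vu\rangle+\frac{L}{2}\|\vz-\vu\|^2$. This is where the Lipschitz-gradient hypothesis enters, and it is the only genuine analytic ingredient of the argument.

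Next, for a fixed $\vy$, introduce the auxiliary function $g(\vx)\triangleq F(\vx)-\langle\nabla F(\vy),\vx\rangle$. Since $F$ is convex and the subtracted term is linear, $g$ is convex; moreover subtracting a linear term leaves the gradient's Lipschitz constant unchanged, so $g$ is still $L$-smooth. Its gradient $\nabla g(\vx)=\nabla F(\vx)-\nabla F(\vy)$ vanishes at $\vx=\vy$, and for a convex differentiable function a critical point is a global minimizer, so $\vy$ minimizes $g$. Applying the quadratic upper bound to $h=g$ at $\vu=\vx$ and minimizing the right-hand side over $\vz$ (equivalently, evaluating at $\vz=\vx-\frac{1}{L}\nabla g(\vx)$) yields $g(\vy)\le g(\vx)-\frac{1}{2L}\|\nabla g(\vx)\|^2$, which after substituting the definition of $g$ becomes
$$F(\vx)-F(\vy)-\langle\nabla F(\vy),\vx-\vy\rangle \ge \frac{1}{2L}\|\nabla F(\vx)-\nabla F(\vy)\|^2.$$

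The final step is symmetrization. Writing the same inequality with the roles of $\vx$ and $\vy$ exchanged and adding the two, the function-value terms $F(\vx)-F(\vy)$ cancel, the two inner-product terms combine into $\langle\nabla F(\vx)-\nabla F(\vy),\vx-\vy\rangle$, and the two right-hand sides sum to $\frac{1}{L}\|\nabla F(\vx)-\nabla F(\vy)\|^2$; rearranging gives exactly $\|\nabla F(\vx)-\nabla F(\vy)\|^2\le L\langle\vx-\vy,\nabla F(\vx)-\nabla F(\vy)\rangle$. The only place requiring care is the minimization-over-$\vz$ step: one must justify that the global minimum of the quadratic upper bound is attained and equals $g(\vx)-\frac{1}{2L}\|\nabla g(\vx)\|^2$. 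This is immediate because that upper bound is a strictly convex quadratic in $\vz$ whose minimizer is available in closed form, so the bound is tight; everything else is routine algebra.
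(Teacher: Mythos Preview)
Your argument is correct and is the standard textbook derivation of co-coercivity (sometimes called the Baillon--Haddad inequality): use the descent lemma on the shifted function $g(\vx)=F(\vx)-\langle\nabla F(\vy),\vx\rangle$, exploit that $\vy$ minimizes $g$ by convexity, and then symmetrize. Every step is justified, including the minimization over $\vz$, which you handled cleanly.

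As for comparison with the paper: the paper does not actually prove this lemma. It states it with a citation to \citet{Needell2016} and then uses it as a black box in the proofs of Lemma~\ref{boundset}, Lemma~\ref{bmax}, Lemma~\ref{descent}, and Lemma~\ref{lem: eco}. So your proposal supplies a self-contained proof where the paper simply imports the result; there is no alternative argument in the paper to contrast with yours.
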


\begin{lemma}\label{log}
\textbf{(Integral lemma)} \citep{ward2018adagrad}
For any non-negative sequence $a_1,\dots,a_T$, such that $a_1 \geq 1$, 
\begin{align}
      \sum_{l=1}^T \frac{a_l}{\sum_{i=1}^{l}a_i} &\leq \log(\sum_{i=1}^T a_i) + 1 \\
    \sum_{l=1}^T \frac{a_l}{\sqrt{\sum_{i=1}^{l}a_i}} &\leq 2\sqrt{\sum_{i=1}^Ta_i}  
\end{align}
\end{lemma}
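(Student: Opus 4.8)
The plan is to reduce both inequalities to telescoping sums by comparing each summand to an integral of a decreasing function. Introduce the partial sums $S_l := \sum_{i=1}^{l} a_i$ with the convention $S_0 := 0$, so that $a_l = S_l - S_{l-1}$. Because $a_1 \ge 1$, we have $S_l \ge S_1 = a_1 \ge 1 > 0$ for every $l \ge 1$, so none of the ratios appearing are ill-defined, and any index with $a_l = 0$ simply contributes $0$ to both sides of whatever per-term estimate we use. With this notation the left-hand sides become $\sum_{l=1}^T (S_l - S_{l-1})/S_l$ and $\sum_{l=1}^T (S_l - S_{l-1})/\sqrt{S_l}$.

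For the first inequality, I would note that for $l \ge 2$ the function $x \mapsto 1/x$ is decreasing on $[S_{l-1},S_l]$, whence
\[
  \frac{a_l}{S_l} \;=\; \frac{S_l - S_{l-1}}{S_l} \;\le\; \int_{S_{l-1}}^{S_l} \frac{dx}{x} \;=\; \log S_l - \log S_{l-1},
\]
while the $l = 1$ term is handled directly: $a_1/S_1 = a_1/a_1 = 1$. Summing over $l$ and telescoping gives $\sum_{l=1}^{T} a_l/S_l \le 1 + \log S_T - \log S_1 = 1 + \log\bigl(\sum_{i=1}^T a_i\bigr) - \log a_1 \le 1 + \log\bigl(\sum_{i=1}^T a_i\bigr)$, where the final step uses $\log a_1 \ge 0$ — this is the one place the hypothesis $a_1 \ge 1$ is needed.

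For the second inequality, the same argument applies to $x \mapsto x^{-1/2}$: for every $l \ge 1$,
\[
  \frac{a_l}{\sqrt{S_l}} \;=\; \frac{S_l - S_{l-1}}{\sqrt{S_l}} \;\le\; \int_{S_{l-1}}^{S_l} \frac{dx}{\sqrt{x}} \;=\; 2\sqrt{S_l} - 2\sqrt{S_{l-1}},
\]
and here no special treatment of $l=1$ is required since $\sqrt{S_0} = 0$ and the integral $\int_0^{S_1} x^{-1/2}\,dx$ converges. Telescoping yields $\sum_{l=1}^T a_l/\sqrt{S_l} \le 2\sqrt{S_T} - 2\sqrt{S_0} = 2\sqrt{\sum_{i=1}^T a_i}$. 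One can equally avoid the integral and invoke the elementary identity $2(\sqrt{S_l} - \sqrt{S_{l-1}}) = 2(S_l - S_{l-1})/(\sqrt{S_l} + \sqrt{S_{l-1}}) \ge (S_l - S_{l-1})/\sqrt{S_l}$.

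There is no genuinely hard step here; the proof is essentially an integral-comparison bookkeeping exercise. The only points requiring care are: keeping the $l=1$ term outside the logarithmic integral (since $\int_0^{a_1} x^{-1}\,dx$ diverges), and recording exactly where $a_1 \ge 1$ is used, namely to discard the leftover $-\log a_1 \le 0$ so the bound reads $\log(\sum_i a_i) + 1$ rather than $\log(\sum_i a_i / a_1) + 1$. Degenerate sequences with $a_l = 0$ for some $l$ pose no problem, as then $S_l = S_{l-1}$ and both sides of the per-term estimate vanish.
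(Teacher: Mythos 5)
Your proof is correct and follows essentially the same route the paper indicates: the paper's own proof is only a two-line sketch invoking induction or a Riemann-sum comparison with $\log(x)$ and $2\sqrt{x}$, and your telescoping integral-comparison argument is exactly that idea carried out in full. Your extra care with the $l=1$ term and the explicit use of $a_1 \ge 1$ to discard $-\log a_1 \le 0$ fills in precisely the details the paper leaves implicit.
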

\begin{proof}
The lemma can be proved by induction. Besides, we can take above sums as Riemman sums, then the sums should be proportional to integrals, $\log(x)$ and 2$\sqrt{x}$, respectively.
\end{proof}

\paragraph{Proof of Lemma \ref{boundset}} With above two lemmas, we can bound $\|\vx_{J-1} - \vx^*\|^2$ as follows:
\begin{equation*}
    \begin{split}
        \|\vx_{J-1} - \vx^*\|^2 & = \|\vx_{J-2} - \frac{\eta G_{J-2}}{b_{J-1}} - \vx^*\|^2\\
        & =  \|\vx_{J-2}-\vx^*\|^2 + \|\frac{\eta G_{J-2}}{b_{J-1}} \|^2 - 2 \eta \langle \frac{ G_{J-2}}{b_{J-1}} ,\vx_{J-2}-\vx^*\rangle  \\
        & \leq  \|\vx_{J-2}-\vx^*\|^2 + \|\frac{\eta G_{J-2}}{b_{J-1}} \|^2 - \frac{2\eta}{b_{J-1}L}\|G_{J-2} - \nabla f_{\xi_{J-2}}(\vx^*) \|^2 \\
        & \leq  \|\vx_{J-2}-\vx^*\|^2 + \frac{\eta^2\|G_{J-2}\|^2}{b_{J-1}^2} \\
        & \leq  \|\vx_0-\vx^*\|^2 + \eta^2 \sum_{j=0}^{J-2} \frac{\|G_j\|^2}{b_{j+1}^2} \\
        & \leq  \|\vx_0-\vx^*\|^2 + \eta^2 \sum_{j=0}^{J-2} \frac{\|G_j\|^2 / b_0^2}{\sum_{l=0}^{j}\|G_l\|^2 / b_0^2}\\
    \end{split}
\end{equation*}
\begin{equation*}
    \begin{split}
        & \leq  \|\vx_0-\vx^*\|^2 + \eta^2 (\log(\sum_{j=0}^{J-2}  \|G_j\|^2/b_0^2) + 1)\\
        & \leq  \|\vx_0-\vx^*\|^2 + \eta^2 (\log \frac{C^2}{b_0^2} + 1)
    \end{split}
\end{equation*}
where the first inequality is from the co-coercivity (Lemma \ref{co-coercivity}) and Assumption \ref{A4} $\mathds{P}(\nabla f_{\xi_{J-2}}(\vx^*)=\mathbf{0})=1$; last second inequality is from lemma \ref{log} and the last inequality is from the assumption that $J$ is the first index s.t. $ b_J >  C$.

\section{Proof of Lemmas in Stage II}\label{D}
\subsection{Proof of Lemma \ref{bmax}}
Since $b_J>\eta L$, we have the following bound for $\|\vx_{J+l} - \vx^*\|^2$:
\begin{equation}
\begin{split}
    \|\vx_{J+l} -\vx^*\|^2 &  = \|\vx_{J+l-1}-\vx^*\|^2 + \frac{\eta^2 \|G_{J+l-1}\|^2}{b_{J+l}^2} \\ 
    & - \frac{2\eta}{b_{J+l}}\langle G_{J+l-1}-\nabla f_{\xi_{J+l-1}}(\vx^*), \vx_{J+l-1} - \vx^*\rangle   \\
    & \leq  \|\vx_{J+l-1}-\vx^*\|^2 + \|\frac{\eta G_{J+l-1}}{b_{J+l}} \|^2 - \frac{2\eta}{b_{J+l}L}\|G_{J+l-1}-\nabla f_{J+l-1}(\vx^*)\|^2\\
    & \leq \|\vx_{J+l-1}-\vx^*\|^2 + \frac{\eta}{b_{J+l}}(\frac{\eta}{b_{J+l}} -\frac{2}{L}) \|G_{J+l-1}\|^2\\
    & \leq \|\vx_{J+l-1}-\vx^*\|^2 - \frac{\eta}{L}\frac{\|G_{J+l-1}\|^2}{b_{J+l}}\\
    & \leq \|\vx_{J-1}-\vx^*\|^2 - \frac{\eta}{L} \sum_{j=0}^{l}\frac{\|G_{J+j-1}\|^2}{b_{J+j}}
\end{split}
\end{equation}inequalities are from $f_{\xi_{J+l-1}}(\vx)$ is $L$-smooth (Assumption \ref{A2}) and co-coercivity (Lemma \ref{co-coercivity}). Then, we have the bound of the sum:
\begin{equation}
    \sum_{j=0}^{l}\frac{\|G_{J+j-1}\|^2}{b_{J+j}} \leq  \frac{L}{\eta}(\|\vx_{J-1} -\vx^*\|^2 -  \|\vx_{J+l}-\vx^*\|^2)
\end{equation}
Therefore, $b_{\max}$ is bounded as follows:
\begin{equation}
    \begin{split}
        b_{J+l} & = b_{J+l-1} + \frac{\|G_{J+l-1}\|^2}{b_{J+l}+b_{J+l-1}}\\
        & \leq b_{J-1} +\sum_{j=1}^{l} \frac{\|G_{J+j-1}\|^2}{b_{J+j}} \\
        & \leq C + \frac{L}{\eta}\|\vx_{J-1} -\vx^*\|^2 \\
        & = C + \frac{L}{\eta} (\|\vx_0-\vx^*\|^2 + \eta^2 (\log\frac{C^2}{b_0^2} + 1))
    \end{split}
\end{equation}
where $\|\vx_{J-1} -\vx^*\|^2 \leq \|\vx_0-\vx^*\|^2 + \eta^2 (\log \frac{C^2}{b_0^2} + 1)$ is from Lemma \ref{boundset}.

\subsection{Proof of Lemma \ref{descent}}
Use similar technique as above,
\begin{align*}
        \|\vx_j - \vx^*\|^2 
        & \leq \|\vx_{j-1} - \vx^*\|^2 + (\frac{\eta^2}{b_{j}^2} - \frac{2\eta}{b_{j}L} ) \|G_{j-1}\|^2\\
        & = \|\vx_{j-1} - \vx^*\|^2 - \frac{\eta}{b_{j}}(\frac{2}{L} - \frac{\eta}{b_{j}})\|G_{j-1}\|^2 \leq  \|\vx_{j-1} - \vx^*\|^2
\end{align*}
where the first inequality if from $f_j(\vx)$ is $L$-smooth (Assumption \ref{A2}), $\nabla f_{j-1}(\vx^*) =\mathbf{0}$ (Assumption \ref{A4}) and  Lemma \ref{co-coercivity}. Therefore, once $b_j > \eta L/2$, AdaGrad-Norm is a descent algorithm.

\section{More Numerical Experiments}\label{E}
\subsection{Numerical Experiments of AdaGrad-Norm with Extreme Initialization} \label{E1}

In this section, we demonstrates the numerical experiments of AdaGrad-Norm with $\vx_0=\mathbf{0}$ (stochastic: Figure \ref{fig:in0}; batch: Figure \ref{fig:in0batch}) and the extreme case (stochastic: Figure \ref{fig:inbad}; batch: Figure \ref{fig:inbadbatch}): $\vx_0$ is far away from $\vx^*$ and $\|\vx_0\|$ is large. Then, we tune the hyperparameter $\eta$ in the extreme case with $\eta = \Theta(\|\vx_0-\vx^*\|^2)$ (stochastic: Figure \ref{fig:inbadtune}) and  $\eta = \Theta(\|\vx_0-\vx^*\|)$ (batch: Figure \ref{fig:inbadtunebatch}). In these figures, the x-axis represents iteration $t$ while y-axis is the approximation error $\|\vx_t-\vx^*\|^2$ in log scale for the first and third columns and it is $b_t$ for the second and fourth columns.

We show that when starting from $\vx_0 = \mathbf{0}$, the result is close to the experiment we show in Figure \ref{fig:ls}. When initialize $\vx_0$ with extremely bad one, $\vx_0 = 100 * \vw_0$, where $\vw_0$ is a randomly generated vector $\vw_0$ and $\vw_0 \sim \mathcal{N}(\mathbf{0},\mI)$, AdaGrad-Norm takes much more iterations than before. However, after tuning $\eta = 10000$ in stochastic setting and $\eta = 100$ in batch setting, the convergence rate of AdaGrad-Norm is better again. In this case, $b_0$ plays a small role.
\begin{figure}[H]
    \centering
    \includegraphics[width=\linewidth]{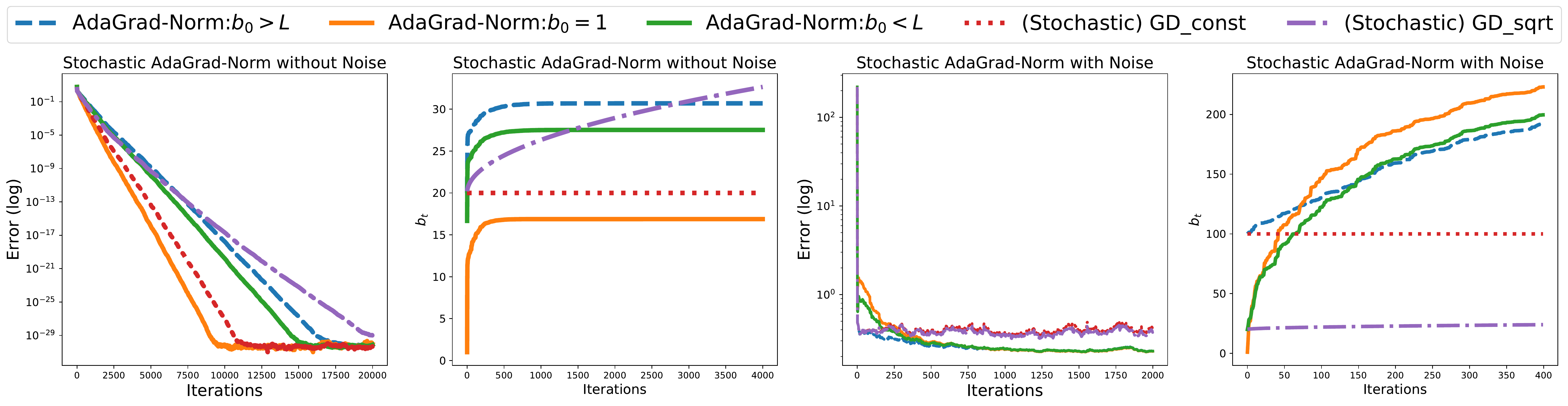}
    \caption{Error and growth of $b_t$ with $\vx_0 =\mathbf{0}$ and $\eta=1$ in stochastic setting.}
    \label{fig:in0}
\end{figure}
\begin{figure}[H]
    \centering
    \includegraphics[width=\linewidth]{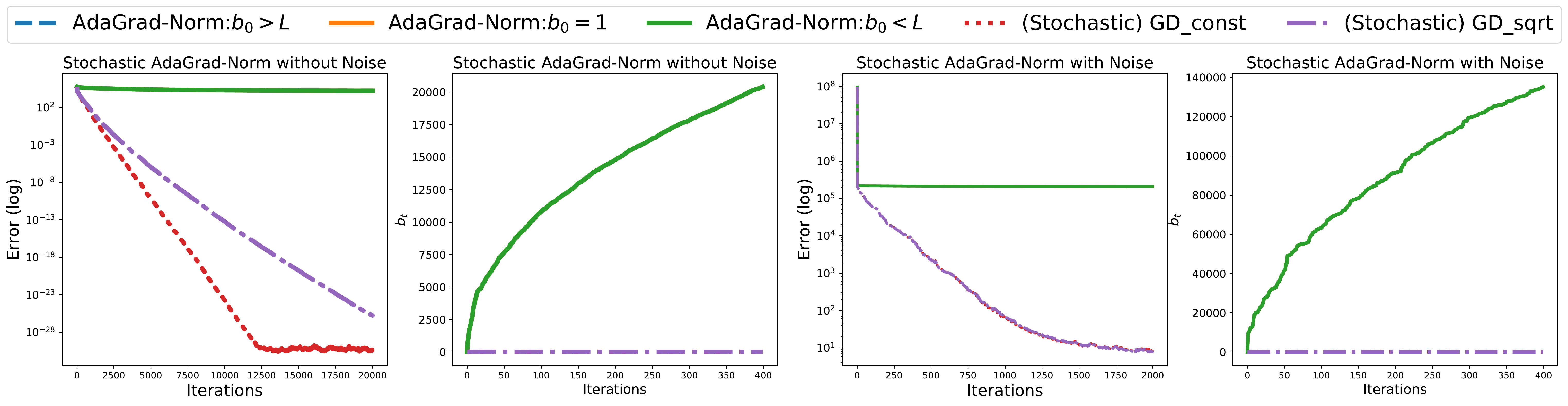}
    \caption{Error and growth of $b_t$ with extremely bad initialization and $\eta=1$ in stochastic setting.}
    \label{fig:inbad}
\end{figure}
\begin{figure}[H]
    \centering
    \includegraphics[width=\linewidth]{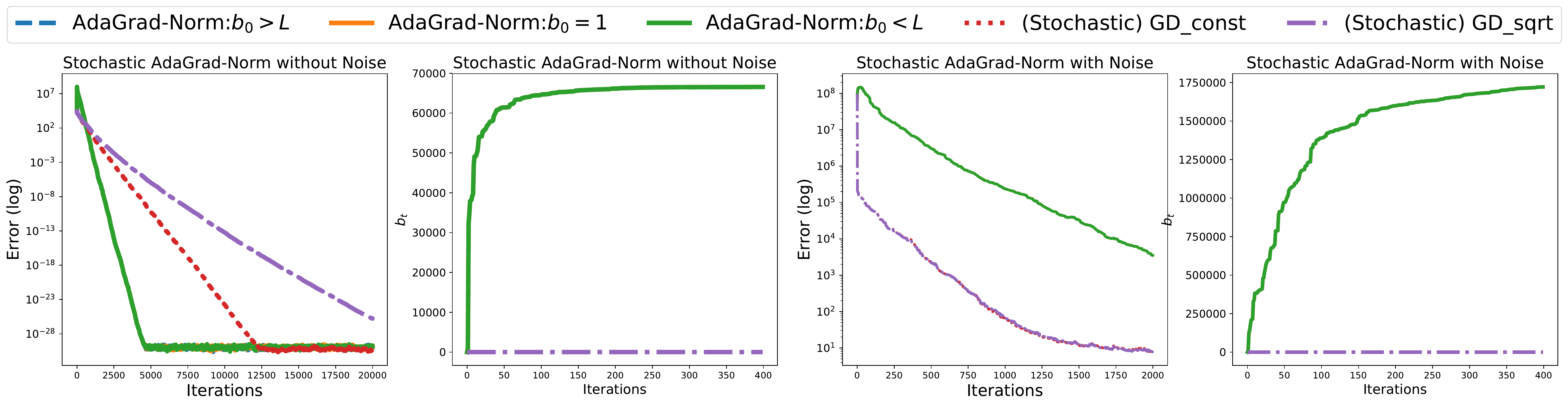}
    \caption{Error and growth of $b_t$ with extremely bad initialization and tuning $\eta = \Theta(\|\vx_0-\vx^*\|^2)$ in stochastic setting.}
    \label{fig:inbadtune}
\end{figure}

\begin{figure}[H]
    \centering
    \includegraphics[width=\linewidth]{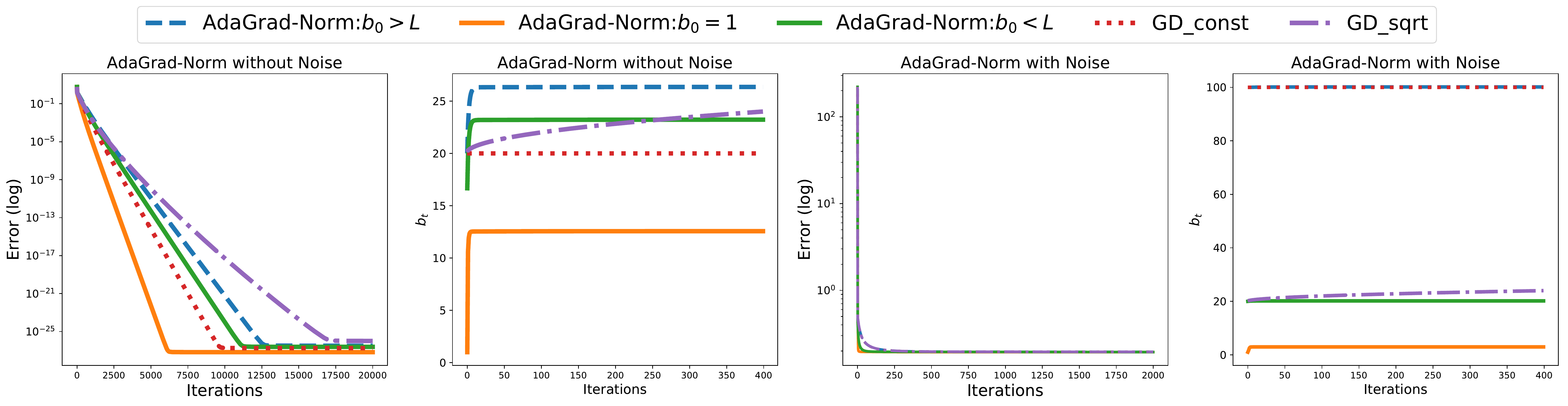}
    \caption{Error and growth of $b_t$ with $\vx_0 =\mathbf{0}$ and $\eta=1$ in batch setting.}
    \label{fig:in0batch}
\end{figure}
\begin{figure}[H]
    \centering
    \includegraphics[width=\linewidth]{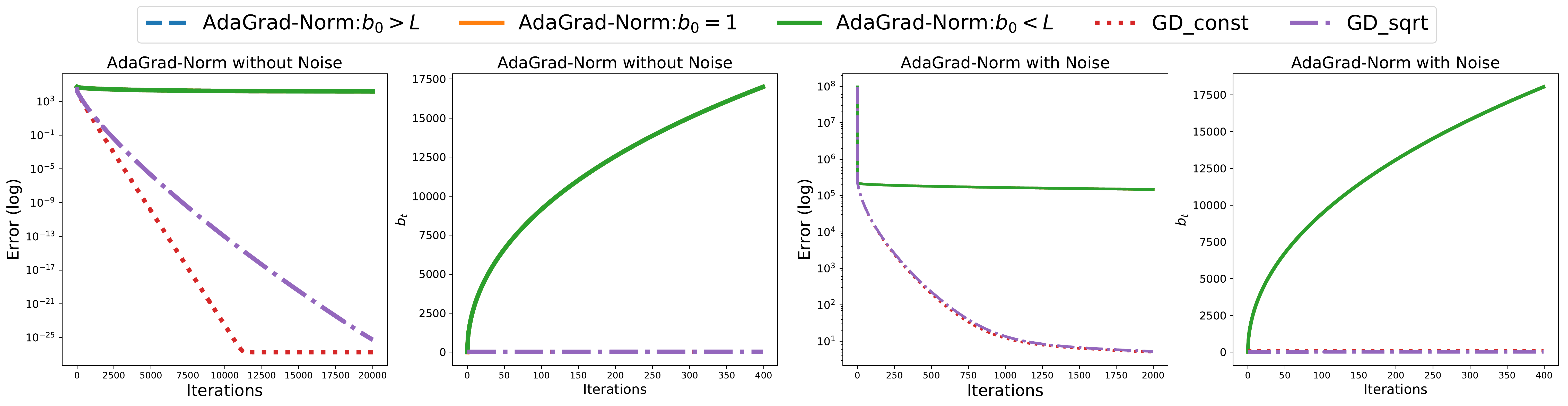}
    \caption{Error and growth of $b_t$ with extremely bad initialization and $\eta=1$ in batch setting.}
    \label{fig:inbadbatch}
\end{figure}
\begin{figure}[H]
    \centering
    \includegraphics[width=\linewidth]{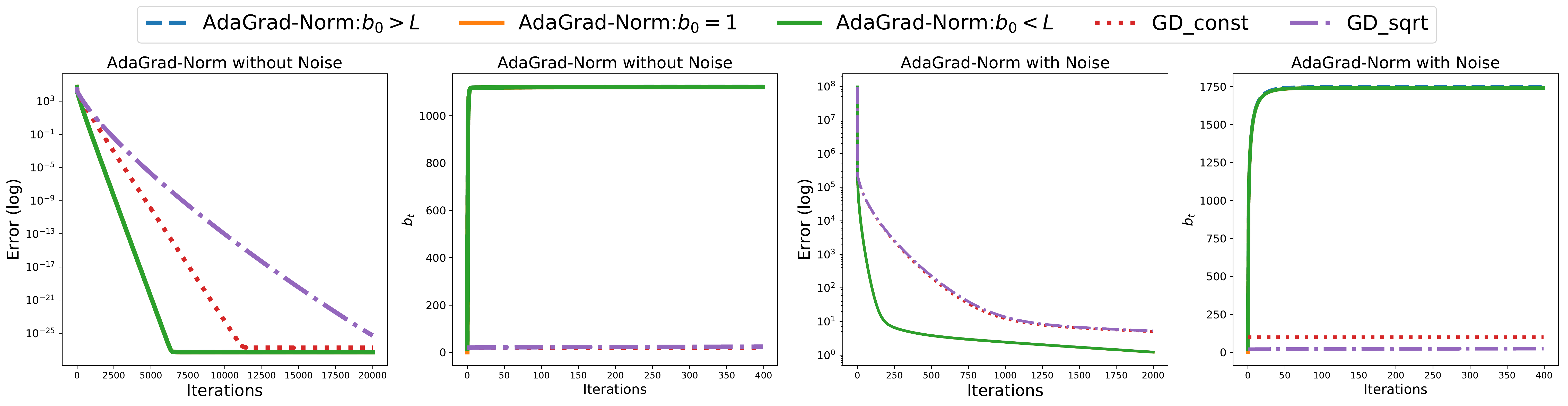}
    \caption{Error and growth of $b_t$ with extremely bad initialization and tuning $\eta = \Theta(\|\vx_0-\vx^*\|)$ in batch setting.}
    \label{fig:inbadtunebatch}
\end{figure}

\subsection{Numerical Experiment of Two Layer  Neural Networks} \label{E2}
We implement AdaGrad-Norm in a two-layer network. The experiment is mainly to show the stochastic AdaGrad-Norm (black curve) converges with a linear rate. We first define loss function as in \cite{du2018gradient}:
$$L(\mW) = \frac{1}{2n}\sum_{i=1}^n\left(\frac{1}{\sqrt{m}}\sum_{r=1}^{m}a_r
     \sigma(\langle{\vw_r(k), \vx_{i} \rangle})- \vy_{i} \right)^2  $$
     where $\sigma$ is a ReLU activation function; $n$ is size of data; $m$ is the width for the one-hidden layer.   For our implementation, we set $n=100$, $m=200$ and $d=10$.
Set mini-batch size $20$ for each iteration and the effective stepsize of AdaGrad-Norm with $100/b_t$ and $b_0=0.1$.  We also run vanilla SGD (blue curve)  with  $\eta_t = 100$. For details, see the code here \footnote{\url{https://colab.research.google.com/drive/1kv-XwUxvSogVfNyTO2w1aAoqlS2chlYH}}. Figure \ref{fig:over} (left) clearly illustrates that AdaGrad-Norm (black curve)  converges linearly.
 Figure \ref{fig:over} (right) shows the norm of the gradients at the first few iterations by AdaGrad-Norm are often big enough to accumulate to exceed $\eta L$, which empirically verifies Assumption \ref{A3}.

\begin{figure}[H]
    \centering
    \includegraphics[width=1.1\linewidth]{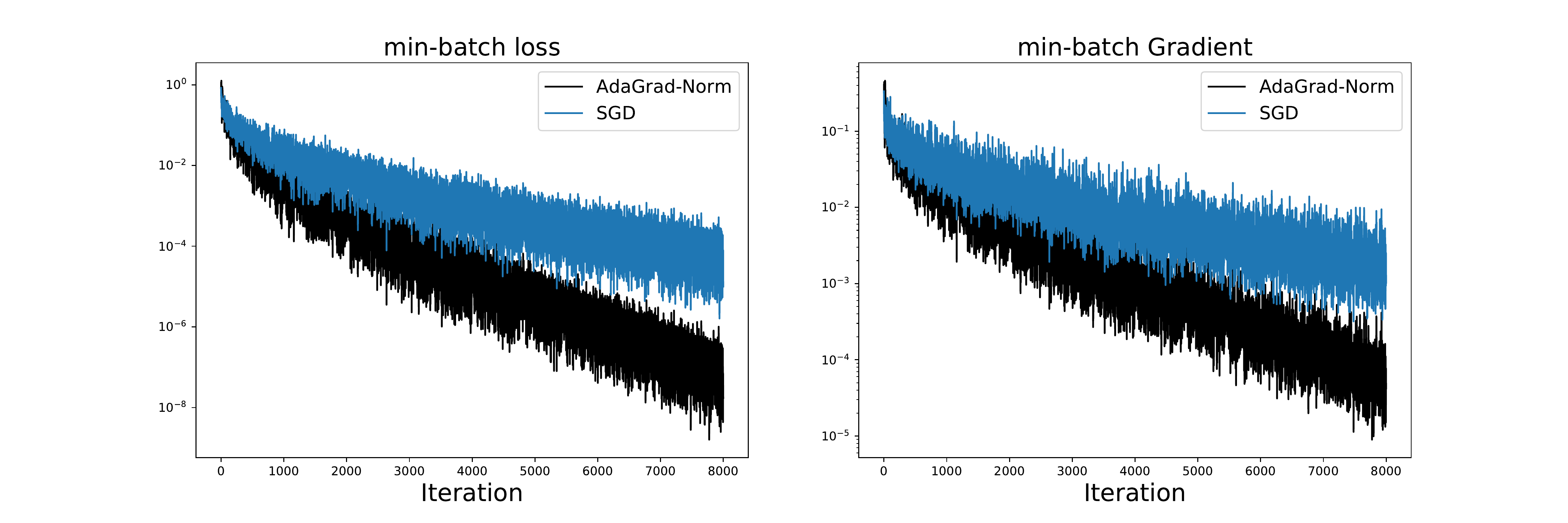}
    \caption{Error and the norm of gradient in a two-layer neural network}
    \label{fig:over}
\end{figure}

\end{document}